\def\eqref#1{equation~\ref{#1}}
\def\1{\bm{1}}
\DeclareMathAlphabet{\mathsfit}{\encodingdefault}{\sfdefault}{m}{sl}
\SetMathAlphabet{\mathsfit}{bold}{\encodingdefault}{\sfdefault}{bx}{n}
\newcommand\VRule[1][\arrayrulewidth]{\vrule width #1}
\newtheorem{thm}{Theorem}
\newtheorem{defi}{Definition}
\newtheorem{lemma}{Lemma}
\newtheorem{corollary}{Corollary}
\newcommand{\bw}{\mathbf{w}}
\theoremstyle{definition}
\newtheorem{remark}{Remark}
\newtheorem{prop}{Proposition}
\newcommand{\sgd}{SGD+Nesterov\xspace}
\title{Accelerating SGD with momentum for over-parameterized learning}
\author{~~~~~~~~~~~~Chaoyue Liu \\
  ~~~~~~~~~~~~Department of Computer Science\\
  ~~~~~~~~~~~~The Ohio State University\\
  ~~~~~~~~~~~~Columbus, OH 43210 \\
  ~~~~~~~~~~~~\texttt{liu.2656@osu.edu} \\
   \And
   ~~~~~~~~~Mikhail Belkin \\
   ~~~~~~~~~Department of Computer Science\\
   ~~~~~~~~~The Ohio State University\\
   ~~~~~~~~~Columbus, OH 43210 \\
   ~~~~~~~~~\texttt{mbelkin@cse.ohio-state.edu} \\
}
\begin{document}

\maketitle

\begin{abstract}


Nesterov SGD is widely used for training modern neural networks and other machine learning models. Yet, its advantages over SGD have not been theoretically clarified. 
Indeed, as we show  in our paper, both theoretically and empirically, Nesterov SGD with any parameter selection does {\it not} in general provide acceleration over ordinary SGD. Furthermore, Nesterov SGD may diverge for step sizes that ensure convergence of ordinary SGD. This is in contrast to the classical results in the deterministic scenario, where the same step size ensures accelerated convergence of the Nesterov's method over optimal gradient descent.

To address the non-acceleration issue, we  introduce a compensation term to Nesterov SGD. The resulting  algorithm, which we call {\it MaSS}, converges  for same step sizes as SGD. We prove that MaSS obtains an accelerated convergence rates over SGD for any mini-batch size in the linear setting.  
For full batch, the convergence rate of MaSS matches the well-known accelerated rate of the Nesterov's method. 

We also analyze the  practically important question of the dependence of the convergence rate and  optimal hyper-parameters on the mini-batch size, demonstrating three distinct regimes: linear scaling, diminishing returns and saturation.

Experimental evaluation of MaSS for several standard  architectures of deep networks, including ResNet and convolutional networks, shows improved performance over SGD, \sgd  and Adam. 
\end{abstract}

\section{Introduction}

Many  modern  neural networks and other machine learning models are {\it over-parametrized}~\cite{canziani2016analysis}. These models are typically
trained to have near zero training loss, known as~\emph{interpolation}  and often have strong generalization performance, as indicated by a range of empirical evidence including~\cite{zhang2016understanding,belkin2018understand}.
Due to a key property of interpolation -- \emph{automatic variance reduction} (discussed in Section~\ref{sec:avr}),  stochastic gradient descent (SGD) with constant step size is shown to converge to the optimum of a convex loss function for a wide range of step sizes~\cite{ma2017power}. Moreover, the optimal choice of step size $\eta^*$ in that setting is can be derived analytically.

The goal of this paper is to take a step toward understanding {\it momentum-based}  SGD in the interpolating setting. 
Among them, stochastic version of Nesterov's acceleration method (SGD+Nesterov) is arguably the most widely  used to train modern machine learning models in practice. 
The popularity of SGD+Nesterov is tied to the well-known acceleration of the deterministic Nesterov's method over gradient descent~\cite{nesterov2013introductory}.
However, it is not theoretically clear whether Nesterov SGD accelerates over SGD. 

As we show in this work, both theoretically and empirically, Nesterov SGD with any parameter selection does {\it not} in general provide acceleration over ordinary SGD.
Furthermore, Nesterov SGD may diverge, even in the linear setting, for step sizes that guarantee convergence of ordinary SGD.
Intuitively, the lack of acceleration stems from the fact that, to ensure convergence, the step size of \sgd has to be much smaller than the optimal step size for SGD. This is in contrast to the deterministic Nesterov method, which accelerates using the same step size as optimal gradient descent. 
As we prove rigorously in this paper, the slow-down of convergence caused by the small step size negates the benefit brought by the momentum term. We note that a similar lack of acceleration for the stochastic Heavy Ball method was analyzed in~\cite{kidambi2018insufficiency}.

To address the non-acceleration of \sgd, we introduce an additional compensation term to allow convergence for the same range of step sizes as SGD. 
The resulting algorithm, {\it MaSS} (Momentum-added Stochastic Solver)\footnote{Code url: \url{https://github.com/ts66395/MaSS}} updates the weights $\mathbf{w}$ and $\mathbf{u}$ using the following rules (with the compensation term underlined):
\begin{eqnarray}\left.\begin{array}{lll}
\mathbf{w}_{t+1} &\leftarrow& \mathbf{u}_t - \eta_1 \tilde{\nabla}f(\mathbf{u}_t), \\
\mathbf{u}_{t+1} &\leftarrow& (1+\gamma)\mathbf{w}_{t+1} - \gamma \mathbf{w}_{t} + \underline{\color{blue}  \eta_2 \tilde{\nabla}f(\mathbf{u}_t)}.\end{array}\right.\label{eq:firstupdaterule}
\end{eqnarray}
Here, $\tilde{\nabla}$ represents the stochastic gradient. The step size $\eta_1$, the  momentum parameter $\gamma \in (0,1)$ and the compensation parameter $\eta_2$ are independent of $t$.  

\begin{wrapfigure}{r}{0.4\textwidth} 
\vspace{-26pt}
  \begin{center}
    \includegraphics[width=0.4\textwidth]{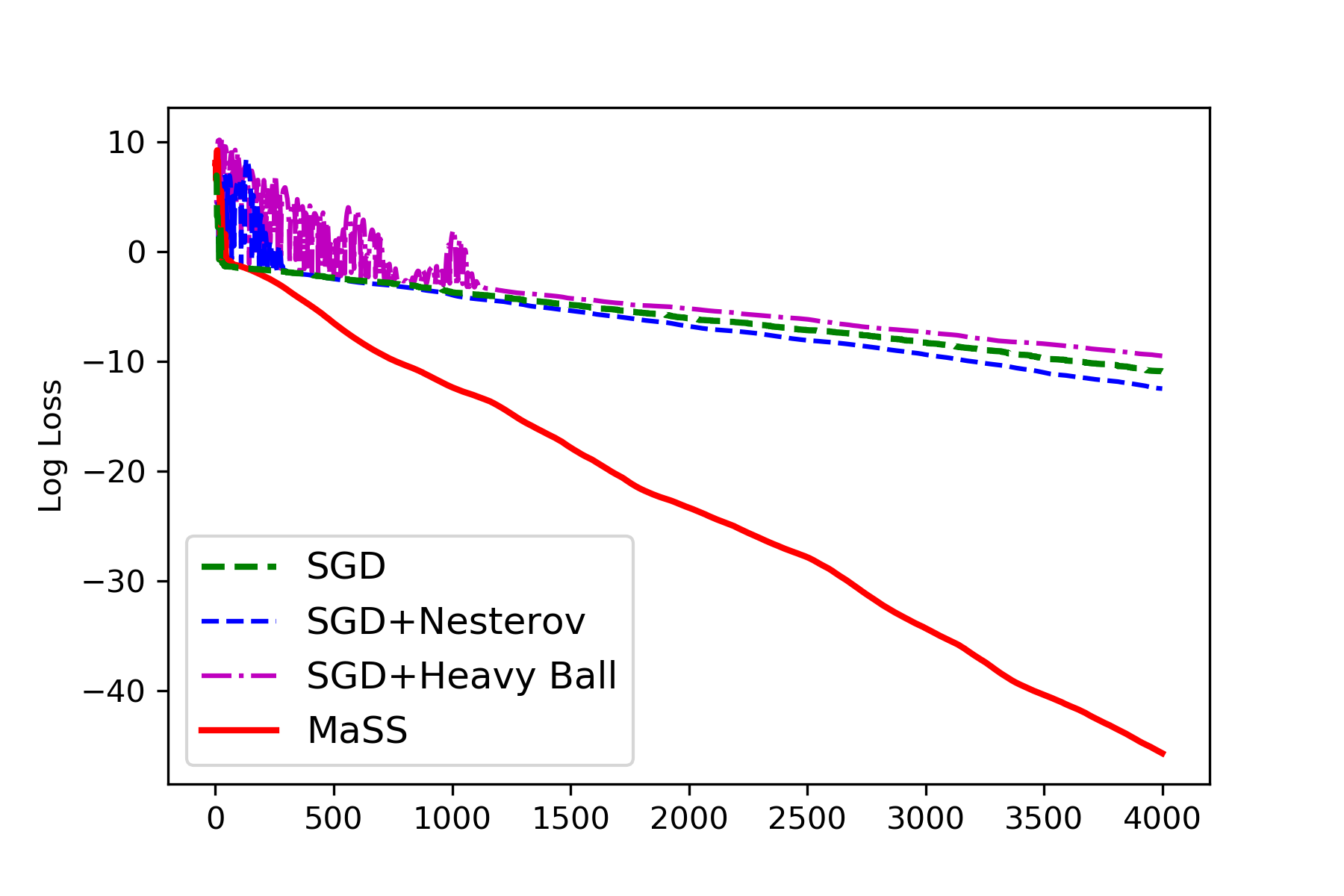}\vspace{-8pt}
    \caption{Non-acceleration of Nesterov SGD and fast convergence of MaSS.}\label{fig:illu}
  \end{center}
  \vspace{-10pt}
\end{wrapfigure} 

We proceed to analyze theoretical convergence properties of MaSS in the interpolated regime. Specifically, we show that in the linear setting  MaSS converges exponentially for the same range of step sizes as plain SGD, and the optimal choice of step size for MaSS  is exactly $\eta^*$ which is optimal for SGD. 
 Our key theoretical result shows that MaSS has accelerated convergence rate over SGD. Furthermore,  in the full batch (deterministic) scenario, our analysis selects $\eta_2=0$, thus reducing MaSS  to the classical Nesterov's method ~\cite{nesterov2013introductory}. In this case the our convergence rate also matches the well-known convergence rate for the Nesterov's method~\cite{nesterov2013introductory,bubeck2015convex}.
 This acceleration is illustrated in Figure~\ref{fig:illu}. Note that \sgd (as well as Stochastic Heavy Ball) does not converge faster than  SGD, in line with our theoretical analysis.
 We also prove exponential convergence of MaSS in  more general convex setting under additional conditions.
 
 \begin{wrapfigure}{r}{0.40\textwidth} 
\vspace{-15pt}
  \begin{center}
    \includegraphics[width=0.38\textwidth]{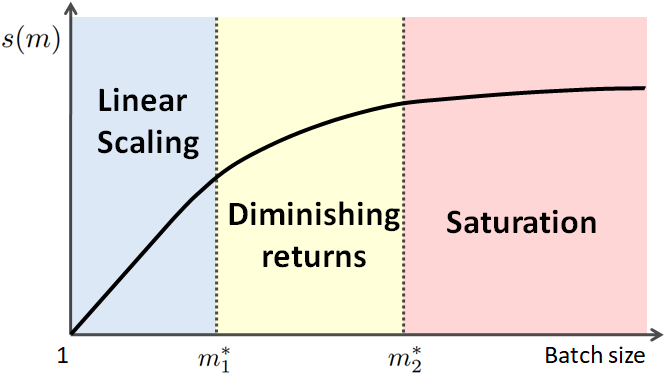}\vspace{-5pt}
    \caption{Convergence speed per iteration $s(m)$. Larger $s(m)$ indicates faster convergence per iteration.}\label{fig:illulinear}
  \end{center}
  \vspace{-12pt}
\end{wrapfigure} 
  We further analyze  the dependence  of the convergence rate $e^{-s(m)t}$ and optimal hyper-parameters on the mini-batch size $m$. We identify  three distinct regimes of dependence defined by two critical values $m_1^*$ and $m_2^*$: linear scaling, diminishing returns and saturation, as illustrated in Figure~\ref{fig:illulinear}. The convergence speed per iteration $s(m)$, as well as the optimal hyper-parameters, increase linearly as $m$ in the linear scaling regime, sub-linearly in the diminishing returns regime, and can only increase by  a small constant factor in the saturation regime. 
 The critical values $m_1^*$ and $m_2^*$ are derived analytically.
   We note that the intermediate ``diminishing terurns'' regime is new and is not found in SGD~\cite{ma2017power}. To the best of our knowledge, this is the first analysis of mini-batch dependence for accelerated stochastic gradient methods.

 We also experimentally evaluate MaSS on deep neural networks, which are non-convex.  We show that MaSS outperforms SGD, SGD+Nesterov  and Adam~\cite{kingma2014adam}  both in optimization and generalization, on different architectures of deep neural networks including  convolutional networks and ResNet~\cite{he2016deep}.

The paper is organized as follows: In section~\ref{sec:prelim}, we introduce notations and preliminary results. In section~\ref{sec:nesterov}, we discuss the non-acceleration of SGD+Nesterov. In section~\ref{sec:mass} we introduce MaSS and analyze its convergence and optimal hyper-parameter selection. In section~\ref{sec:linear}, we analyze the mini-batch MaSS. 
In Section~\ref{sec:experiment}, we show experimental results.

 \subsection{Related Work}\label{sec:related}
Over-parameterized models have drawn  increasing attention in the literature as many modern machine learning models, especially neural networks, are over-parameterized~\cite{canziani2016analysis} and  show strong generalization performance~\cite{neyshabur2014search,zhang2016understanding,belkin2019reconciling}. 
Over-parameterized models usually result in nearly perfect fit (or interpolation) of the training data~\cite{zhang2016understanding,sagun2017empirical,belkin2018understand}.
Exponential convergence of SGD with constant step size under interpolation and its dependence on the batch size is analyzed in~\cite{ma2017power}.


There are a few works that show or indicate the non-acceleration of existing stochastic momentum methods. 
First of all, the work~\cite{kidambi2018insufficiency} theoretically proves non-acceleration of stochastic Heavy Ball method (SGD+HB) over SGD on certain synthetic data. Furthermore, these authors provide experimental evidence that  SGD+Nesterov also converges at the same rate as SGD on the same data. The work~\cite{yuan2016influence} theoretically shows that, for sufficiently small step-sizes, SGD+Nesterov and SGD+HB is equivalent to SGD with a larger step size. However, the results in~\cite{yuan2016influence}  do not exclude the possibility that acceleration is possible when the step size is larger. 
The work~\cite{liu2018toward} concludes that ``momentum hurts the convergence within the neighborhood of global optima", based on a theoretical analysis of SGD+HB. These results are consistent with our analysis of the standard \sgd. However, this conclusion does not apply to all momentum methods. Indeed, we will show 
 that  MaSS provably improves convergence over SGD.

There is a large body of work, both practical and theoretical,  on SGD with momentum, including~\cite{kingma2014adam,jain2017accelerating,allen2017katyusha}.  
Adam~\cite{kingma2014adam}, and its variant AMSGrad~\cite{reddi2018convergence}, are among the most practically used SGD methods with momentum. Unlike our method, Adam adaptively adjusts the step size according to a weight-decayed accumulation of gradient history. In~\cite{jain2017accelerating} the authors proposed an accelerated SGD algorithm, which can be written in the form shown on the right hand side in  Eq.\ref{eq:newrules}, but with different hyper-parameter selection. Their ASGD algorithm also has a tail-averaging step at the final stage. 
In the interpolated setting (no additive noise) their analysis yields a convergence rate of $O(Poly(\kappa,\tilde{\kappa})\exp(-\frac{t}{9\sqrt{\kappa_1\tilde{\kappa}}}))$ compared to $O(\exp(-\frac{t}{\sqrt{\kappa_1\tilde{\kappa}}}))$ for our algorithm with batch size 1.  We provide some experimental comparisons between their ASGD algorithm and MaSS in Fig.~\ref{fig:synthetic_data_2}.

The work~\cite{vaswani2018fast}   proposes and analyzes another  first-order momentum algorithm  and derives convergence rates under a different set of conditions -- the strong growth condition for the loss function in addition to convexity.
As shown in Appendix~\ref{app:vaswani},  on the example of a Gaussian distributed data, the rates in~\cite{vaswani2018fast} can be slower than those for SGD. In contrast,  our algorithm is guaranteed to never have a slower convergence rate than SGD. Furthermore, in the same Gaussian setting MaSS matches the optimal accelerated full-gradient Nesterov rate.

Additionally, in our work we  consider the practically important  dependence of the convergence rate and optimal parameter selection on the mini-batch size, which to the best of our knowledge, has not been analyzed for momentum methods.

\section{Notations and Preliminaries}\label{sec:prelim}
Given dataset $\mathcal{D}=\{(\mathbf{x}_i, y_i)\}_{i=1}^n \subset \mathbb{R}^d\times \mathcal{C}$, we consider an  objective function of the form  $f(\bw) = \frac{1}{n}\sum_{i=1}^n f_i(\bw)$, where $f_i$ only depends on a single data point $(\mathbf{x}_i, y_i)$. Let $\nabla f$ denote the exact gradient, and $\tilde{\nabla}_m f$ denote the unbiased stochastic gradient evaluated based on a mini-batch of size $m$. For simplicity, we also denote $\tilde{\nabla}f(\mathbf{w}) := \tilde{\nabla}_1f(\mathbf{w}).$
We use the concepts of strong convexity and smoothness of functions, see definitions in Appendix~\ref{app:stconvex}. For loss function with $\mu$-strong convexity and $L$-smoothness, the condition number $\kappa$ is defined as $\kappa = L/\mu$. 

In the case of the square loss, $f_i(\mathbf{w}) = \frac{1}{2}(\mathbf{w}^T\mathbf{x}_i-y_i)^2$, and the Hessian matrix is $H := \frac{1}{n}\sum_{i=1}^n\mathbf{x}_i\mathbf{x}_i^T$. Let $L$ and $\mu$ be the largest and the smallest non-zero eigenvalues of the Hessian respectively. Then the condition number is then $\kappa = L/\mu$ (note that zero eigenvalues can be ignored in our setting, see Section~\ref{sec:mass}).

{\bf Stochastic Condition Numbers.} For a quadratic loss function, let $\tilde{H}_m := \frac{1}{m}\sum_{i=1}^m \tilde{\mathbf{x}}_i \tilde{\mathbf{x}}_i^T$ denotes a mini-batch estimate of $H$. 
Define $L_1$ be the smallest positive number such that 
$
\mathbb{E}\left[\|\tilde{\mathbf{x}}\|^2\tilde{\mathbf{x}}\tilde{\mathbf{x}}^T\right] \preceq L_1 H, 
$
and denote 
\begin{equation}
    L_m := L_1/m + (m-1)L/{m}. \label{eq:lm}
\end{equation}
 Given a mini-batch size $m$, we define the {\it $m$-stochastic} condition number as $\kappa_m := L_m/\mu$.
 
 Following~\cite{jain2017accelerating}, we introduce the quantity $\tilde{\kappa}$ (called statistical condition number in~\cite{jain2017accelerating}), which is the smallest positive real number such that
$\label{eq:statkappa}
\mathbb{E}\left[\|\tilde{\mathbf{x}}\|_{H^{-1}}^2\tilde{\mathbf{x}}\tilde{\mathbf{x}}^T\right] \preceq \tilde{\kappa} H.
$

 \begin{remark}
 We note that $L_1 \ge L$, since
$
\mathbb{E}\left[\|\tilde{\mathbf{x}}\|^2\tilde{\mathbf{x}}\tilde{\mathbf{x}}^T\right] = \mathbb{E}[\tilde{H}_1^2] = H^2 + \mathbb{E}[(\tilde{H}_1-H)^2] \succeq H^2.
$ Consequently, according to the definition of $L_m$ in Eq.\ref{eq:lm}, we have
\begin{eqnarray}
     L_m \ge L, \ \forall m\ge 1;\quad
    L_m \rightarrow L, \ \textrm{as } m \to \infty.
\end{eqnarray}
Hence, the quadratic loss function is also $L_m$-smooth, for all $m\ge 1$. By the definition of $\kappa_m$, we also have
\begin{eqnarray}
     \kappa_m \ge \kappa, \ \forall m\ge 1;\quad
     \kappa_m \rightarrow \kappa, \ \textrm{as } m \to \infty.
\end{eqnarray}
 \end{remark}

\begin{remark}
It is important to note that $\tilde{\kappa} \le \kappa_1$,
since $\mathbb{E}\left[\|\tilde{\mathbf{x}}\|_{H^{-1}}^2\tilde{\mathbf{x}}\tilde{\mathbf{x}}^T\right] \preceq \frac{1}{\mu}\mathbb{E}\left[\|\tilde{\mathbf{x}}\|^2\tilde{\mathbf{x}}\tilde{\mathbf{x}}^T\right] \preceq \kappa_1 H$.
\end{remark}

\subsection{Convergence of SGD for Over-parametrized Models and Optimal Step Size}\label{sec:avr}
We consider over-parametrized models  that have zero training loss  solutions on the training data (e.g.,~\cite{zhang2016understanding}). A solution $ f_i(\mathbf{w})$ which fits the training data perfectly 
$
    f_i(\mathbf{w}) = 0, \  \forall i = 1, 2, \cdots,n,
$ is known as {\it interpolating}. 
In the linear setting, interpolation implies that the linear system $\{\mathbf{x}_i^T\mathbf{w} = y_i\}_{i=1}^n$ has at least one solution.

A key property of interpolation is  \emph{Automatic Variance Reduction} (AVR), where the variance of the stochastic gradient  decreases to zero as the weight $\mathbf{w}$ approaches the optimal $\mathbf{w}^*$. \begin{equation}\label{eq:avr2}
    \mathrm{Var}[\tilde{\nabla}_mf(\mathbf{w})] \preceq \|\mathbf{w}-\mathbf{w}^*\|^2\mathbb{E}[(\tilde{H}_m-H)^2].
\end{equation}
For a detailed discussion of AVR see Appendix~\ref{subsec:appavr}. 

Thanks to AVR, plain SGD with constant step size can be shown to converge exponentially for strongly convex loss functions~\cite{moulines2011non,schmidt2013fast,needell2014stochastic,ma2017power}. The set of acceptable step sizes is $(0,2/L_m)$, where $L_m$ is defined in Eq.\ref{eq:lm} and $m$ is the mini-batch size. Moreover, the optimal step size $\eta^*(m)$ of SGD that induces fastest convergence guarantee is proven to be $1/L_m$~\cite{ma2017power}.

\section{Non-acceleration of SGD+Nesterov}\label{sec:nesterov}

In this section we prove that \sgd, with \emph{any} constant hyper-parameter setting, does not generally improve convergence over optimal SGD.
Specifically, we demonstrate a setting where \sgd can be proved to have convergence rate of  $(1-O(1/\kappa))^t$, which is same (up to a constant factor) as SGD. In contrast, the classical accelerated rate for the deterministic Nesterov's method is  $(1-1/\sqrt{\kappa})^t$. 



We will consider the following two-dimensional data-generating
{\bf component decoupled model.} Fix an arbitrary $\mathbf{w}^*\in\mathbb{R}^2$ and randomly sample $z$ from $\mathcal{N}(0,2)$.
The data points $(\mathbf{x},y)$ are constructed as follow:
\begin{equation}\label{eq:syndata}
\mathbf{x} = \left\{\begin{array}{cc}
\sigma_1\cdot z \cdot \mathbf{e}_1, &\mathrm{w.p.}\ 0.5, \\
\sigma_2\cdot z \cdot \mathbf{e}_2, & \mathrm{w.p.}\ 0.5,
\end{array}
\right. \quad  \mathrm{and} \ y = \langle \mathbf{w}^*, \mathbf{x}\rangle,
\end{equation}
where $\mathbf{e}_1,\mathbf{e}_2\in\mathbb{R}^2$ are canonical basis vectors, $\sigma_1 > \sigma_2 > 0$. It can be seen that $L_1 = \sigma_1^2$, $\kappa_1 = 6\sigma_1^2/\sigma_2^2$ and $\tilde{\kappa} = 6$ (See Appendix~\ref{sec:2-ddata}). This model is similar to that used to analyze the stochastic Heavy Ball method in~\cite{kidambi2018insufficiency}. 

The following theorem gives a lower bound for the convergence of SGD+Nesterov, regarding the linear regression problem on the component decoupled data model.
See  Appendix~\ref{app:pfsnag} for the proof.
\begin{thm}[Non-acceleration of SGD+Nesterov]\label{thm:divsnag}
Let $\{(\mathbf{x}_i,y_i)\}_{i=1}^n$ be a dataset generated according to the component decoupled model. Consider the optimization problem of minimizing quadratic function $\frac{1}{2n}\sum_{i} (\mathbf{x}_i^T\mathbf{w}-y_i)^2$. 
For any step size $\eta>0$ and momentum parameter $\gamma\in(0,1)$ of SGD+Nesterov with random initialization, with probability one, there exists a $T\in\mathbb{N}$ such that $\forall t>T$,
\begin{equation}
    \mathbb{E}\left[f(\mathbf{w}_t)\right]-f(\mathbf{w}^*) \ge \left(1-\frac{C}{\kappa}\right)^t \big[f(\mathbf{w}_0) - f(\mathbf{w}^*)\big],
\end{equation}
where $C>0$ is a  constant.
\end{thm}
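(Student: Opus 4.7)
The plan is to reduce the iteration to two independent one-dimensional linear stochastic recursions along the eigendirections $\mathbf{e}_1,\mathbf{e}_2$ of $H$, and then show that stability of the second moment along the high-curvature direction $\mathbf{e}_1$ forces $(\eta,\gamma)$ into a region where the second moment along the low-curvature direction $\mathbf{e}_2$ cannot decay faster than $1-O(1/\kappa)$. Because every sampled $\mathbf{x}$ is proportional to a single $\mathbf{e}_i$, the stochastic gradient has at most one nonzero coordinate per iteration, so, writing $v_t^{(i)}:=w_t^{(i)}-w^{*(i)}$ and $q_t^{(i)}:=(1+\gamma)v_t^{(i)}-\gamma v_{t-1}^{(i)}$, the SGD+Nesterov update decouples on each axis $i\in\{1,2\}$ to the scalar random recursion
\begin{equation*}
v_{t+1}^{(i)} \;=\; \bigl(1-\eta\,H_i^{(t)}\bigr)\,q_t^{(i)}, \qquad H_i^{(t)}=\sigma_i^2 z^2\ \text{w.p. }\tfrac12,\quad H_i^{(t)}=0\ \text{w.p. }\tfrac12,
\end{equation*}
with $z\sim\mathcal{N}(0,2)$ independent across iterations and $H_1^{(t)},H_2^{(t)}$ never simultaneously nonzero.

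Let $\mathbf{s}_t^{(i)}=\bigl(\mathbb{E}(v_t^{(i)})^2,\ \mathbb{E}[v_t^{(i)}v_{t-1}^{(i)}],\ \mathbb{E}(v_{t-1}^{(i)})^2\bigr)^{T}$. Using $\mathbb{E}[1-\eta H_i^{(t)}]=1-\eta\sigma_i^2$ and $\mathrm{Var}[1-\eta H_i^{(t)}]=5\eta^2\sigma_i^4$ (obtained from $\mathbb{E}[z^2]=2,\mathbb{E}[z^4]=12$), an elementary calculation produces an explicit $3\times3$ matrix $M_i(\eta,\gamma)$ with $\mathbf{s}_{t+1}^{(i)}=M_i\mathbf{s}_t^{(i)}$, and $\mathbb{E}[f(\mathbf{w}_t)]-f(\mathbf{w}^*)$ is of order $\sum_i \sigma_i^2\,[M_i^t\mathbf{s}_0^{(i)}]_1$. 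A divergent $\rho(M_1)$ would already violate the stated decaying lower bound trivially, so we may assume $\rho(M_1)\le 1$; examining the characteristic polynomial of $M_1$ via a Jury/Schur criterion yields a stability bound of the form $\eta\sigma_1^2\le c(\gamma)$ with $c$ bounded on $(0,1)$. Intuitively, the variance $5\eta^2\sigma_1^4$ of the random factor $1-\eta H_1^{(t)}$ must not overwhelm the momentum amplification $(1+\gamma)^2$ sitting in the top row of $M_1$.

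Propagating this to $\mathbf{e}_2$ gives $\eta\sigma_2^2\le c(\gamma)/\kappa$ since $\kappa=\sigma_1^2/\sigma_2^2$. Treating $M_2$ as a perturbation of its $\eta\sigma_2^2=0$ limit $M_2^{(0)}$, which is the second-moment matrix of the pure momentum recursion $q_t^{(i)}=(1+\gamma)v_t^{(i)}-\gamma v_{t-1}^{(i)}$ and satisfies $\rho(M_2^{(0)})=1$, a first-order perturbation estimate gives $\rho(M_2)\ge 1-C\,\eta\sigma_2^2\ge 1-C'/\kappa$ uniformly over the stability region. A random initialization $\mathbf{w}_0$ projects with probability one nontrivially onto the dominant invariant subspace of $M_2$, so $\mathbb{E}\,(v_t^{(2)})^2\ge \alpha\rho(M_2)^{t}$ for some random $\alpha>0$ and all $t$ beyond a finite threshold $T$; since $f(\mathbf{w})-f(\mathbf{w}^*)=\tfrac12\sum_i\sigma_i^2(v^{(i)})^2$, absorbing $\alpha$ into a larger $T$ yields the claimed $(1-C/\kappa)^t[f(\mathbf{w}_0)-f(\mathbf{w}^*)]$ bound.

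The main obstacle is making the perturbation estimate uniform over the entire two-parameter stability region $\{(\eta,\gamma):\rho(M_1)\le 1\}\cap(0,\infty)\times(0,1)$. That region includes $\gamma$ close to $1$, precisely the regime in which the deterministic Nesterov method attains its $1-1/\sqrt{\kappa}$ rate, and one must show that the stochastic noise term $5\eta^2\sigma_i^4$ prevents this acceleration even as $\gamma\to 1^-$, by tracking its interaction with the momentum amplification $(1+\gamma)^2$. A secondary care point is the spectral degeneracy of $M_2^{(0)}$: its eigenvalue $1$ is not simple in general, so the perturbation expansion must respect the Jordan-block structure rather than assume a simple dominant eigenvalue; this is where the lower bound picks up only a constant factor, giving $1-C/\kappa$ rather than the sharper $1-1/\sqrt{\kappa}$ one would hope for.
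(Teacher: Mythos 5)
Your reduction to per-coordinate second-moment recursions (your $3\times 3$ matrices $M_i$ are just the symmetric reduction of the paper's $4\times 4$ matrices $\mathcal{B}^{[j]}$) is the same starting point as the paper, but the two steps that carry the whole theorem have genuine gaps. First, the stability analysis along $\mathbf{e}_1$ must deliver more than ``$\eta\sigma_1^2\le c(\gamma)$ with $c$ bounded'': the essential quantitative fact (the paper's Lemma~\ref{lemma:largerthan1}, with $\eta_0(u)=\tfrac{2}{3}u+o(u)$, $u=1-\gamma$) is that $c(\gamma)=O(1-\gamma)$ as $\gamma\to1$. Boundedness alone does not exclude parameter choices such as $\eta\sigma_1^2=\Theta(1)$ with $1-\gamma\sim1/\sqrt{\kappa}$, and at such a point the dynamics along $\mathbf{e}_2$ (where the multiplicative noise is only $O(\eta^2\sigma_2^4)$) genuinely contracts at the accelerated rate $1-\Theta(1/\sqrt{\kappa})$; so no correct argument can reach the $1-C/\kappa$ conclusion from your weaker stability statement, and the coupling ``noise in direction 1 forces $\eta\lesssim 1-\gamma$'' is exactly the content you need to extract and then feed into the direction-2 analysis.

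Second, the uniform first-order perturbation claim $\rho(M_2)\ge 1-C\,\eta\sigma_2^2$ is false on part of the true stability region, precisely in the near-degenerate regime you flag but do not resolve: the unperturbed eigenvalues $1,\gamma,\gamma^2$ coalesce as $\gamma\to1$, the eigenvalue displacement is not linear in the perturbation, and e.g.\ for $1-\gamma=\kappa^{-1/4}$, $\eta\approx\tfrac23(1-\gamma)$ one finds $\rho(M_2)=1-\Theta(1/\kappa)$ while $1-C\eta\sigma_2^2=1-\Theta(\kappa^{-5/4})$, so your intermediate inequality fails even though the final $1-C'/\kappa$ bound happens to hold. The correct behavior of the dominant eigenvalue in that regime is $1-\Theta\bigl(\eta\sigma_2^2/(1-\gamma)\bigr)$, and bounding this below by $1-O(1/\kappa)$ again requires the refined constraint $\eta=O(1-\gamma)$; the paper obtains it by a case-by-case asymptotic expansion of the eigenvalues of $\mathcal{B}^{[2]}$ according to how $1-\gamma$ scales against $t_2=\eta\sigma_2^2$ (its Lemma~\ref{lemma:b2}), which is the piece your sketch replaces with an unjustified uniform estimate. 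A smaller point: the same ``random initialization has nonzero projection on the top eigendirection'' argument (the paper's Lemma~\ref{lemma:sgdnes2}, proved by a rank/determinant computation on the first few iterates) is also needed in the divergent case $\rho(M_1)>1$, not only for $M_2$, before you may ``assume $\rho(M_1)\le1$.''
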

Compared with the convergence rate $(1-1/\kappa)^t$ of SGD~\cite{ma2017power}, this theorem  shows that \sgd does not accelerate over SGD. This result is very different from that in the deterministic gradient scenario, where the classical Nesterov's method has a strictly faster convergence guarantee than gradient descent~\cite{nesterov2013introductory}.

Intuitively, the key reason for the non-acceleration of \sgd is a condition on the step size $\eta$ required for non-divergence of the algorithm. Specifically, when momentum parameter $\gamma$ is close to 1, $\eta$ is required to be less than $2(1-\gamma)/3 + O((1-\gamma)^2)$ (precise formulation is given in Lemma~\ref{lemma:largerthan1} in Appendix~\ref{app:pfsnag}).
The slow-down resulting from the small step size necessary to satisfy that condition  cannot be compensated by the benefit of the momentum term. 
In particular, the condition on the step-size of \sgd excludes $\eta^*$ that achieves fastest convergence for SGD. We show in the following corollary that, with the step size $\eta^*$, \sgd diverges. This is different from the deterministic scenario, where the Nesterov method accelerates using the same step size as gradient descent.



\begin{corollary}\label{cor:divergesn}
Consider the same optimization problem as in Theorem~\ref{thm:divsnag}.
Let step-size $\eta = \frac{1}{L_1} = \frac{1}{6\sigma_1^2}$ and acceleration parameter $\gamma\in[0.6,1]$, then SGD+Nesterov, with random initialization, diverges in expectation with probability 1.
\end{corollary}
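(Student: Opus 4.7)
The plan is to reduce the \sgd dynamics on the component decoupled model to a deterministic second-moment recursion on a $2\times 2$ covariance matrix, and to show that at $\eta_1=1/(6\sigma_1^2)$ and $\gamma\in[0.6,1]$ this recursion has spectral radius strictly greater than~$1$. To begin, I exploit the coordinate-decoupling of the data: every $\mathbf{x}$ sampled from~\eqref{eq:syndata} lies on a single axis, so the stochastic gradient $\tilde\nabla f(\mathbf{u})=\mathbf{x}\mathbf{x}^T(\mathbf{u}-\mathbf{w}^*)$ is always a scalar multiple of $\mathbf{e}_1$ or $\mathbf{e}_2$, and the pairs $(\mathbf{w}_t[i],\mathbf{u}_t[i])$ for $i=1,2$ evolve independently. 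Focusing on coordinate~$1$, set $a_t:=\mathbf{w}_t[1]-\mathbf{w}^*[1]$, $b_t:=\mathbf{u}_t[1]-\mathbf{w}^*[1]$, $\mathbf{v}_t:=(a_t,b_t)^T$. Plugging $\eta_2=0$ and $\eta_1\sigma_1^2=1/6$ into~\eqref{eq:firstupdaterule} gives $\mathbf{v}_{t+1}=M_t\mathbf{v}_t$, where with probability $1/2$ the sample hits axis~$1$ (with $z\sim\mathcal{N}(0,2)$), producing $M_t=M_1(z)$, and with probability $1/2$ it hits axis~$2$, contributing zero gradient to coordinate~$1$ and producing $M_t=M_2$:
\[
M_1(z)=\begin{pmatrix}0 & 1-z^2/6 \\ -\gamma & (1+\gamma)(1-z^2/6)\end{pmatrix},\qquad M_2=\begin{pmatrix}0 & 1 \\ -\gamma & 1+\gamma\end{pmatrix}.
\]

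I would then track the second-moment matrix $\Sigma_t:=\mathbb{E}[\mathbf{v}_t\mathbf{v}_t^T]$. Independence of the step-$t$ randomness from $\mathbf{v}_t$ yields a deterministic linear recursion $\Sigma_{t+1}=\mathcal{L}(\Sigma_t)$ with $\mathcal{L}(\Sigma):=\tfrac{1}{2}\mathbb{E}_z[M_1(z)\Sigma M_1(z)^T]+\tfrac{1}{2}M_2\Sigma M_2^T$. Using $\mathbb{E}[z^2]=2$ and $\mathbb{E}[z^4]=12$ (so $\mathbb{E}[1-z^2/6]=\mathbb{E}[(1-z^2/6)^2]=2/3$) and the coordinates $(A,B,C)=(\Sigma_{11},\Sigma_{12},\Sigma_{22})$, the operator $\mathcal{L}$ becomes an explicit $3\times 3$ matrix whose entries depend rationally on $\gamma$. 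The crux is to evaluate its characteristic polynomial $q(\lambda):=\det(\lambda I-\mathcal{L})$ at $\lambda=1$; expanding along the (sparse) first row and simplifying, I expect the compact identity
\[
36\,q(1)\;=\;6-5\gamma-10\gamma^2,
\]
whose right-hand side is strictly negative on $\gamma\in[0.6,1]$, since the positive root of $10\gamma^2+5\gamma-6$ is $(-5+\sqrt{265})/20\approx 0.564$. Because $q$ is a cubic with positive leading coefficient, the intermediate value theorem then produces a real eigenvalue $\lambda_*>1$ of $\mathcal{L}$.

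Finally, I translate spectral radius $>1$ into divergence in expectation. The operator $\mathcal{L}$ sends PSD matrices to PSD matrices, so the Krein--Rutman theorem on the PSD cone yields a nonzero PSD eigenvector $\Sigma^*$ whose eigenvalue equals the spectral radius of $\mathcal{L}$ and is therefore at least $\lambda_*$. A random initialization with $\mathbf{u}_0=\mathbf{w}_0$ gives $\Sigma_0=(\mathbf{w}_0[1]-\mathbf{w}^*[1])^2\,\mathbf{1}\mathbf{1}^T$, nonzero almost surely and (generically in $\gamma$) with nonzero overlap on $\Sigma^*$; hence $\mathrm{tr}(\Sigma_t)$ grows at rate $\lambda_*^t$. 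Since $b_t=(1+\gamma)a_t-\gamma a_{t-1}$ bounds $\mathbb{E}[b_t^2]$ linearly in $\mathbb{E}[a_t^2]$ and $\mathbb{E}[a_{t-1}^2]$, divergence of $\mathrm{tr}(\Sigma_t)=\mathbb{E}[a_t^2]+\mathbb{E}[b_t^2]$ forces $\mathbb{E}[a_t^2]\to\infty$, which in turn forces $\mathbb{E}[f(\mathbf{w}_t)]\to\infty$. The main obstacle is the error-prone cubic algebra producing $36\,q(1)=6-5\gamma-10\gamma^2$, which is what pins down the constants $\gamma\ge 0.6$ and $\eta=1/(6\sigma_1^2)$ in the statement; a secondary delicacy is verifying that the rank-one $\mathbf{1}\mathbf{1}^T$ is not orthogonal to the eigenmode of $\lambda_*$, which can be handled by an explicit $3\times 3$ eigenvector computation or an irreducibility argument for $\mathcal{L}$ on the PSD cone.
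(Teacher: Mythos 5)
Your proposal is correct, and its core algebra checks out. With $\eta\sigma_1^2=1/6$ the random contraction factor $s$ (equal to $1-z^2/6$ with probability $1/2$ and to $1$ otherwise) satisfies $\mathbb{E}[s]=\mathbb{E}[s^2]=5/6$, the induced $3\times 3$ second-moment operator in your $(A,B,C)$ coordinates is
\[
\mathcal{L}=\begin{pmatrix}0&0&\tfrac56\\[2pt] 0&-\tfrac56\gamma&\tfrac56(1+\gamma)\\[2pt] \gamma^2&-\tfrac53\gamma(1+\gamma)&\tfrac56(1+\gamma)^2\end{pmatrix},
\]
and indeed $36\,q(1)=36\det(I-\mathcal{L})=6-5\gamma-10\gamma^2<0$ for $\gamma\ge 0.6$, so a real eigenvalue exceeds $1$. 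This is essentially the paper's route, executed in a more economical parametrization: the paper (Lemma~\ref{lemma:largerthan1}) tracks the unsymmetrized $4\times 4$ operator $\mathcal{B}^{[1]}$ on $\mathbb{E}[(\epsilon_{t+1},\epsilon_t)^{\otimes 2}]$ and evaluates its quartic characteristic polynomial at $\lambda=1$ for general $\eta$, obtaining the threshold in Eq.~\ref{eq:step-size-diver}, while your cubic is exactly the symmetric factor of that quartic: $D_1(\lambda)=\bigl(\lambda-\gamma(1-\eta\sigma_1^2)\bigr)\,q(\lambda)$, the discarded linear factor being the antisymmetric eigenvalue identified in the proof of Lemma~\ref{lemma:sgdnes2}; in terms of $u=1-\gamma$ your quadratic $-9+25u-10u^2$ times $(1+5u)/6$ reproduces the paper's $216\,D_1(1)$ at $t_1=1/6$, so the two computations agree. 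Specializing directly to $\eta=1/6$ rather than deriving $\eta_0(u)$ is fine for this corollary, and as a small bonus your computation covers $\gamma=1$ seamlessly ($36q(1)=-9$), whereas Lemma~\ref{lemma:largerthan1} is stated for $u\in(0,1]$. The genuinely different ingredient is the non-degeneracy of the initialization: the paper does an explicit rank computation ($\det\mathcal{M}'\neq 0$ in Lemma~\ref{lemma:sgdnes2}), whereas you invoke Krein--Rutman on the PSD cone. As written, though, ``generically in $\gamma$'' is a gap: the corollary requires divergence for \emph{every} $\gamma\in[0.6,1]$, so orthogonality to the growing mode must be excluded for all such $\gamma$, not generically. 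The fix you allude to works and should be spelled out: for $\gamma>0$ the vectors $M_1(z)\mathbf{1}=(s,(1+\gamma)s-\gamma)^T$ are pairwise non-parallel for distinct values of $s$ (their $2\times 2$ determinant is $\gamma(s_2-s_1)$), so $\Sigma_1=\mathcal{L}(\Sigma_0)$ is positive definite whenever $w_0[1]\neq w^*[1]$, which holds almost surely; hence $\Sigma_1\succeq\epsilon\,\Sigma^*$ for some $\epsilon>0$, and since $\mathcal{L}$ is monotone with respect to the PSD order, $\Sigma_{t+1}\succeq\epsilon\rho^{t}\Sigma^*$ with $\rho>1$, giving $\operatorname{tr}\Sigma_t\to\infty$ and, via your last inequality relating $b_t$ to $a_t,a_{t-1}$, divergence of $\mathbb{E}[f(\mathbf{w}_t)]$.
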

We empirically verify the non-acceleration of SGD+Nesterov  as well as Corollary~\ref{cor:divergesn}, in Section~\ref{sec:experiment} and Appendix~\ref{app:moresyn}.

\section{MaSS: Accelerating SGD}\label{sec:mass}
In this section, we propose MaSS, which  introduces a compensation term (see Eq.\ref{eq:firstupdaterule}) onto SGD+Nesterov. We show that MaSS can converge exponentially for all the step sizes that result in convergence of SGD, i.e., $\eta\in(0,2/L_m)$. Importantly, we derive a  convergence rate $\exp(-t/\sqrt{\kappa_1\tilde{\kappa}})$, where $\tilde{\kappa} \le \kappa_1$, for  MaSS which is faster than the convergence rate for SGD $\exp(-t/{\kappa_1})$. Moreover, we give an analytical expression for the optimal hyper-parameter setting.

For ease of analysis, we rewrite update rules of MaSS in Eq.\ref{eq:firstupdaterule} in the following equivalent form (introducing an additional variable $\mathbf{v}$):
\begin{eqnarray}\label{eq:newrules}
\left\{\begin{array}{l}
\mathbf{w}_{t+1} \leftarrow \mathbf{u}_t - \eta_1 \tilde{\nabla}f(\mathbf{u}_t), \\
\mathbf{u}_{t+1} \leftarrow (1+\gamma)\mathbf{w}_{t+1} - \gamma \mathbf{w}_{t} + \eta_2 \tilde{\nabla}f(\mathbf{u}_t)
\end{array}\right. \Longleftrightarrow \left\{\begin{array}{l}
\mathbf{w}_{t+1} \leftarrow \mathbf{u}_t - \eta \tilde{\nabla}f(\mathbf{u}_t),\\
\mathbf{v}_{t+1} \leftarrow (1-\alpha)\mathbf{v}_t + \alpha \mathbf{u}_t - \delta \tilde{\nabla}f(\mathbf{u}_t),\label{eq:updateb}\\
\mathbf{u}_{t+1} \leftarrow \frac{\alpha}{1+\alpha}\mathbf{v}_{t+1} + \frac{1}{1+\alpha}\mathbf{w}_{t+1}.\end{array}\right.
\end{eqnarray}
There is a bijection between the hyper-parameters ($\eta_1,\eta_2,\gamma$) and ($\eta,\alpha,\delta$), which is given by:
\begin{equation}\label{eq:parabijection}
    \gamma =(1-\alpha)/(1+\alpha), \ \ \eta_1 = \eta,   \ \  \eta_2 = (\eta-\alpha\delta)/(1+\alpha).
\end{equation}
\begin{remark}[SGD+Nesterov] In the literature, the Nesterov's method is sometimes written in a similar form as the R.H.S. of Eq.\ref{eq:newrules}. Since SGD+Nesterov has no compensation term, $\delta$ has to be fixed as $\eta/\alpha$, which is consistent with the parameter setting in~\cite{nesterov2013introductory}.\end{remark}

{\bf Assumptions.} We first assume square loss function, and later extend the analysis to general convex loss functions under additional conditions.
For square loss function, the solution set $\mathcal{W}^* : = \{\mathbf{w}\in\mathbb{R}^d| f(\mathbf{w}) = 0\}$ is an affine subspace in the parameter space $\mathbb{R}^d$. 
Given any $\mathbf{w}$, we denote its closest solution as $\mathbf{w}^* := \arg\min_{\mathbf{v}\in \mathcal{W}^*}\|\mathbf{w} - \mathbf{v}\|,$ and define the error $\boldsymbol{\epsilon} = \mathbf{w}-\mathbf{w}^*$. One should be aware that different $\mathbf{w}$ may correspond to different $\mathbf{w}^*$, and that $\boldsymbol{\epsilon}$ and (stochastic) gradients are always perpendicular to $\mathcal{W}^*$. Hence, no actual update happens along $\mathcal{W}^*$. For this reason, we can ignore zero eigenvalues of $H$ and restrict our analysis to the span of the eigenvectors of the Hessian with non-zero eigenvalues.

Based on the equivalent form of MaSS in Eq.\ref{eq:newrules}, the following theorem shows that, for square loss function in the interpolation setting, MaSS is guaranteed to have exponential convergence  when hyper-parameters satisfy certain conditions.

\begin{thm}[Convergence of MaSS]\label{thm:convergencemass}
Consider minimizing a quadratic loss function in the interpolation setting. Let $\mu$ be the smallest non-zero eigenvalue of the Hessian matrix $H$.  Let $L_m$ be as defined in Eq.\ref{eq:lm}. Denote $\tilde{\kappa}_m: = \tilde{\kappa}/m + (m-1)/m$. In MaSS with mini batch of size $m$, if the positive hyper-parameters $\eta, \alpha, \delta$ satisfy the following two conditions:
\begin{eqnarray}
   \alpha/\delta \le \mu,\quad
 \alpha\delta \tilde{\kappa}_m + \eta (\eta L_m - 2) \le 0,\label{eq:condition}
\end{eqnarray}
then, after $t$ iterations, 
\begin{eqnarray}
\mathbb{E}\left[\|\mathbf{v}_{t}-\mathbf{w}^*\|_{H^{-1}}^2 + \frac{\delta}{\alpha}\|\mathbf{w}_{t}-\mathbf{w}^*\|^2 \right] \le  (1-\alpha)^{t}\left(\|\mathbf{v}_{0}-\mathbf{w}^*\|_{H^{-1}}^2 + \frac{\delta}{\alpha}\|\mathbf{w}_{0}-\mathbf{w}^*\|^2\right).\nonumber
\end{eqnarray}
Consequently, 
$$
    \|\mathbf{w}_{t}-\mathbf{w}^*\|^2 \le C\cdot(1-\alpha)^{t},
$$for some constant $C>0$ which depends on the initialization.

\end{thm}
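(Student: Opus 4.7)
The plan is to prove a one-step Lyapunov contraction $\mathbb{E}[V_{t+1}\mid\mathcal{F}_t] \le (1-\alpha)\,V_t$ for the functional $V_t := \|\mathbf{v}_t-\mathbf{w}^*\|_{H^{-1}}^2 + \tfrac{\delta}{\alpha}\|\mathbf{w}_t-\mathbf{w}^*\|^2$ appearing in the statement. Once this inequality is established, iterating and taking unconditional expectations gives $\mathbb{E}[V_t] \le (1-\alpha)^t V_0$, and the second bound $\|\mathbf{w}_t-\mathbf{w}^*\|^2 \le C(1-\alpha)^t$ is then immediate with $C := (\alpha/\delta)\,V_0$, since $V_t \ge \tfrac{\delta}{\alpha}\|\mathbf{w}_t-\mathbf{w}^*\|^2$. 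Following the remark in the section setup I restrict the analysis to the span of the nonzero eigenvectors of $H$ (so that $H^{-1}$ is well-defined) and fix $\mathbf{w}^*$ once and for all as the orthogonal projection of $\mathbf{w}_0$ onto $\mathcal{W}^*$; the updates preserve perpendicularity to $\mathcal{W}^*$, so this choice is consistent throughout.

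The three ingredients I will use are: (i)~linearity of the stochastic gradient in the quadratic interpolation setting, $\tilde{g}_t := \tilde{\nabla}f(\mathbf{u}_t) = \tilde{H}_m(\mathbf{u}_t-\mathbf{w}^*)$, yielding $\mathbb{E}[\tilde{g}_t\mid\mathcal{F}_t] = H\mathbf{e}^u_t$ with $\mathbf{e}^u_t := \mathbf{u}_t-\mathbf{w}^*$; (ii)~the matrix-second-moment bounds $\mathbb{E}[\tilde{H}_m^2] \preceq L_m H$ and $\mathbb{E}[\tilde{H}_m H^{-1} \tilde{H}_m] \preceq \tilde\kappa_m H$ (the latter follows from the $m=1$ definition of $\tilde\kappa$ together with the i.i.d.\ variance-scaling identity $\mathbb{E}[(\tilde H_m-H)H^{-1}(\tilde H_m-H)] = \tfrac{1}{m}\mathbb{E}[(\tilde H_1-H)H^{-1}(\tilde H_1-H)]$), which give the AVR-type inequalities $\mathbb{E}\|\tilde g_t\|^2 \le L_m\|\mathbf{e}^u_t\|_H^2$ and $\mathbb{E}\|\tilde g_t\|_{H^{-1}}^2 \le \tilde\kappa_m\|\mathbf{e}^u_t\|_H^2$; and (iii)~the identity $\mathbf{e}^u_t = \tfrac{\alpha}{1+\alpha}\mathbf{e}^v_t + \tfrac{1}{1+\alpha}\mathbf{e}^w_t$, obtained directly from the $\mathbf{u}$-update, which I use to express cross products between $\mathbf{e}^v_t$ and $\mathbf{e}^u_t$ in terms of quantities appearing in $V_t$.

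The core computation proceeds by expanding the two squared norms in $V_{t+1}$ using the $\mathbf{w}$- and $\mathbf{v}$-updates and substituting (i)--(iii). For the $\mathbf{w}$-block, one obtains $\mathbb{E}\|\mathbf{e}^w_{t+1}\|^2 \le \|\mathbf{e}^u_t\|^2 + \eta(\eta L_m-2)\|\mathbf{e}^u_t\|_H^2$. For the $\mathbf{v}$-block, setting $A := (1-\alpha)\mathbf{e}^v_t + \alpha\mathbf{e}^u_t$, unbiasedness and the $\tilde\kappa_m$-AVR bound give $\mathbb{E}\|\mathbf{e}^v_{t+1}\|_{H^{-1}}^2 \le \|A\|_{H^{-1}}^2 - 2\delta A^\top\mathbf{e}^u_t + \delta^2\tilde\kappa_m\|\mathbf{e}^u_t\|_H^2$, and the three-point identity $\|A\|_{H^{-1}}^2 = (1-\alpha)\|\mathbf{e}^v_t\|_{H^{-1}}^2 + \alpha\|\mathbf{e}^u_t\|_{H^{-1}}^2 - \alpha(1-\alpha)\|\mathbf{e}^v_t-\mathbf{e}^u_t\|_{H^{-1}}^2$ isolates the $(1-\alpha)\|\mathbf{e}^v_t\|_{H^{-1}}^2$ piece that matches the left-hand side of the desired contraction. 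Forming $\mathbb{E}[V_{t+1}\mid\mathcal{F}_t] - (1-\alpha)V_t$, substituting the $\mathbf{u}$-identity, and collecting terms consolidates the residual into two contributions: (a)~a coefficient $\alpha\delta\tilde\kappa_m + \eta(\eta L_m - 2)$ on $\|\mathbf{e}^u_t\|_H^2$, non-positive by condition~(2); and (b)~an $\|\mathbf{e}^u_t\|^2$-type term that is paired, via strong convexity $H\succeq\mu I$ (equivalently $\|\cdot\|_{H^{-1}}^2 \le \mu^{-1}\|\cdot\|^2$), with the $\|\mathbf{e}^u_t\|_{H^{-1}}^2$ term produced by the three-point identity; condition~(1), $\alpha/\delta \le \mu$, is precisely what makes this combined contribution non-positive.

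The main obstacle I anticipate is the bookkeeping in this last combination: several bilinear expressions in $\mathbf{e}^v_t$ and $\mathbf{e}^w_t$ (in both the $H^{-1}$- and Euclidean norms) must cancel or be absorbed into $V_t$, and matching them requires both the precise Lyapunov weight $\delta/\alpha$ and the specific relation $\eta_2=(\eta-\alpha\delta)/(1+\alpha)$ that defines the compensation term---indeed, this is the algebraic reason the compensation has exactly that form. Once the one-step inequality is secured, induction on $t$ yields the stated contraction of $V_t$, and the second inequality of the theorem is an immediate consequence.
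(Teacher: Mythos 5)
Your proposal follows essentially the same route as the paper's proof: the same Lyapunov function with weight $\delta/\alpha$, the same second-moment bounds $\mathbb{E}[\tilde{H}_m^2]\preceq L_m H$ and $\mathbb{E}[\tilde{H}_m H^{-1}\tilde{H}_m]\preceq \tilde{\kappa}_m H$, the same unbiasedness-based treatment of the cross term via the $\mathbf{u}$-update identity, and the same identification of conditions (1) and (2) with the coefficients $\alpha/\mu-\delta$ and $\alpha\delta\tilde{\kappa}_m+\eta(\eta L_m-2)$. The only cosmetic differences (exact three-point identity in place of Jensen's inequality, and substituting $\mathbf{e}^u_t$ in terms of $\mathbf{e}^v_t,\mathbf{e}^w_t$ rather than the reverse) do not change the argument, which closes correctly.
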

\begin{remark}
By condition Eq.\ref{eq:condition}, the admissible step size $\eta$  is $(0,2/L_m)$, exactly the same as SGD for interpolated setting~\cite{ma2017power}.
\end{remark}
\begin{remark}
One can easily check that the hyper-parameter setting of SGD+Nesterov does not satisfy the conditions in Eq.\ref{eq:condition}. 
\end{remark}
\begin{proof}[Proof sketch for Theorem~\ref{thm:convergencemass}] Denote $\mathcal{F}_t:=\mathbb{E}\left[\|\mathbf{v}_{t+1}-\mathbf{w}^*\|_{H^{-1}}^2 + \frac{\delta}{\alpha}\|\mathbf{w}_{t+1}-\mathbf{w}^*\|^2 \right]$, we show that, under the update rules of MaSS in Eq.\ref{eq:newrules},
\begin{equation}
    \mathcal{F}_{t+1} \le (1-\alpha) \mathcal{F}_t + + \underbrace{\left(\alpha/\mu-\delta\right)}_{c_1}\|\mathbf{u}_t-\mathbf{w}^*\|^2
    + \underbrace{\left(\delta^2\tilde{\kappa}_m + \delta\eta^2 L_m/\alpha - 2\eta\delta/\alpha\right)}_{c_2}\|\mathbf{u}_t-\mathbf{w}^*\|_H^2.\nonumber
\end{equation}
By the condition in Eq.\ref{eq:condition}, $c_1 \le 0, c_2\le 0$, then the last two terms are non-positive. Hence,
\begin{eqnarray}
\mathcal{F}_{t+1} \le (1-\alpha) \mathcal{F}_t \le (1-\alpha)^{t+1}\mathcal{F}_0.
\end{eqnarray}
Using that $\|\mathbf{w}_{t}-\mathbf{w}^*\|^2 \le \alpha\mathcal{F}_t/\delta $, we get the final conclusion. See detailed proof in Appendix~\ref{sec:appconmass}.
\end{proof}

\paragraph{Hyper-parameter Selection.} From Theorem~\ref{thm:convergencemass}, we observe that the convergence rate is determined by $(1-\alpha)^t$. Therefore, larger $\alpha$ is preferred for faster convergence. Combining the conditions in Eq.\ref{eq:condition}, we have
\begin{equation}
    \alpha^2 \le \eta(2-\eta L_m)\mu/{\tilde{\kappa}_m}.
\end{equation}
By setting $\eta^* = 1/L_m$, which maximizes the right hand side of the inequality, we obtain the optimal selection $\alpha^* = 1/\sqrt{\kappa_m\tilde{\kappa}_m}$. Note that this setting of $\eta^*$ and $\alpha^*$ determines a unique $\delta^* = \alpha^*/\mu$ by the conditions in Eq.\ref{eq:condition}. In summary,
\begin{equation}\label{eq:optimalpara1}
    \eta^*(m) = \frac{1}{L_m},\alpha^*(m) = \frac{1}{\sqrt{\kappa_m\tilde{\kappa}_m}}, \delta^*(m) = \frac{\eta^*}{\alpha^* \tilde{\kappa}_m}.
\end{equation}
By Eq.\ref{eq:parabijection}, the optimal selection of $(\eta_1,\eta_2,\gamma)$ would be:
\begin{eqnarray}\label{eq:optimalpara2}
    \eta_1^*(m)= \frac{1}{L_m}, \quad \eta_2^*(m) = \frac{\eta_1^* \sqrt{\kappa_m\tilde{\kappa}_m}}{\sqrt{\kappa_m\tilde{\kappa}_m}+1}\left(1-\frac{1}{\tilde{\kappa}_m}\right), \quad 
    \gamma^*(m) =  \frac{\sqrt{\kappa_m\tilde{\kappa}_m}-1}{\sqrt{\kappa_m\tilde{\kappa}_m}+1}.
\end{eqnarray}
 $\tilde{\kappa}_m$ is usually larger than 1, which implies that the coefficient $\eta_2^*$ of the compensation term is non-negative.  The non-negative coefficient $\eta_2$ indicates that the weight $\mathbf{u}_t$ is ``over-descended" in SGD+Nesterov and needs to be compensated along the gradient direction.

It is important to note that the optimal step size for MaSS as in Eq.\ref{eq:optimalpara1} is exactly the same as the optimal one for SGD~\cite{ma2017power}. With such hyper-parameter selection given in Eq.\ref{eq:optimalpara2}, we have the following theorem for optimal convergence:
\begin{thm}[Acceleration of MaSS]\label{thm:optimal}
Under the same assumptions as in Theorem~\ref{thm:convergencemass}, if we set hyper-parameters in MaSS as in Eq.\ref{eq:optimalpara1},
then after $t$ iteration of MaSS with mini batch of size $m$, 
\begin{equation}
    \|\mathbf{w}_{t}-\mathbf{w}^*\|^2 \le C\cdot(1-1/\sqrt{\kappa_m\tilde{\kappa}_m})^{t},
\end{equation}
for some constant $C>0$  which depends on the initialization.
\end{thm}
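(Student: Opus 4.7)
The plan is to treat this as a direct consequence of Theorem~\ref{thm:convergencemass}: all that remains is to verify that the specific hyper-parameter triple in Eq.~\ref{eq:optimalpara1} lies in the feasible region carved out by the two conditions in Eq.~\ref{eq:condition}, and then to read off the rate from the $(1-\alpha)^t$ factor.

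First I would substitute $\eta^\ast = 1/L_m$, $\alpha^\ast = 1/\sqrt{\kappa_m\tilde{\kappa}_m}$ and $\delta^\ast = \eta^\ast/(\alpha^\ast \tilde{\kappa}_m)$ into the first condition $\alpha/\delta \le \mu$. A short computation gives $\alpha^\ast/\delta^\ast = (\alpha^\ast)^2 \tilde{\kappa}_m/\eta^\ast = L_m/\kappa_m = \mu$, so this condition is tight. Next I would check the second condition $\alpha\delta\tilde{\kappa}_m + \eta(\eta L_m - 2) \le 0$. Using $\eta^\ast L_m = 1$, the second summand equals $-1/L_m$, while $\alpha^\ast\delta^\ast\tilde{\kappa}_m = \eta^\ast = 1/L_m$, so the two terms cancel exactly. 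Thus both constraints are saturated, which is precisely what motivated the choice in Eq.~\ref{eq:optimalpara1} in the first place (maximizing $\alpha$ subject to both inequalities).

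Having verified feasibility, I would apply Theorem~\ref{thm:convergencemass} directly with $\alpha = \alpha^\ast$ to obtain
\begin{equation*}
\mathbb{E}\!\left[\|\mathbf{v}_t - \mathbf{w}^\ast\|_{H^{-1}}^2 + \tfrac{\delta^\ast}{\alpha^\ast}\|\mathbf{w}_t - \mathbf{w}^\ast\|^2\right] \le \bigl(1 - 1/\sqrt{\kappa_m\tilde{\kappa}_m}\bigr)^t \, \mathcal{F}_0 ,
\end{equation*}
where $\mathcal{F}_0$ depends only on the initialization. Dropping the (nonnegative) $\mathbf{v}$-term on the left and absorbing $\alpha^\ast/\delta^\ast = \mu$ together with $\mathcal{F}_0$ into a single constant $C$ yields $\|\mathbf{w}_t - \mathbf{w}^\ast\|^2 \le C\,(1-1/\sqrt{\kappa_m\tilde{\kappa}_m})^t$, as claimed.

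I do not anticipate any real obstacle here: the theorem is essentially a bookkeeping corollary of Theorem~\ref{thm:convergencemass}, and the only nontrivial step is the algebra showing that the parameter choice saturates both inequalities simultaneously. The one thing worth double-checking is the consistency of $\delta^\ast = \eta^\ast/(\alpha^\ast \tilde{\kappa}_m)$ with $\delta^\ast = \alpha^\ast/\mu$, which holds iff $(\alpha^\ast)^2 = \eta^\ast \mu/\tilde{\kappa}_m = \mu/(L_m\tilde{\kappa}_m) = 1/(\kappa_m\tilde{\kappa}_m)$; this identity is exactly the optimization argument that produced $\alpha^\ast$, so the three formulas in Eq.~\ref{eq:optimalpara1} are mutually compatible and no additional constant factor appears in the rate.
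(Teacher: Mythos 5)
Your proposal is correct and matches the paper's own route: Theorem~\ref{thm:optimal} is obtained exactly as a corollary of Theorem~\ref{thm:convergencemass} by checking that the triple in Eq.~\ref{eq:optimalpara1} saturates both conditions in Eq.~\ref{eq:condition} (the same optimization over $\alpha$ that the paper carries out in the hyper-parameter selection paragraph) and then reading off the $(1-\alpha^*)^t$ rate with $\alpha^* = 1/\sqrt{\kappa_m\tilde{\kappa}_m}$. Your algebraic verification, including the consistency of $\delta^* = \eta^*/(\alpha^*\tilde{\kappa}_m)$ with $\delta^* = \alpha^*/\mu$, is exactly the bookkeeping the paper leaves implicit.
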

\begin{remark}
With the optimal hyper-parameters in Eq.\ref{eq:optimalpara1}, the asymptotic convergence rate of MaSS is 
\begin{equation}\label{eq:rate}
    O(e^{-t/\sqrt{\kappa_m\tilde{\kappa}_m}}),
\end{equation} which is faster than the rate $O(e^{-t/\kappa_m})$ of SGD (see~\cite{ma2017power}), since $\kappa_m \ge \tilde{\kappa}_m$.
\end{remark}
\begin{remark}[MaSS Reduces to the Nesterov's method for full batch]
In the limit of full batch $m\to \infty$, we have $\kappa_m \to \kappa, \ \tilde{\kappa}_m \to 1$, the optimal parameter selection in Eq.\ref{eq:optimalpara2} reduces to 
\begin{equation}
  \eta_1^* = 1/L,\quad \gamma^* = (\sqrt{\kappa}-1)/(\sqrt{\kappa}+1), \quad \eta_2^* = 0.
\end{equation}
It is interesting to observe that, in the full batch (deterministic) scenario, the compensation term vanishes and $\eta_1^*$ and $\gamma^*$ are the same as those in Nesterov's method. Hence MaSS with the  optimal hyper-parameter selection reduces to Nesterov's method in the limit of full batch. 
Moreover, the convergence rate in Theorem~\ref{thm:optimal} reduces to $O(e^{-t/\sqrt{\kappa}})$, which is exactly the well-known convergence rate of Nesterov's method~\cite{nesterov2013introductory,bubeck2015convex}.
\end{remark}
\paragraph{Extension to Convex Case.}
First, we extend the definition of $L_1$ to convex functions,
$
L_1:= \inf \{c\in\mathbb{R}\  |\ \mathbb{E}[\|\tilde{\nabla}f(\mathbf{w})\|^2] \le 2c \left(f(\mathbf{w})-f^*\right)\},\nonumber
$
and keep the definition of $L_m$ the same as Eq.\ref{eq:lm}.
It can be shown that these definitions of $L_m$ are consistent with those in the quadratic setting. We assume that the smallest positive eigenvalue of Hessian $H(\mathbf{x})$ is lower bounded by $\mu>0$, for all $\mathbf{x}$.
\begin{thm}\label{thm:generalconvex}
 Suppose there exists a $\frac{1}{L}$-strongly convex and $\frac{1}{\mu}$-smooth non-negative function $g: \mathbb{R}^d \to \mathbb{R}$ such that $g(\mathbf{w}^*) = 0$ and $\langle\nabla g(\mathbf{x}),\nabla f(\mathbf{z})\rangle \ge (1-\epsilon)\langle \mathbf{x}-\mathbf{w}^*, \mathbf{z}-\mathbf{w}^*\rangle, \forall \mathbf{x},\mathbf{z}\in\mathbb{R}^d$, for some $\epsilon > 0$. In MaSS, if the hyper-parameters are set to be:
 \begin{equation}
     \eta =1/(2L_m), \quad \alpha = (1-\epsilon)/(2\kappa_m),\quad \delta = 1/(2L_m),
 \end{equation}
 then after $t$ iterations, there exists a constant $C$ such that,
$
     \mathbb{E}[f(\mathbf{w}_t)] \le C\cdot \big(1-\frac{1-\epsilon}{2\kappa_m}\big)^t.
$
\end{thm}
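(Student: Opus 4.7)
The plan is to mimic the Lyapunov argument used in the proof of Theorem~\ref{thm:convergencemass}, with the surrogate function $g$ playing the role of the quadratic $\tfrac{1}{2}\|\cdot-\mathbf{w}^*\|_{H^{-1}}^2$ and the excess loss $f(\mathbf{w})-f(\mathbf{w}^*)$ playing the role of $\tfrac{1}{2}\|\cdot-\mathbf{w}^*\|_H^2$. Accordingly, I would introduce the potential
\[
\Phi_t \;:=\; g(\mathbf{v}_t) \;+\; \tfrac{\delta}{\alpha}\bigl(f(\mathbf{w}_t)-f(\mathbf{w}^*)\bigr),
\]
and aim to prove the one-step contraction $\mathbb{E}_t[\Phi_{t+1}]\le (1-\alpha)\,\Phi_t$, where $\mathbb{E}_t$ is conditional on the iterates up to step $t$. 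Iterating this and using $\tfrac{\delta}{\alpha}\bigl(f(\mathbf{w}_t)-f^*\bigr)\le \Phi_t$ (since $g\ge 0$) then yields $\mathbb{E}[f(\mathbf{w}_t)-f^*]\le \tfrac{\alpha}{\delta}\Phi_0\,(1-\alpha)^t$, which is the claim with $\alpha=(1-\epsilon)/(2\kappa_m)$.

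I would bound the two summands of $\Phi_{t+1}$ separately. For $g(\mathbf{v}_{t+1})$, set $\mathbf{m}_t := (1-\alpha)\mathbf{v}_t+\alpha\mathbf{u}_t$, so that $\mathbf{v}_{t+1}=\mathbf{m}_t-\delta\tilde{\nabla} f(\mathbf{u}_t)$. Convexity of $g$ gives $g(\mathbf{m}_t)\le (1-\alpha)g(\mathbf{v}_t)+\alpha g(\mathbf{u}_t)$, and $\tfrac{1}{\mu}$-smoothness of $g$ gives
\[
g(\mathbf{v}_{t+1})\le g(\mathbf{m}_t)-\delta\langle\nabla g(\mathbf{m}_t),\tilde{\nabla} f(\mathbf{u}_t)\rangle+\tfrac{\delta^2}{2\mu}\|\tilde{\nabla} f(\mathbf{u}_t)\|^2.
\]
Taking $\mathbb{E}_t$, invoking the coupling hypothesis with $\mathbf{x}=\mathbf{m}_t,\mathbf{z}=\mathbf{u}_t$ to lower bound $\langle \nabla g(\mathbf{m}_t),\nabla f(\mathbf{u}_t)\rangle$, and using the extended bound $\mathbb{E}_t\|\tilde{\nabla}f(\mathbf{u}_t)\|^2\le 2L_m(f(\mathbf{u}_t)-f^*)$ gives an upper bound on $\mathbb{E}_t[g(\mathbf{v}_{t+1})]$ in terms of $\langle \mathbf{m}_t-\mathbf{w}^*,\mathbf{u}_t-\mathbf{w}^*\rangle$ and $f(\mathbf{u}_t)-f^*$. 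For $f(\mathbf{w}_{t+1})$, apply $L_m$-smoothness in expectation to the SGD step $\mathbf{w}_{t+1}=\mathbf{u}_t-\eta\tilde{\nabla}f(\mathbf{u}_t)$, together with the same extended $L_m$-bound; with $\eta=1/(2L_m)$ this yields $\mathbb{E}_t[f(\mathbf{w}_{t+1})-f^*]\le (f(\mathbf{u}_t)-f^*)-\tfrac{\eta}{2}\|\nabla f(\mathbf{u}_t)\|^2$.

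The next step is to rewrite $\mathbf{m}_t-\mathbf{w}^*$ in the cross term using the identity $(1+\alpha)\mathbf{u}_t=\alpha\mathbf{v}_t+\mathbf{w}_t$ (which follows directly from the last line of Eq.~\ref{eq:newrules}) to split it into a piece aligned with $\mathbf{u}_t-\mathbf{w}^*$ and a piece aligned with $\mathbf{w}_t-\mathbf{u}_t$. The piece along $\mathbf{u}_t-\mathbf{w}^*$ is controlled by the $\mu$-lower bound on the Hessian of $f$, producing terms of size $(1-\epsilon)\mu\|\mathbf{u}_t-\mathbf{w}^*\|^2$ and $(1-\epsilon)(f(\mathbf{u}_t)-f^*)$ via convexity, while the extra $\alpha g(\mathbf{u}_t)$ term is absorbed using $\tfrac{1}{\mu}$-smoothness of $g$ combined with the lower Hessian bound on $f$ to convert it into a fraction of $f(\mathbf{u}_t)-f^*$. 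After substituting $\eta=\delta=1/(2L_m)$ and $\alpha=(1-\epsilon)/(2\kappa_m)$, each coefficient of $f(\mathbf{u}_t)-f^*$, $\|\nabla f(\mathbf{u}_t)\|^2$, and $\|\mathbf{u}_t-\mathbf{w}^*\|^2$ that does not belong in the target $(1-\alpha)\Phi_t$ should turn out to be non-positive, exactly mirroring the cancellation in Theorem~\ref{thm:convergencemass}.

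The main obstacle I anticipate is the bookkeeping in this balancing step. In the quadratic proof the cross terms decouple cleanly because $\nabla g$ and $\nabla f$ are linear in the displacement from $\mathbf{w}^*$, whereas here the coupling $\langle\nabla g(\mathbf{x}),\nabla f(\mathbf{z})\rangle\ge (1-\epsilon)\langle\mathbf{x}-\mathbf{w}^*,\mathbf{z}-\mathbf{w}^*\rangle$ is only a one-sided inequality weighted by $1-\epsilon$, and the unwanted term $\alpha g(\mathbf{u}_t)$ does not vanish for free. I expect the factors of $\tfrac{1}{2}$ in the choices of $\eta$ and $\delta$, and the factor $1-\epsilon$ in the rate, to arise precisely from this balance: $\eta$ and $\delta$ must be conservative enough to absorb both the variance term of order $\delta^2 L_m/\mu$ and the residual $\alpha g(\mathbf{u}_t)$, while $\alpha$ must be small enough that the contraction still closes with the weakened $(1-\epsilon)$ inner-product lower bound.
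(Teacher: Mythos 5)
Your overall strategy (one-step contraction of a two-term potential, smoothness of $g$, the coupling hypothesis applied at $\mathbf{x}=(1-\alpha)\mathbf{v}_t+\alpha\mathbf{u}_t$, $\mathbf{z}=\mathbf{u}_t$, the bound $\mathbb{E}\|\tilde{\nabla}f(\mathbf{u}_t)\|^2\le 2L_m f(\mathbf{u}_t)$, and the decomposition of the cross term via $(1+\alpha)\mathbf{u}_t=\alpha\mathbf{v}_t+\mathbf{w}_t$) is exactly the paper's. The genuine gap is your choice of the second Lyapunov component: you take $\tfrac{\delta}{\alpha}\bigl(f(\mathbf{w}_t)-f^*\bigr)$, whereas the paper uses $\tfrac{\delta(1-\epsilon)}{2\alpha}\|\mathbf{w}_t-\mathbf{w}^*\|^2$ and converts to $f(\mathbf{w}_t)$ only once, at the very end, via $f(\mathbf{w}_t)\le\tfrac{L}{2}\|\mathbf{w}_t-\mathbf{w}^*\|^2$ (the factor $L$ is swallowed by the constant $C$). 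This is not a cosmetic difference: the coupling hypothesis and the polarization identity (Eq.\ref{eq:ax1}) produce terms in the \emph{squared distances} $\|\mathbf{w}_t-\mathbf{w}^*\|^2$ and $\|\mathbf{u}_t-\mathbf{w}^*\|^2$, so in your scheme you must repeatedly translate between function values and distances. Concretely, to absorb the cross-term contribution $\tfrac{\delta(1-\epsilon)(1-\alpha)}{2\alpha}\|\mathbf{w}_t-\mathbf{w}^*\|^2$ into $(1-\alpha)\beta\bigl(f(\mathbf{w}_t)-f^*\bigr)$ (with $\beta$ the coefficient in your potential) you need $\beta\ge\tfrac{\delta(1-\epsilon)}{\alpha\mu}=\tfrac{1}{\mu^2}$, using quadratic growth with constant $\mu$; but to cancel the positive term $\beta f(\mathbf{u}_t)$ carried over from the $f$-recursion against the only available negative quantity, $-\tfrac{\delta(1-\epsilon)(1+\alpha)}{2\alpha}\|\mathbf{u}_t-\mathbf{w}^*\|^2$ (plus the small $-\beta\eta\|\nabla f(\mathbf{u}_t)\|^2$, worth only $O(\beta/\kappa_m)$ via the PL bound), you need $\beta f(\mathbf{u}_t)\le\tfrac{\beta L}{2}\|\mathbf{u}_t-\mathbf{w}^*\|^2$ to fit under that coefficient, which forces $\beta\le\tfrac{1}{2\mu L}$. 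These two requirements are incompatible by a factor of order $\kappa$, so the advertised cancellation ``exactly mirroring Theorem~\ref{thm:convergencemass}'' does not occur, and no constant choice of $\beta$ closes the $(1-\alpha)$-contraction. In the paper's bookkeeping the carried term from the $\mathbf{w}$-step is itself $\|\mathbf{u}_t-\mathbf{w}^*\|^2$ (same units as the negative cross term), which is why the coefficients cancel exactly there.

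A secondary, smaller issue: your intermediate claim $\mathbb{E}_t[f(\mathbf{w}_{t+1})-f^*]\le\bigl(f(\mathbf{u}_t)-f^*\bigr)-\tfrac{\eta}{2}\|\nabla f(\mathbf{u}_t)\|^2$ is not justified for stochastic gradients under the stated assumptions: the smoothness remainder is $\tfrac{L\eta^2}{2}\mathbb{E}\|\tilde{\nabla}f(\mathbf{u}_t)\|^2$, which the available bound turns into an extra \emph{positive} multiple of $f(\mathbf{u}_t)$, not into half of the $-\eta\|\nabla f(\mathbf{u}_t)\|^2$ term (that would require a strong-growth-type bound of $\mathbb{E}\|\tilde{\nabla}f\|^2$ by $\|\nabla f\|^2$, which is not assumed). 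The fix for both problems is to follow the paper: keep $g(\mathbf{v}_t)+\tfrac{\delta(1-\epsilon)}{2\alpha}\|\mathbf{w}_t-\mathbf{w}^*\|^2$ as the potential, use convexity of $f$ in the $\mathbf{w}$-step (Eq.\ref{eq:ax3}), bound $\alpha g(\mathbf{u}_t)\le\tfrac{\alpha}{2\mu}\|\mathbf{u}_t-\mathbf{w}^*\|^2$, check that with $\eta=\delta=1/(2L_m)$ and $\alpha=(1-\epsilon)/(2\kappa_m)$ the coefficients of $\|\mathbf{u}_t-\mathbf{w}^*\|^2$ and $f(\mathbf{u}_t)$ vanish, and only then pass to $f(\mathbf{w}_t)$ by smoothness.
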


\section{Linear Scaling Rule and the Diminishing Returns}\label{sec:linear}
Based on  our analysis, we discuss the effect of selection of mini-batch size $m$. We show that
the domain of mini-batch size $m$ can be partitioned into three intervals by two critical points: $m^*_1 = \min(L_1/L,\tilde{\kappa}), m^*_2 = \max(L_1/L,\tilde{\kappa}).$
The three intervals/regimes are depicted in Figure~\ref{fig:illulinear}, and the detailed analysis is in Appendix~\ref{app:linear}.

{\bf Linear Scaling: $m < m^*_1$.} In this regime, we have $L_m \approx L_1/m, \kappa_m \approx \kappa_1/m$ and $\tilde{\kappa}_m \approx \tilde{\kappa}/m$. The optimal selection of hyper-parameters is approximated by:
\begin{equation}
    \eta^*(m) \approx m\cdot\eta^*(1),\alpha^*(m) \approx m\cdot\alpha^*(1),\delta^*(m) \approx m\cdot\delta^*(1),\nonumber
\end{equation}
and the convergence rate in Eq.\ref{eq:rate} is approximately $O(e^{-m\cdot t/\sqrt{\kappa_1\tilde{\kappa}}})$. This indicates 
linear gains in convergence when $m$ increases.

 \begin{wrapfigure}{r}{0.45\textwidth} 
\vspace{-20pt}
  \begin{center}
    \includegraphics[width=0.45\textwidth]{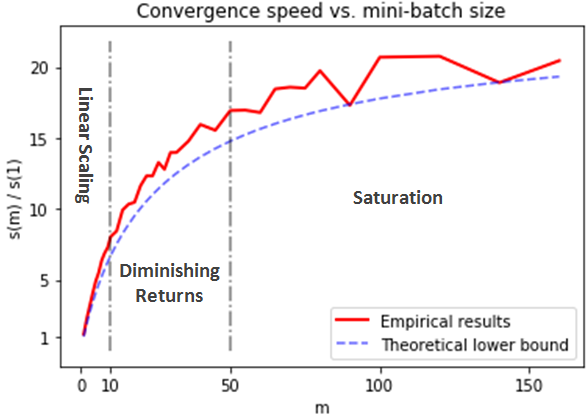}\vspace{-6pt}
    \caption{Convergence speed per iteration $s(m)$. Larger $s(m)$ indicates faster convergence. Red solid curve: experimental results.  Blue dash curve: theoretical lower bound $\sqrt{\kappa_m\tilde{\kappa}_m}$. Critical mini-batch sizes: $m^*_1 \approx 10, m^*_2 \approx 50.$}
    \label{fig:linear}
  \end{center}
  \vspace{-40pt}
\end{wrapfigure}
In the linear scaling regime, the hyper-parameter selections follow a {\it Linear Scaling Rule} (LSR):
 { When the mini-batch size is multiplied by $k$, multiply {\it all} hyper-parameters $(\eta,\alpha,\delta)$ by $k$}. This parallels the linear scaling rule for SGD which is an accepted practice for training neural networks~\cite{goyal2017accurate}.
 
 Moreover, increasing $m$ results in linear gains in the convergence speed, i.e., one MaSS iteration with mini-batch size $m$ is almost as effective as $m$ MaSS iterations with mini-batch size 1.

\paragraph{Diminishing Returns: $m\in [m^*_1,m^*_2)$} In this regime, increasing $m$ results in sublinear gains in convergence speed. One MaSS iteration with mini-batch size $m$ is less effective than $m$ MaSS iterations with mini-batch size 1.

\paragraph{Saturation: $m \ge m^*_2$.} One MaSS iteration with mini-batch size $m$ is nearly as effective (up to a multiplicative factor of 2) as one iteration with full gradient.

 
This three regimes partition is  different from that for SGD~\cite{ma2017power}, where only linear scaling and saturation regimes present.
An empirical verification of the dependence of the convergence speed on $m$ is shown in Figure~\ref{fig:linear}. See the setup in Appendix~\ref{app:linear}.

\section{Empirical Evaluation}\label{sec:experiment}
 \paragraph{Synthetic Data.}

 We empirically verify the non-acceleration of SGD+Nesterov and the fast convergence of MaSS on synthetic data. Specifically, we optimize the quadratic function $\frac{1}{2n}\sum_{i} (\mathbf{x}_i^T\mathbf{w}-y_i)^2$, where the dataset $\{(\mathbf{x}_i,y_i)\}_{i=1}^n$ is generated by the component decoupled model described in Section~\ref{sec:nesterov}. We compare the convergence behavior of SGD+Nesterov with SGD, as well as our proposed method, MaSS, and several other methods: SGD+HB, ASGD~\cite{jain2017accelerating}. We select the best hyper-parameters from dense grid search for SGD+Nesterov (step-size and momentum parameter), SGD+HB (step-size and momentum parameter) and SGD (step-size). For MaSS, we do not tune the hyper-parameters but use the hyper-parameter setting suggested by our theoretical analysis in Section~\ref{sec:mass}; For ASGD, we use the setting provided by~\cite{jain2017accelerating}.
 
\begin{wrapfigure}{r}{0.45\textwidth} 
\vspace{-25pt}
  \begin{center}
    \includegraphics[width=0.43\textwidth]{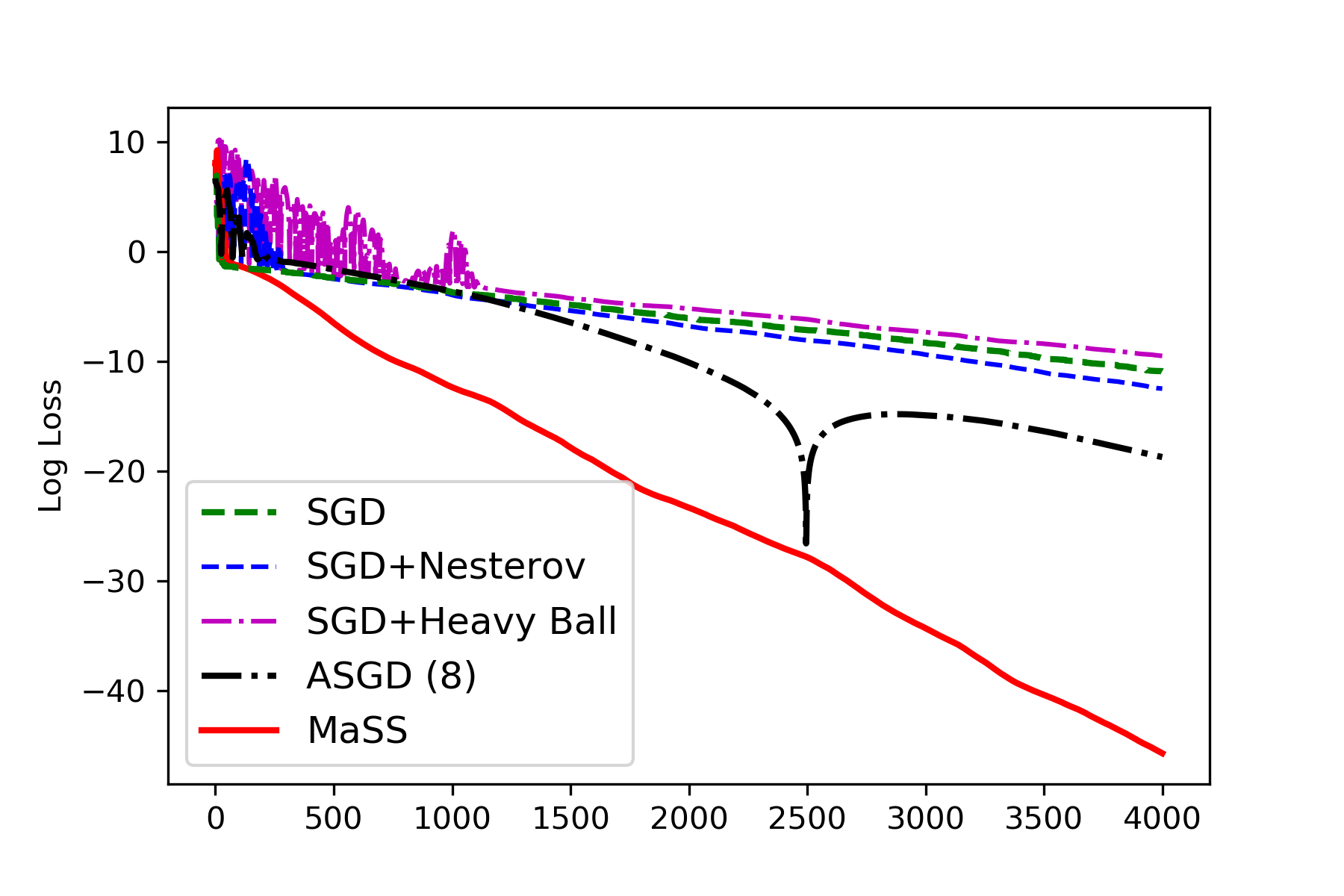}\vspace{-8pt}
    \caption{Comparison on component decoupled data. Hyper-parameters: we use optimal parameters for SGD+Nesterov, SGD+HB and SGD; the setting in~\cite{jain2017accelerating} for ASGD;  Eq.~\ref{eq:optimalpara1} for MaSS.}\label{fig:synthetic_data_2}
  \end{center}
  \vspace{-26pt}
\end{wrapfigure}
Fig.~\ref{fig:synthetic_data_2} shows the convergence behaviors of these algorithms on the setting of $\sigma_1^2 = 1, \sigma_2^2 = 1/2^9$. We observe that the fastest convergence of SGD+Nesterov is almost identical to that of SGD, indicating the non-acceleration of SGD+Nesterov. We also observe that our proposed method, MaSS, clearly outperforms the others. 
In Appendix~\ref{app:moresyn}, we provide additional experiments on more settings of the component decoupled data, and Gaussian distributed data. We also show the divergence of SGD+Nesterov with the same step size as SGD and MaSS in Appendix~\ref{app:moresyn}.

\paragraph{Real data: MNIST and CIFAR-10.}
We compare the optimization performance of SGD, SGD+Nesterov and MaSS on the following tasks:  classification of MNIST with a fully-connected network (FCN), classification of CIFAR-10 with a convolutional neural network (CNN) and Gaussian kernel regression on MNIST. See detailed description of the architectures in Appendix~\ref{app:archi}. In all the tasks and for all the algorithms, we select the best hyper-parameter setting over dense grid search, except that we fix the momentum parameter $\gamma=0.9$ for both SGD+Nesterov and MaSS, which is typically used in practice.  All algorithms are implemented with mini batches of size $64$ for neural network training.

Fig.~\ref{fig:fcntrain} shows the training curves of MaSS, SGD+Nesterov and SGD, which indicate the
fast convergence of MaSS on real tasks, including the non-convex optimization problems on neural networks.
\begin{figure}[h]\vspace{-12pt}
    \centering
    \includegraphics[width=0.32\textwidth]{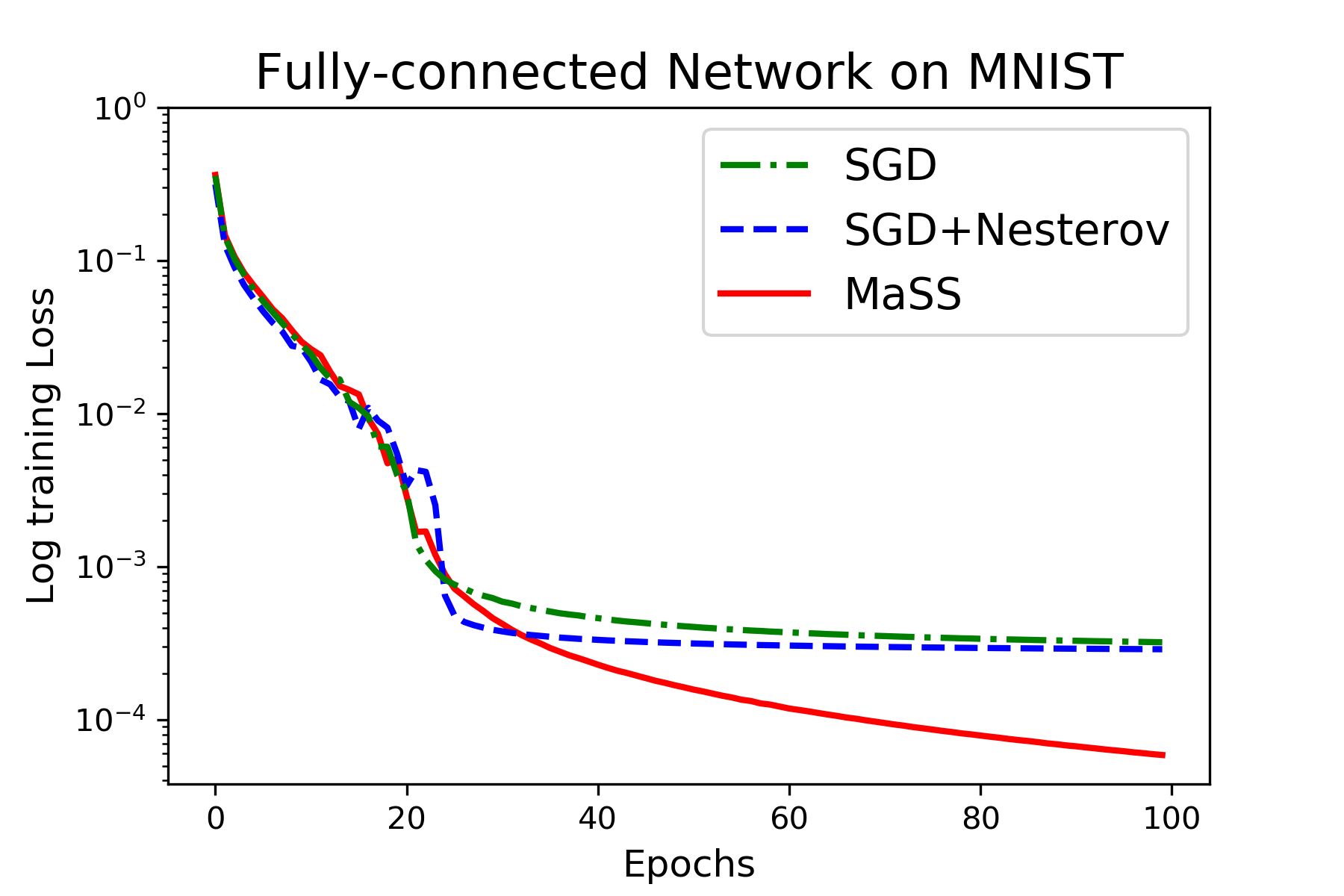}
    \includegraphics[width=0.32\textwidth]{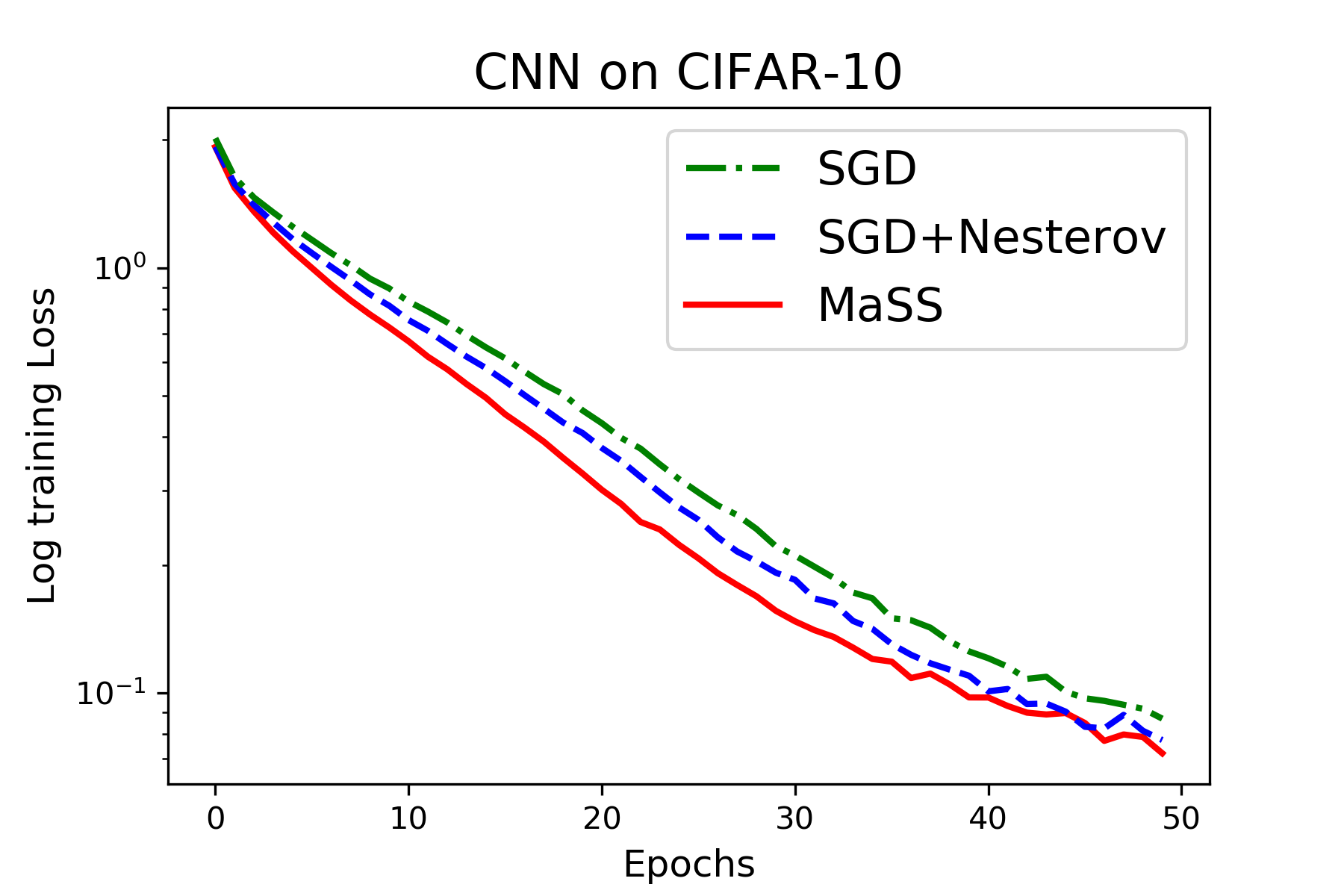}
    \includegraphics[width=0.32\textwidth]{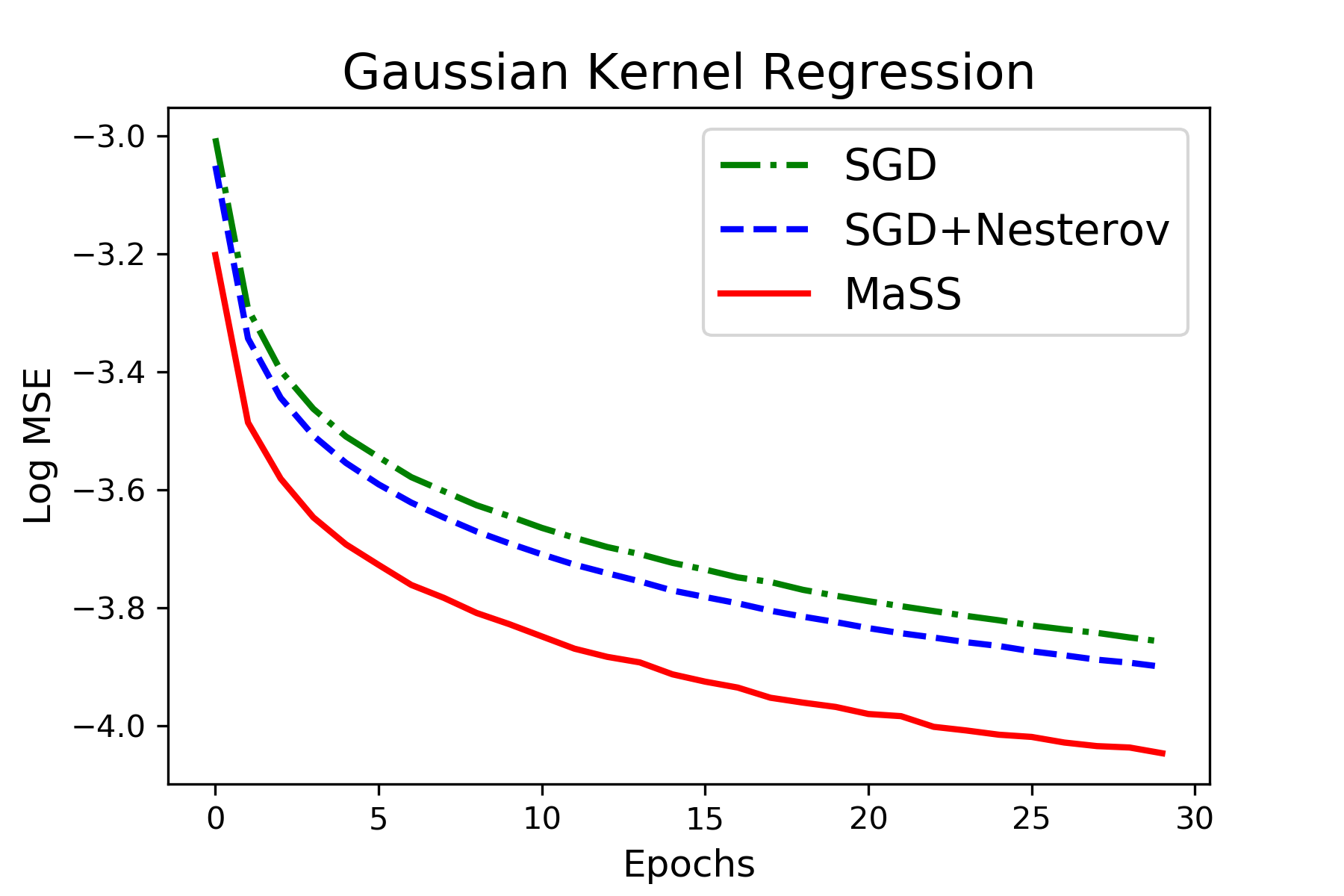}\vspace{-4pt}
    \caption{Comparison of SGD, SGD+Nesterov and MaSS on (left) fully-connected neural network, (middle)   convolutional neural network, and (right) kernel regression.}
    \label{fig:fcntrain}\vspace{-0pt}
\end{figure}

{\bf Test Performance.} We show that the solutions found by MaSS have good generalization performance. We evaluate the classification accuracy of MaSS, and compare with SGD, SGD+Nesterov and Adam,  on different modern neural networks: CNN and ResNet~\cite{he2016deep}. See description of the architectures in Appendix~\ref{app:archi}. In the training processes, we follow the standard protocol of {\it data augmentation} and {\it reduction of learning rate}, which are typically used to achieve state-of-the-art results in neural networks. 
In each task, we use the same initial learning rate for MaSS, SGD and SGD+Nesterov, and run the same number of epochs ($150$ epochs for CNN and $300$ epochs for ResNet-32). 
Detailed experimental settings are deferred to Appendix~\ref{app:generalize}. 

\begin{wraptable}{r}{8.5cm}
\label{tab:gp}
\vspace{-17pt}
\begin{center}
\begin{threeparttable}[b]
\begin{sc}
\scalebox{0.75}{{\renewcommand{\arraystretch}{1.2}
\begin{tabular}{!{\VRule[1pt]}c!{\VRule[1pt]}!{\VRule[1pt]}c!{\VRule[1pt]} ccc!{\VRule[1pt]}!{\VRule[1pt]}c!{\VRule[1pt]}}  
\specialrule{1.5pt}{0pt}{0pt}
 & $\eta$  & SGD & SGD+Nesterov & MaSS & Adam\tnote{$\dagger$} \\
\specialrule{1.5pt}{0pt}{0pt}
\multirow{2}{*}{CNN}   & 0.01  & 81.40\% & 83.06\%  &\bf{83.97\%} & \multirow{2}{*}{82.65\%} \\
                      \cline{2-5}
                       & 0.3 &   82.41\%  & 75.79\%\tnote{$\sharp$}  &  {\bf 84.48\%}       &    \\
\specialrule{1pt}{0pt}{0pt}
\multirow{2}{*}{ResNet-32} & 0.1  & 91.92\% & 92.60\%  &\bf{92.77\%} & \multirow{2}{*}{92.27\%} \\
                         \cline{2-5}
                           & 0.3  &  92.51\%  &  91.13\%\tnote{$\sharp$}     &  {\bf 92.71\%}   & \\
\specialrule{1pt}{0pt}{0pt}
\end{tabular}}
}
\end{sc}
   \begin{tablenotes}\footnotesize
   \item[$\sharp$] Average of 3 runs that converge. Some runs diverge.
   \item[$\dagger$] Adam uses initial step size $0.001$.
   \end{tablenotes}\end{threeparttable}
\end{center}\vspace{-12pt}
\caption{ Classification accuracy on test set of CIFAR-10.}\vspace{-12pt}
\vskip -0.1in
\end{wraptable}
Table~\ref{tab:gp} compares the classification accuracy of these algorithms on the test set of CIFAR-10 (average of 3 independent runs). 

 We observe that MaSS produces the best test performance. We also note that increasing initial learning rate may improves  performance of MaSS and SGD, but degrades that of SGD+Nesterov. Moreover, in our experiment, \sgd with large step size $\eta=0.3$ diverges in $5$ out of $8$ runs on CNN and $2$ out of $5$ runs on ResNet-32 (for random initialization), while  MaSS and SGD converge on every run.

\section*{Acknowledgements}
This research was in part supported by NSF funding and a Google Faculty Research Award.  GPUs donated by Nvidia were used for the experiments. We thank Ruoyu Sun for helpful comments concerning convergence rates. We thank Xiao Liu for helping with the empirical evaluation of our proposed method.

\nocite{langley00}

\bibliography{example_paper}
\bibliographystyle{plain}


\newpage
\appendix

\section{Pseudocode for MaSS}\label{app:code}
 \begin{algorithm}
\caption{: {\it MaSS}--Momentum-added Stochastic Solver }
\label{alg:mass}
\begin{algorithmic}
\STATE {{\bf Require}: Step-size $\eta_1$, secondary step-size $\eta_2$, acceleration parameter $\gamma \in(0,1)$.}
\STATE { {\bf Initialize}:  $\mathbf{u}_0 = \mathbf{w}_0$.}
\WHILE { not converged}
 \STATE {$\mathbf{w}_{t+1} \leftarrow \mathbf{u}_t - \eta_1 \tilde{\nabla}f(\mathbf{u}_t)$, } 
 \STATE{$\mathbf{u}_{t+1} \leftarrow (1+\gamma)\mathbf{w}_{t+1} - \gamma \mathbf{w}_{t} - \eta_2 \tilde{\nabla}f(\mathbf{u}_t). $}
\ENDWHILE
\STATE {{\bf Output}: weight $\mathbf{w}_t$.}
\end{algorithmic}
\end{algorithm}
Note that the proposed algorithm initializes the variables $\mathbf{w}_0$ and $\mathbf{u}_0$ with the same vector, which could be randomly generated. 

As discussed in section~\ref{sec:mass}, MaSS can be equivalently implemented using the following update rules:
\begin{subequations}
\begin{eqnarray}
\mathbf{w}_{t+1} &\leftarrow& \mathbf{u}_t - \eta \tilde{\nabla}f(\mathbf{u}_t),\\
\mathbf{v}_{t+1} &\leftarrow& (1-\alpha)\mathbf{v}_t + \alpha \mathbf{u}_t - \delta \tilde{\nabla}f(\mathbf{u}_t),\\
\mathbf{u}_{t+1} &\leftarrow& \frac{\alpha}{1+\alpha}\mathbf{v}_{t+1} + \frac{1}{1+\alpha}\mathbf{w}_{t+1}.
\end{eqnarray}
\end{subequations}
In this case, variables $\mathbf{u}_0$, $\mathbf{v}_0$ and $\mathbf{w}_0$ should be initialized with the same vector.

There is a bijection between the hyper-parameters ($\eta_1,\eta_2,\gamma$) and ($\eta,\alpha,\delta$), which is given by:
\begin{equation}
    \gamma = \frac{1-\alpha}{1+\alpha}, \quad \eta_1 = \eta,   \quad \eta_2 = \frac{\eta-\alpha\delta}{1+\alpha}.
\end{equation}
\section{Additional Preliminaries}
\subsection{Strong Convexity and Smoothness of Functions}\label{app:stconvex}
\begin{defi}[Strong Convexity]
 A differentiable function $f:\mathbb{R}^d \rightarrow \mathbb{R}$ is $\mu$-strongly convex ($\mu>0$), if 
 \begin{equation}
     f(\mathbf{x}) \ge f(\mathbf{z}) + \langle \nabla f(\mathbf{z}), \mathbf{x}-\mathbf{z} \rangle + \frac{\mu}{2}\|\mathbf{x}-\mathbf{z}\|^2, \quad \forall \mathbf{x},\mathbf{z}\in \mathbb{R}^d.
 \end{equation}
\end{defi}

\begin{defi}[Smoothness]
 A differentiable function $f:\mathbb{R}^d \rightarrow \mathbb{R}$ is $L$-smooth ($L>0$), if 
 \begin{equation}
     f(\mathbf{x}) \le f(\mathbf{z}) + \langle \nabla f(\mathbf{z}), \mathbf{x}-\mathbf{z} \rangle + \frac{L}{2}\|\mathbf{x}-\mathbf{z}\|^2, \quad \forall \mathbf{x},\mathbf{z}\in \mathbb{R}^d.
 \end{equation}
\end{defi}

\subsection{Automatic Variance Reduction}\label{subsec:appavr}
In the interpolation setting, one can write the square loss as
\begin{equation}\label{eq:lossH}
f(\mathbf{w}) = \frac{1}{2}(\mathbf{w} - \mathbf{w}^*)^T H (\mathbf{w} - \mathbf{w}^*) = \frac{1}{2}\|\bw - \mathbf{w}^*\|^2_{H}.
\end{equation}

A key property of interpolation is that the variance of the stochastic gradient of  decreases to zero as the weight $\mathbf{w}$ approaches an optimal solution $\mathbf{w}^*$. 

\begin{prop}[Automatic Variance Reduction]\label{prop:sg}
For the square loss function  $f$ in the interpolation setting, the stochastic gradient at an arbitrary point $\mathbf{w}$ can be written as 
\begin{equation}\label{eq:avr1}
\tilde{\nabla}_m f(\mathbf{w}) = \tilde{H}_m(\mathbf{w} - \mathbf{w}^*) = \tilde{H}_m\boldsymbol{\epsilon}.
\end{equation}
Moreover, the variance of the stochastic gradient 
\begin{equation}
    \mathrm{Var}[\tilde{\nabla}_mf(\mathbf{w})] \preceq \|\boldsymbol{\epsilon}\|^2\mathbb{E}[(\tilde{H}_m-H)^2].
\end{equation}
\end{prop}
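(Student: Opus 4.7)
The plan is to establish both claims directly from the structure of the square loss under interpolation. First I would expand the gradient of a single summand $f_i(\mathbf{w}) = \tfrac{1}{2}(\mathbf{x}_i^T\mathbf{w}-y_i)^2$, obtaining $\nabla f_i(\mathbf{w}) = (\mathbf{x}_i^T\mathbf{w}-y_i)\mathbf{x}_i$. Interpolation guarantees the existence of some $\mathbf{w}^*$ with $y_i = \mathbf{x}_i^T\mathbf{w}^*$ on every training point, so the residual rewrites as $\mathbf{x}_i^T(\mathbf{w}-\mathbf{w}^*) = \mathbf{x}_i^T\boldsymbol{\epsilon}$ and the single-sample gradient becomes $\mathbf{x}_i\mathbf{x}_i^T\boldsymbol{\epsilon}$. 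Averaging this identity over a size-$m$ mini-batch of samples $\tilde{\mathbf{x}}_i$ then gives $\tilde{\nabla}_m f(\mathbf{w}) = \tilde{H}_m\boldsymbol{\epsilon}$, which is the first assertion.

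For the variance bound, I would note that $\tilde{H}_m$ is an unbiased estimator of $H$ under uniform sampling, so $\mathbb{E}[\tilde{\nabla}_m f(\mathbf{w})] = H\boldsymbol{\epsilon} = \nabla f(\mathbf{w})$. Subtracting this from the first identity yields the centered stochastic gradient $(\tilde{H}_m - H)\boldsymbol{\epsilon}$, and hence the covariance matrix
$$\mathrm{Var}[\tilde{\nabla}_m f(\mathbf{w})] = \mathbb{E}\bigl[(\tilde{H}_m-H)\,\boldsymbol{\epsilon}\boldsymbol{\epsilon}^T\,(\tilde{H}_m-H)\bigr],$$
where symmetry of $\tilde{H}_m - H$ is used to move the second factor to the right.

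The final step is a deterministic PSD inequality. For any symmetric matrix $A$ and any test vector $\mathbf{v}$, Cauchy--Schwarz gives $\mathbf{v}^T A\boldsymbol{\epsilon}\boldsymbol{\epsilon}^T A\mathbf{v} = (\boldsymbol{\epsilon}^T A\mathbf{v})^2 \le \|\boldsymbol{\epsilon}\|^2\|A\mathbf{v}\|^2 = \|\boldsymbol{\epsilon}\|^2\,\mathbf{v}^T A^2\mathbf{v}$, so $A\boldsymbol{\epsilon}\boldsymbol{\epsilon}^T A \preceq \|\boldsymbol{\epsilon}\|^2 A^2$. Applying this pointwise with $A = \tilde{H}_m - H$ and then taking expectation (which preserves the L\"owner ordering) yields $\mathrm{Var}[\tilde{\nabla}_m f(\mathbf{w})] \preceq \|\boldsymbol{\epsilon}\|^2\mathbb{E}[(\tilde{H}_m - H)^2]$, the claimed bound.

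I do not anticipate any real obstacles: the argument is an algebraic rewriting of the gradient followed by a one-line Cauchy--Schwarz estimate. The only subtlety worth flagging is that $\mathrm{Var}$ here denotes the covariance matrix rather than a scalar variance, so the inequality must be read throughout in the positive-semidefinite sense; once this convention is fixed, the proof reduces to the pointwise rank-one bound above.
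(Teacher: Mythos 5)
Your proof is correct and follows exactly the argument the paper intends (the paper states this proposition without spelling out details): rewrite each per-sample gradient as $\mathbf{x}_i\mathbf{x}_i^T\boldsymbol{\epsilon}$ using $y_i=\mathbf{x}_i^T\mathbf{w}^*$, average over the mini-batch, use unbiasedness of $\tilde{H}_m$ to center, and bound the rank-one matrix $(\tilde{H}_m-H)\boldsymbol{\epsilon}\boldsymbol{\epsilon}^T(\tilde{H}_m-H)\preceq\|\boldsymbol{\epsilon}\|^2(\tilde{H}_m-H)^2$ before taking expectations. Your closing remark that $\mathrm{Var}$ must be read as the covariance matrix in the L\"owner order is exactly the right convention and matches the paper's usage.
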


Since $\mathbb{E}[(\tilde{H}_m-H)^2]$ is independent of $\mathbf{w}$,
the above proposition unveils a linear dependence of variance of stochastic gradient on the norm square of error $\boldsymbol{\epsilon}$. 
This observation underlies exponential convergence of SGD in certain convex settings~\cite{strohmer2009randomized,moulines2011non,schmidt2013fast,needell2014stochastic,ma2017power}.

\section{Proof of Theorem~\ref{thm:divsnag}}\label{app:pfsnag}
The key proof technique is to consider the asymptotic behavior of \sgd in the decoupled model of data when the condition number becomes large. 

{\it Notations and proof setup.} 
Recall that the square loss function based on the component decoupled data $\mathcal{D}$, define in Eq.\ref{eq:syndata}, is in the interpolation regime, then for SGD+Nesterov, we have the recurrence relation
\begin{equation}\label{eq:recure}
\mathbf{w}_{t+1} = (1+\gamma)(1-\eta \tilde{H})\mathbf{w}_t - \gamma(1-\eta \tilde{H})\mathbf{w}_{t-1},
\end{equation}
where $\tilde{H} = \mathrm{diag}((x^{[1]})^2,(x^{[2]})^2)$.  It is important to note that each component of $\mathbf{w}_t$ evolves independently, due to the fact that $\tilde{H}$ is diagonal. 

With $\boldsymbol{\epsilon}:=\mathbf{w}-\mathbf{w}^*$, we define for each component $j=1,2$ that
\begin{equation}\label{eq:phi_defi}
\Phi_{t+1}^{[j]} := \mathbb{E}\left[\left(\begin{array}{c}
\epsilon_{t+1}^{[j]}\\
\epsilon_{t}^{[j]}
\end{array}\right)\otimes\left(\begin{array}{c}
\epsilon_{t+1}^{[j]}\\
\epsilon_{t}^{[j]}
\end{array}\right) \right], \quad j = 1,2,
\end{equation}
where $\epsilon^{[j]}$ is the $j$-th component of vector $\boldsymbol{\epsilon}$.

The recurrence relation in Eq.\ref{eq:recure} can be rewritten as 
\begin{equation}\label{eq:phi_recur}
\Phi_{t+1}^{[j]} = \mathcal{B}^{[j]}\Phi_{t}^{[j]},
\end{equation} with 
{\small
\begin{eqnarray}
&&\mathcal{B}^{[j]} = \nonumber\\
 && \mathbb{E}\left(\begin{array}{cccc}
(1+\gamma)^2(1-\eta(\tilde{x}^{[j]})^2)^2 & -\gamma(1+\gamma)(1-\eta(\tilde{x}^{[j]})^2)^2 & -\gamma(1+\gamma)(1-\eta(\tilde{x}^{[j]})^2)^2 & \gamma^2(1-\eta(\tilde{x}^{[j]})^2)^2 \\
(1+\gamma)(1-\eta(\tilde{x}^{[j]})^2) & 0 & -\gamma(1-\eta(\tilde{x}^{[j]})^2) & 0 \\
(1+\gamma)(1-\eta(\tilde{x}^{[j]})^2) & -\gamma(1-\eta(\tilde{x}^{[j]})^2) & 0 & 0 \\
1 & 0 & 0 & 0
\end{array} \right)\nonumber\\
&&=\left(\begin{array}{cccc}
(1+\gamma)^2\mathcal{A}^{[j]} & -\gamma(1+\gamma)\mathcal{A}^{[j]} & -\gamma(1+\gamma)\mathcal{A}^{[j]} & \gamma^2\mathcal{A}^{[j]} \\
(1+\gamma)(1-\eta\sigma_j^2) & 0 & -\gamma(1-\eta\sigma_j^2) & 0 \\
(1+\gamma)(1-\eta\sigma_j^2) & -\gamma(1-\eta\sigma_j^2) & 0 & 0 \\
1 & 0 & 0 & 0
\end{array} \right)\nonumber
\end{eqnarray}
}
where $\mathcal{A}^{[j]} := (1-\eta\sigma_j^2)^2+5(\eta\sigma_j^2)^2$.

For the ease of analysis, we define $u := 1-\gamma \in (0,1]$ and $t_j := \eta \sigma_j^2, j = 1,2$. Without loss of generality, we assume $\sigma_1^2 = 1$ in this section. In this case, $t_1 = \eta$ and $t_2 = \eta/\kappa$, where $\kappa$ is the condition number.   

Elementary analysis gives the eigenvalues of $\mathcal{B}^{[j]}$:
\begin{subequations}
\begin{eqnarray}
    \lambda_1 &=& (1-u)(1-t_j),\nonumber\\
    \lambda_2 &=& T_0 -\frac{2^{1/3}}{3}\frac{T_1}{\left(T_2 + \sqrt{T_2^2+4T_1^3}\right)^{1/3}} + \frac{1}{3\cdot 2^{1/3}}\left(T_2 + \sqrt{T_2^2+4T_1^3}\right)^{1/3},\nonumber\\
    \lambda_3 &=& T_0 + (1-i\sqrt{3})\frac{2^{1/3}}{6}\frac{T_1}{\left(T_2 + \sqrt{T_2^2+4T_1^3}\right)^{1/3}} + (1+i\sqrt{3})\frac{1}{6\cdot 2^{1/3}}\left(T_2 + \sqrt{T_2^2+4T_1^3}\right)^{1/3},\nonumber\\
    \lambda_4 &=&T_0 + (1+i\sqrt{3})\frac{2^{1/3}}{6}\frac{T_1}{\left(T_2 + \sqrt{T_2^2+4T_1^3}\right)^{1/3}} + (1-i\sqrt{3})\frac{1}{6\cdot 2^{1/3}}\left(T_2 + \sqrt{T_2^2+4T_1^3}\right)^{1/3},\nonumber
\end{eqnarray}
\end{subequations}
where 
\begin{eqnarray}
T_0 &=& 1-u+\frac{u^2}{3} -\frac{7}{3}t_j + \frac{7}{3}ut_j-\frac{2}{3}u^2t_j, \nonumber\\
T_1 &=& -(-3+3u-u^2+7t_j-7ut_j+2u^2t_j)^2+3(3-6u+4u^2-u^3-10t_j+20ut_j-13u^2t_j+3u^3t_j),\nonumber\\
T_2 &=& 9u^4-9u^5+2u^6-72u^2t_j+144u^3t_j-141u^4t_j+69u^5t_j-12u^6t_j+252t_j^2-756ut_j^2\nonumber\\
& &+1185u^2t_j^2-1110u^3t_j^2+615u^4t_j^2-186u^5t_j^2+24u^6t_j^2-686t_j^3+2058ut_j^3-2646u^2t_j^3\nonumber\\
& &+1862u^3t_j^3-756u^4t_j^3+168u^5t_j^3-16u^6t_j^3.\nonumber
\end{eqnarray}

{\it Proof idea.} 
For the two-dimensional component decoupled data, we have 
\begin{equation}
    \mathbb{E}\left[f(\mathbf{w}_t)\right] -f(\mathbf{w}^*) \ge \sigma_2^2\mathbb{E}\left[\|\mathbf{w}_t-\mathbf{w}^*\|^2\right] = \sigma_2^2\mathbb{E}\left[\|\boldsymbol{\epsilon}_t\|^2\right].
\end{equation}
By definition of $\Phi$ in Eq.\ref{eq:phi_defi}, we can see that the convergence rate is lower bounded by the convergence rates of the sequences $\{\|\Phi_t^{[j]}\|\}_t$. By the relation Eq.\ref{eq:phi_recur}, we have that the convergence rate of the sequence $\{\|\Phi_t\|\}_t$ is controlled by the magnitude of the top eigenvalue $\lambda_{max}$ of $\mathcal{B}$, if $\Phi_t$ has non-zero component along the eigenvector of $\mathcal{B}$ with eigenvalue $\lambda_{max}(\mathcal{B})$. Specifically, if $|\lambda_{max}| >1$, $\|\Phi_t^{[j]}\|$ grows at a rate of $|\lambda_{max}|^t$, indicating the divergence of SGD+Nesterov; if $|\lambda_{max}| < 1$, then $\|\Phi_t^{[j]}\|$ converges at a rate of $|\lambda_{max}|^t$. 

In the following,
We use the eigen-systems of matrices $\mathcal{B}^{[j]}$, especially the top eigenvalue, to analyze the convergence behavior of SGD+Nesterov with any hyper-parameter setting. We show that, for any choice of hyper-parameters (i.e., step-size and momentum parameter), at least one of the following statements must holds:
\begin{itemize}
    \item $\mathcal{B}^{[1]}$ has an eigenvalue larger than 1.
    \item $\mathcal{B}^{[2]}$ has an eigenvalue of magnitude $1-O\left(1/\kappa\right)$.
\end{itemize}
This is formalized in the following two lemmas.
\begin{lemma}\label{lemma:largerthan1}
  For any $u\in(0,1]$, if step size \begin{equation}\label{eq:step-size-diver}\eta > \eta_0(u) := \frac{-3-2u+2u^2+\sqrt{9+84u-164u^2+100u^3-20u^4}}{2(9-15u+6u^2)}, \end{equation} then, $\mathcal{B}^{[1]}$ has an eigenvalue larger than 1.
\end{lemma}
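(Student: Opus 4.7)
Since $p(\lambda) := \det(\lambda I - \mathcal{B}^{[1]})$ is a monic polynomial of degree $4$ with $p(\lambda)\to+\infty$ as $\lambda\to+\infty$, it suffices to show that $p(1) < 0$ whenever $\eta > \eta_0(u)$; the intermediate value theorem then produces a real eigenvalue of $\mathcal{B}^{[1]}$ in $(1,+\infty)$. The plan is therefore to compute $p(1) = \det(I-\mathcal{B}^{[1]})$ in closed form and locate its sign change.

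First, I would set $\sigma_1^2 = 1$ so that $t_1 = \eta$, write $I-\mathcal{B}^{[1]}$ explicitly with the shorthands $\gamma = 1-u$, $\beta = 1-\eta$, and $A := (1-\eta)^2 + 5\eta^2$, and compute its determinant. The key simplification is that rows $2$ and $3$ of $\mathcal{B}^{[1]}$ are symmetric twins (reflecting the symmetry of the tensor $\Phi$ in~\eqref{eq:phi_defi}), so the elementary row operation ``row $2$ minus row $3$'' produces a row of the form $(1-\gamma\beta)\cdot(0,1,-1,0)$. Pulling out this scalar and expanding along the resulting sparse row, together with the last row which has only two nonzero entries, collapses the $4\times 4$ determinant to a short $2\times 2$ computation. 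Since we already know $\lambda_1 = (1-u)(1-\eta)$ is an eigenvalue, the factor $1-\gamma\beta = 1-\lambda_1 = u+\eta-u\eta$ extracted above accounts for the factorization $p(\lambda) = (\lambda-\lambda_1)\,q(\lambda)$ evaluated at $\lambda=1$. On $(u,\eta)\in(0,1]\times(0,\infty)$ this factor is strictly positive, so $\mathrm{sign}(p(1))$ coincides with the sign of the remaining polynomial, call it $Q(\eta;u)$.

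Second, I would verify that $Q(\eta;u)$ is a quadratic in $\eta$ of the form
\[
 Q(\eta;u) \;=\; -(9-15u+6u^2)\,\eta^2 \;-\; (3+2u-2u^2)\,\eta \;+\; C(u),
\]
with $C(u) > 0$, arranged so that the discriminant equals the quartic $9+84u-164u^2+100u^3-20u^4$ appearing in~\eqref{eq:step-size-diver}. On $u\in(0,1)$ the leading coefficient $-3(1-u)(3-2u)$ is strictly negative, so $Q(\cdot;u)$ is a downward-opening parabola that is positive at $\eta=0$; it therefore has one negative and one positive real root, and $Q(\eta;u)<0$ for all $\eta$ strictly greater than the positive root. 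The quadratic formula identifies this positive root as $\eta_0(u)$. The boundary case $u=1$ makes the leading coefficient vanish and $\eta_0(u)$ a $0/0$ limit; I would handle it separately by L'H\^opital or by solving the now-linear $Q(\cdot;1)$ directly, and verify continuity of the conclusion. Combining the two cases yields $p(1)<0$ on all of $\{(u,\eta): u\in(0,1],\ \eta>\eta_0(u)\}$, which finishes the lemma via the opening IVT reduction.

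The principal obstacle is the symbolic algebra in the first step: the term $A = 1-2\eta+6\eta^2$ enters four entries of the first row, so one must carefully track cross-terms to confirm that, after cancelling the known factor $1-\lambda_1$, the residual polynomial is exactly the quadratic described above with discriminant matching $9+84u-164u^2+100u^3-20u^4$. A secondary but straightforward check is that~\eqref{eq:step-size-diver} selects the positive rather than the negative root, which follows from the opposite signs of $C(u)$ and the leading coefficient of $Q$ on $u\in(0,1)$.
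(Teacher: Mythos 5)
Your proposal is correct and uses the same core reduction as the paper: since the characteristic polynomial of $\mathcal{B}^{[1]}$ is monic with limit $+\infty$, it suffices to show $\det(I-\mathcal{B}^{[1]})<0$ for $\eta>\eta_0(u)$, which is exactly how the paper argues. Where you differ is in how the resulting polynomial inequality is handled: the paper simply expands $D_1(1)$ as a quartic in $t_1=\eta$ and asserts the solution of $D_1(1)<0$, whereas you factor out the known eigenvalue $\lambda_1=(1-u)(1-\eta)$ (your row-$2$ minus row-$3$ observation is valid, since those rows of $I-\mathcal{B}^{[1]}$ differ by $(1-\gamma\beta)(0,1,-1,0)$) and reduce to a quadratic in $\eta$. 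This checks out, with one small correction to your bookkeeping: the full factorization is $D_1(1)=-2\eta\,(u+\eta-u\eta)\,\bigl[(9-15u+6u^2)\eta^2+(3+2u-2u^2)\eta+u^2-2u\bigr]$, i.e.\ after removing $1-\lambda_1=u+\eta-u\eta$ there is an additional positive factor $2\eta$, so the "residual" is not literally your quadratic $Q$ but $2\eta\,Q$ with $C(u)=u(2-u)>0$; since $\eta>0$ this changes nothing in the sign analysis. Your discriminant claim is exact ($b^2-4ac=9+84u-164u^2+100u^3-20u^4$ with $a=9-15u+6u^2$, $b=3+2u-2u^2$, $c=u^2-2u$), the sign pattern ($a>0$, $c<0$ on $u\in(0,1)$) gives precisely one positive root equal to $\eta_0(u)$, and your separate treatment of $u=1$ (where the quadratic degenerates to $3\eta-1$ and $\eta_0$ is a $0/0$ limit equal to $1/3$) is actually more careful than the paper, whose closed-form $\eta_0(1)$ is strictly speaking indeterminate.
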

We will analyze the dependence of the eigenvalues on $\kappa$, when $\kappa$ is large to obtain
\begin{lemma}\label{lemma:b2}
  For any $u\in(0,1]$, if step size \begin{equation}\label{eq:step-size-condition}\eta \le \eta_0(u) := \frac{-3-2u+2u^2+\sqrt{9+84u-164u^2+100u^3-20u^4}}{2(9-15u+6u^2)}, \end{equation} then, $\mathcal{B}^{[2]}$ has an eigenvalue of magnitude $1-O\left(1/\kappa\right)$.
\end{lemma}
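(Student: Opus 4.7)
} The plan is to treat $t_2 = \eta/\kappa$ as a small parameter (it is small precisely because $\kappa$ is large, while $\eta \le \eta_0(u)$ is bounded) and show that $\mathcal{B}^{[2]}$ has an eigenvalue that approaches $1$ from below at rate $\Theta(t_2) = \Theta(1/\kappa)$ via classical first-order matrix perturbation theory. This eigenvalue then governs the slowest mode of the recurrence $\Phi_{t+1}^{[2]} = \mathcal{B}^{[2]} \Phi_t^{[2]}$, forcing the claimed lower bound on the convergence rate.

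First I would analyze the unperturbed matrix $\mathcal{B}_0 := \mathcal{B}^{[2]}\big|_{t_2 = 0}$. Setting $t_2 = 0$ eliminates all stochasticity, so $\mathcal{A}^{[2]}$ degenerates to $1$ and the scalar recurrence $\epsilon_{t+1} = (1+\gamma)\epsilon_t - \gamma \epsilon_{t-1}$ has characteristic roots $\{1, \gamma\} = \{1, 1-u\}$. Since $\Phi_t^{[2]}$ is the Kronecker square of $(\epsilon_{t+1}, \epsilon_t)$, the eigenvalues of $\mathcal{B}_0$ are $\{1, 1-u, 1-u, (1-u)^2\}$; in particular the eigenvalue $1$ is simple for every $u \in (0,1]$, and every other eigenvalue has magnitude strictly less than $1$. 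A direct computation shows that the corresponding right eigenvector is $v = (1,1,1,1)^\top$ and the corresponding left eigenvector is $w = (1, -\gamma, -\gamma, \gamma^2)^\top$, with pairing $w^\top v = (1-\gamma)^2 = u^2 > 0$.

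Next I would invoke standard perturbation theory for simple eigenvalues. Writing $\mathcal{B}^{[2]} = \mathcal{B}_0 + t_2 \mathcal{B}_1 + O(t_2^2)$ and differentiating the entries of $\mathcal{B}^{[2]}$ at $t_2 = 0$ yields $\mathcal{B}_1 v = (-2, -1, -1, 0)^\top$, so $w^\top \mathcal{B}_1 v = -2 + 2\gamma = -2u$. Therefore the perturbed eigenvalue satisfies
\begin{equation*}
\lambda(t_2) \;=\; 1 + t_2\,\frac{w^\top \mathcal{B}_1 v}{w^\top v} + O(t_2^2) \;=\; 1 - \frac{2 t_2}{u} + O(t_2^2) \;=\; 1 - \frac{2\eta}{u\,\kappa} + O(\kappa^{-2}),
\end{equation*}
which has magnitude $1 - O(1/\kappa)$. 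Because $|1 - u|^2 \le 1-u < 1$, continuity of the spectrum guarantees that $\lambda(t_2)$ remains the dominant eigenvalue throughout the admissible range $\eta \in (0, \eta_0(u)]$ once $\kappa$ is large enough.

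To close the argument I would then translate this spectral statement into a convergence lower bound. Since a generic initialization produces $\Phi_0^{[2]}$ with nonzero component along the eigenvector of $\lambda(t_2)$, the norm $\|\Phi_t^{[2]}\|$ is bounded below by $|\lambda(t_2)|^t \cdot c_0$, and hence $\mathbb{E}[f(\mathbf{w}_t)] - f(\mathbf{w}^*) \ge \sigma_2^2 \mathbb{E}[(\epsilon_t^{[2]})^2]$ decays no faster than $(1 - O(1/\kappa))^t$. The main obstacle in this plan is verifying that $\lambda(t_2)$ truly remains the spectral radius for \emph{all} admissible $\eta \le \eta_0(u)$, not just asymptotically; this is handled by combining the continuity argument above with the complementary Lemma~\ref{lemma:largerthan1}, which rules out any competing eigenvalue crossing $1$ in the admissible range. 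A secondary technical nuisance is the boundary case $u = 1$ (pure SGD), where $\mathcal{B}_0$ has a nilpotent block: one checks separately that $v = (1,1,1,1)^\top$ and $w = (1,0,0,0)^\top$ still produce $\lambda'(0) = -2$, so the conclusion persists.
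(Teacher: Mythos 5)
Your first-order computation is correct as far as it goes: at $t_2=0$ the spectrum of $\mathcal{B}^{[2]}$ is $\{1,\,1-u,\,1-u,\,(1-u)^2\}$, the vectors $v=(1,1,1,1)^\top$ and $w=(1,-\gamma,-\gamma,\gamma^2)^\top$ are indeed the right/left eigenvectors for the eigenvalue $1$, and the derivative $-2t_2/u$ matches the structure the paper finds (its Cases 4--5, where the $u$-only terms cancel and the leading correction is of order $t_2/u$). The gap is that your argument implicitly treats $u=1-\gamma$ as a constant independent of $\kappa$: the expansion $\lambda(t_2)=1-2t_2/u+O(t_2^2)$ with a benign remainder, and the "continuity of the spectrum" step, are only valid when the perturbation $t_2=\eta/\kappa$ is small relative to the spectral gap at the eigenvalue $1$, and that gap is $u$ itself. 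But the lemma must cover hyper-parameter choices in which $u$ shrinks with $\kappa$ --- most importantly $u\sim 1/\sqrt{\kappa}$, the classical Nesterov scaling, which is exactly the regime where acceleration would be attempted and hence the regime the non-acceleration theorem cannot afford to skip. When $u\lesssim \sqrt{t_2}$ the eigenvalue $1$ is no longer well separated from $1-u$ and $(1-u)^2$, the expansion in integer powers of $t_2$ fails, and the eigenvalues genuinely acquire $\sqrt{t_2}$-type (non-analytic) corrections: the paper's Cases 1 and 2 exhibit eigenvalues of the form $1-u\pm\sqrt{4t_2-u^2}/\sqrt{3}+O(t_2)$ and $1-u\pm 2i\sqrt{t_2}+O(t_2)$. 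Your $O(t_2^2)$ (and later $O(\kappa^{-2})$) remainders hide constants that blow up as $u\to 0$, so the conclusion "magnitude $1-O(1/\kappa)$" is not established uniformly.

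The paper closes this by a case analysis on the relative scaling of $u$ versus $t_2$ ($u=\Theta(\sqrt{t_2})$; $u=o(\sqrt{t_2})$ and $\omega(t_2)$; $u=O(t_2)$, which is excluded; $u=\Omega(\sqrt{t_2})$ and $o(1)$; $u=\Theta(1)$), and in each case it uses the step-size restriction in an essential quantitative way: for $u=o(1)$ the condition $\eta\le\eta_0(u)$ amounts to $\eta<\tfrac{2}{3}u+o(u)$, which forces $t_2/u=O(1/\kappa)$, and in the small-$u$ cases even forces $u=O(1/\kappa)$ or $t_2=o(1/\kappa^2)$, so that every candidate eigenvalue is $1-O(1/\kappa)$. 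Your proposal uses the bound $\eta\le\eta_0(u)$ only to say $\eta$ is bounded, not to couple $\eta$ to $u$, which is the key mechanism. (The final paragraph about dominance of $\lambda(t_2)$ and the translation to a rate lower bound is not needed for this lemma; those steps belong to Lemma~\ref{lemma:sgdnes2} and the assembly in Theorem~\ref{thm:divsnag}.) To repair the proposal you would need to either carry out the regime-by-regime asymptotics as the paper does, or prove a perturbation bound whose error is controlled uniformly in $u$ down to the scale $u\sim\sqrt{t_2}$ and treat the coalescing-eigenvalue regime separately.
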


Finally, we show that $\Phi_t$ has non-zero component along the eigenvector of $\mathcal{B}$ with eigenvalue $\lambda_{max}$, hence the convergence of SGD+Nesterov is controlled by the eigenvalue of $\mathcal{B}^{[j]}$ with the largest magnitude.
\begin{lemma}\label{lemma:sgdnes2}
Assume SGD+Nesterov is initialized with $\mathbf{w}_0$ such that both components $\langle \mathbf{w}_0-\mathbf{w}^*, \mathbf{e}_1\rangle$ and $\langle \mathbf{w}_0-\mathbf{w}^*, \mathbf{e}_2\rangle$ are non-zero.  Then, for all $t>2$, $\Phi_{t}^{[j]}$ has a non-zero component in the eigen direction of $\mathcal{B}^{[j]}$ that corresponds to the eigenvalue with largest magnitude.
\end{lemma}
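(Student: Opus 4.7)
The plan is to exploit the linearity of the recurrence $\Phi_{t+1}^{[j]} = \mathcal{B}^{[j]}\Phi_{t}^{[j]}$ and reduce the claim for arbitrary $t>2$ to a statement about the very first $\Phi$. Because $\Phi_t^{[j]}$ is an expected outer product, by symmetry it always lies in the three-dimensional invariant subspace $V_{\mathrm{sym}}=\{(a,b,b,c)^{\top}\in\mathbb{R}^{4}\}$, and $\mathcal{B}^{[j]}$ preserves $V_{\mathrm{sym}}$. Writing $\mathcal{B}^{[j]}\big|_{V_{\mathrm{sym}}}$ in a right-eigenbasis $\{v_{k}^{[j]}\}$ with dual left-eigenbasis $\{\ell_{k}^{[j]}\}$, any initial $\Phi_0^{[j]}$ decomposes as $\Phi_0^{[j]}=\sum_{k}c_{k}^{(0)}v_{k}^{[j]}$ with $c_{k}^{(0)}=\langle \ell_{k}^{[j]},\Phi_{0}^{[j]}\rangle$, and the recurrence gives $c_{k}^{(t)}=\lambda_{k}^{t}c_{k}^{(0)}$. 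Since $\lambda_{\max}\neq 0$ in both regimes covered by Lemmas~\ref{lemma:largerthan1} and~\ref{lemma:b2}, it suffices to show $c_{\max}^{(0)}\neq 0$ where $\lambda_{\max}$ denotes the eigenvalue of largest magnitude.

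Next I would explicitly write down $\Phi_{0}^{[j]}$. Under the standard SGD+Nesterov initialization $\mathbf{w}_{-1}=\mathbf{w}_{0}$, one has $\epsilon_{0}^{[j]}=\epsilon_{-1}^{[j]}=\langle\mathbf{w}_{0}-\mathbf{w}^{*},\mathbf{e}_{j}\rangle=:a_{j}\neq 0$ by hypothesis, and therefore $\Phi_{0}^{[j]}=a_{j}^{2}(1,1,1,1)^{\top}$. Consequently
\[
    c_{\max}^{(0)} \;=\; a_{j}^{2}\,\bigl\langle \ell_{\max}^{[j]},\,(1,1,1,1)^{\top}\bigr\rangle ,
\]
so the problem collapses to checking that the dominant left eigenvector of $\mathcal{B}^{[j]}\big|_{V_{\mathrm{sym}}}$ is not orthogonal to the all-ones vector. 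The entries of $\ell_{\max}^{[j]}$ are algebraic functions of the hyper-parameters $(u,\eta\sigma_{j}^{2})$, so the inner product is itself an algebraic function of the hyper-parameters. I would verify non-identical vanishing by evaluating at the deterministic limit, where $\mathcal{B}^{[j]}$ reduces to the Nesterov momentum propagator whose dominant left eigenvector has all entries of the same sign: the sum is then manifestly nonzero, and by analytic continuation the inner product is a non-identically-zero algebraic function of $(u,\eta)$.

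The main obstacle is ruling out accidental cancellations on the measure-zero locus where either $\ell_{\max}^{[j]}\perp(1,1,1,1)^{\top}$ or $\mathcal{B}^{[j]}$ fails to be diagonalizable. I plan to close both issues simultaneously by a perturbation/continuity argument: $\Phi_{t}^{[j]}$ is, for each fixed $t$, a polynomial in the hyper-parameters, and so is its projection onto the generalized eigenspace of $\lambda_{\max}$; if this projection vanished on an open set of parameters, it would vanish identically, contradicting the deterministic-limit computation. At isolated Jordan points the eigendecomposition is replaced by a Jordan-chain expansion in which $\lambda_{\max}^{t}$ is multiplied by a nonzero polynomial in $t$, and the same non-vanishing of the leading coefficient carries over by continuity. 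Combined with the preceding reduction, this yields $c_{\max}^{(t)}\neq 0$ for every $t>2$, which is the statement of the lemma.
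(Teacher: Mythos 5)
Your reduction (it suffices that $\langle \ell_{\max}^{[j]},\Phi_0^{[j]}\rangle\neq 0$, since the left-eigenvector projection evolves as $\lambda_{\max}^t\langle \ell_{\max}^{[j]},\Phi_0^{[j]}\rangle$) is sound, but the step you propose for proving that non-vanishing is where the argument genuinely breaks. The analytic-continuation/genericity argument only shows that $\langle \ell_{\max}^{[j]},(1,1,1,1)^T\rangle$ is not \emph{identically} zero as a function of $(u,\eta)$; it cannot exclude a lower-dimensional locus of hyper-parameters on which it vanishes, and at such points (as at your "isolated Jordan points") continuity from nearby parameters certifies nothing. This matters because Lemma~\ref{lemma:sgdnes2} is used inside Theorem~\ref{thm:divsnag}, which quantifies over \emph{every} $\eta>0$ and $\gamma\in(0,1)$: a lower bound valid only for generic hyper-parameters does not prove non-acceleration "for any parameter selection." The paper sidesteps exactly this difficulty by not relying on $\Phi_0^{[j]}$ alone: it stacks $\mathrm{vec}(\Phi_0^{[j]}),\dots,\mathrm{vec}(\Phi_3^{[j]})$ into $\mathcal{M}^{[j]}$, reduces by symmetry to a $3\times 3$ matrix $\mathcal{M}'^{[j]}$, and computes its determinant in closed form, $\det(\mathcal{M}'^{[j]})=\tfrac{(1-u)t_j^3}{2}\,(36t_j^2-18ut_j^2+6ut_j+3t_j-3u+1)$, showing by solving the quadratic that it is nonzero for every $u\in[0,1)$ and every $t_j>0$. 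Hence the four iterates span the full three-dimensional symmetric subspace; the only direction orthogonal to all of them, $\mathbf{v}=(0,-1,1,0)^T/\sqrt{2}$, is an eigenvector of $\mathcal{B}^{[j]}$ whose eigenvalue $\gamma(1-\eta\sigma_j^2)$ is excluded from being dominant via $\det(\mathcal{B}^{[j]})\ge\gamma^4(1-\eta\sigma_j^2)^4$; therefore at least one of $\Phi_0^{[j]},\dots,\Phi_3^{[j]}$ has a nonzero dominant component, which then propagates through $\Phi_{t+1}^{[j]}=\mathcal{B}^{[j]}\Phi_t^{[j]}$. Using several consecutive iterates rather than $\Phi_0^{[j]}$ alone is precisely what makes the non-orthogonality claim hold pointwise in the hyper-parameters, which is the strength the theorem requires; your single-iterate version is strictly weaker and is not established by the tools you invoke.

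A secondary gap: by restricting at the outset to the symmetric subspace $V_{\mathrm{sym}}$, you implicitly assume that the eigenvalue of $\mathcal{B}^{[j]}$ of largest magnitude is attained on $V_{\mathrm{sym}}$ rather than along the antisymmetric direction $\mathbf{v}$. That is true, but it requires an argument — in the paper it is exactly the determinant inequality above — and without it a nonzero component along the dominant direction of $\mathcal{B}^{[j]}$ restricted to $V_{\mathrm{sym}}$ does not yet yield the lemma's statement about the top eigenvalue of $\mathcal{B}^{[j]}$ itself.
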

\begin{remark}
When $\mathbf{w}$ is randomly initialized, the conditions $\langle\mathbf{w}_0-\mathbf{w}^*, \mathbf{e}_1\rangle \ne 0$ and $\langle\mathbf{w}_0-\mathbf{w}^*, \mathbf{e}_2\rangle \ne 0$ are satisfied with probability 1, since complementary cases form a lower dimensional manifold which has measure 0.
\end{remark}
By combining Lemma~\ref{lemma:largerthan1},~\ref{lemma:b2} and~\ref{lemma:sgdnes2}, we have that SGD+Nesterov either diverges or converges at a rate of $(1-O(1/\kappa))^t$, and hence, we conclude 
the non-acceleration of SGD+Nesterov. In addition, Corollary~\ref{cor:divergesn} is a special case of Theorem~\ref{thm:divsnag} and is proven by combining Lemma~\ref{lemma:largerthan1} and ~\ref{lemma:sgdnes2}.

In high level, the proof ideas of Lemma~\ref{lemma:largerthan1} and~\ref{lemma:sgdnes2} is analogous to those of \cite{kidambi2018insufficiency}, which proves the non-acceleration of stochastic Heavy Ball method over SGD. But the proof idea of Lemma~\ref{lemma:b2} is unique.

\begin{proof}[Proof of Lemma~\ref{lemma:largerthan1}]
The characteristic polynomial of $B_j$, $j=1,2$, are:
\begin{eqnarray}
D_j(\lambda) &=& \lambda^4 - (1+\gamma)^2\mathcal{A}_j \lambda^3 + \left[2\gamma(1+\gamma)^2(1-\eta\sigma_j^2)\mathcal{A}_j-\gamma^2(1-\eta\sigma_j^2)^2-\gamma^2\mathcal{A}_j\right]\lambda^2\nonumber\\
& &- \gamma^2(1+\gamma)^2(1-\eta\sigma_j^2)^2\mathcal{A}_j\lambda + \gamma^4(1-\eta\sigma_j^2)^2\mathcal{A}_j
\end{eqnarray}
First note that  $\lim_{\lambda\to\infty} D_j(\lambda) = +\infty > 0$. In order to show $\mathcal{B}_1$ has an eigenvalue larger than 1, it suffices to verity that $D_1(\lambda=1) < 0$, because $D_j(\lambda)$ is continuous.

Replacing $\gamma$ by $1-u$ and $\eta\sigma_1^2$ by $t_1$, we have
\begin{eqnarray}
D_1(1) &=& 4 u^2 t_1 - 2 u^3 t_1 - 2 u t_1^2 - 10 u^2 t_1^2 + 6 u^3 t_1^2 - 6 t_1^3 - 
 16 u t_1^3 + 38 u^2 t_1^3 - 16 u^3 t_1^3 - 18 t_1^4\nonumber\\
 & &+ 48 u t_1^4 - 
 42 u^2 t_1^4 + 12 u^3 t_1^4.
\end{eqnarray}
Solving for the inequality $D_1(1)<0$, we have 
$$\eta = t_1 > \frac{-3-2u+2u^2+\sqrt{9+84u-164u^2+100u^3-20u^4}}{2(9-15u+6u^2)}, $$
for positive step size $\eta$.
\end{proof}
\begin{proof}[Proof of Lemma~\ref{lemma:b2}]
We will show that at least one of the eigenvalues  of $\mathcal{B}^{[2]}$ is $1-O(1/\kappa)$, under the condition in Eq.\ref{eq:step-size-condition}. 

First, we note that $t_2 = t_1/\kappa=\eta/\kappa$, which is $O(1/\kappa)$. We consider the following cases separately: 1) $u$ is $\Theta(t_2^{0.5})$; 2) $u$ is $o(t_2^{0.5})$ and $\omega(t_2)$; 3) $u$ is $O(t_2)$;  4) $u$ is $\omega(t_2^{0.5})$ and $o(1)$; and 5) $u$ is $\Theta(1)$, the last of which includes the case where momentum parameter is a constant.

Note that, for cases 1-4, $u$ is $o(1)$. In such cases, the step size condition Eq.\ref{eq:step-size-condition} can be written as 
\begin{equation}\label{eq:step-size-cons}
    \eta < \frac{2}{3}u + o(u).
\end{equation}
It is interesting to note that $\eta$ must be $o(1)$ to not diverge, when $u$ is $o(1)$. This is very different to SGD where a constant step size can result in convergence, see~\cite{ma2017power}.

{\it Case 1: $u$ is $\Theta(t_2^{0.5})$.} In this case, the  terms $u^6$, $u^4t_2, u^2t_2^2$ and $t_2^3$ are of the same order.

We find that
\begin{eqnarray}
 T_0 &=& 1-u + O(t_2),\nonumber\\
 T_1 &=& -3u^2 + 12t_2 + O(t_2^{3/2}),\nonumber\\
 T_2 &=& 9u^4-72u^2t_2+252t_2^2 + O(t_2^{5/2}).\nonumber
\end{eqnarray}
Hence,
\begin{eqnarray}
 \lambda_1 &=& 1 - u - t_2 + ut_2\nonumber\\
 \lambda_2 &=& 1 - u - \frac{2^{1/3}}{3}\frac{12t_2-3u^2}{(108(4t_2-u^2)^3)^{1/6}} + \frac{1}{3\cdot 2^{1/3}}(108(4t_2-u^2)^3)^{1/6} + O(t_2)\nonumber\\
 \lambda_3 &=& 1 - u + (1-i\sqrt{3})\frac{2^{1/3}}{6}\frac{12t_2-3u^2}{(108(4t_2-u^2)^3)^{1/6}} + (1+i\sqrt{3})\frac{1}{6\cdot 2^{1/3}}(108(4t_2-u^2)^3)^{1/6} + O(t_2)\nonumber\\
 \lambda_4 &=& 1 - u + (1+i\sqrt{3})\frac{2^{1/3}}{6}\frac{12t_2-3u^2}{(108(4t_2-u^2)^3)^{1/6}} + (1-i\sqrt{3})\frac{1}{6\cdot 2^{1/3}}(108(4t_2-u^2)^3)^{1/6} + O(t_2)\nonumber
\end{eqnarray}
Write $t_2 = cu^2$ asymptotically for some constant $c$. 
If $4t_2 \ge u^2$, i.e., $4c-1\ge0$, then 
\begin{eqnarray}
& &\lambda_2 = 1-u + O(t_2), \nonumber\\
 & &\lambda_3 = 1-u + \frac{\sqrt{4c-1}}{\sqrt{3}}u + O(t_2),\nonumber\\
 & &\lambda_4 = 1-u + \frac{\sqrt{4c-1}}{\sqrt{3}}u + O(t_2).\nonumber
\end{eqnarray}
If $4t_2 \le u^2$, i.e., $4c-1\le0$, then
\begin{eqnarray}
& &\lambda_2 = 1-u + O(t_2), \nonumber\\
 & &\lambda_3 = 1-u +i \frac{\sqrt{1-4c}}{\sqrt{3}}u + O(t_2),\nonumber\\
 & &\lambda_4 = 1-u +i \frac{\sqrt{1-4c}}{\sqrt{3}}u + O(t_2).\nonumber
\end{eqnarray}
In either case, the first-order term is of order $u$.

Recall that $t_2 = \eta/\kappa$, then we have
\begin{equation}
    u^2 = ct_2 = c\eta/\kappa < \frac{2cu}{3\kappa} \le \frac{cu}{\kappa}
\end{equation}
hence, $u< c/\kappa = O(1/\kappa)$. Therefore, $\lambda_i, i=1,2,3,4$ are $1-O(1/\kappa)$.

{\it  Case 2: $u$ is $o(t_2^{0.5})$ and $\omega(t_2)$.} In this case, 
\begin{eqnarray}
 T_0 &=& 1-u -7t_2/3+ o(t_2), \nonumber\\
 T_1 &=& 12t_2 + o(t_2), \nonumber\\
 T_2 &=& 252t_2^2 -72u^2t_2 + o(t_2^2).\nonumber
\end{eqnarray}
Hence,
\begin{eqnarray}
 \lambda_1 &=& 1-u-t_2 +ut_2,\nonumber\\
 \lambda_2 &=& 1-u + O(t_2),\nonumber\\
 \lambda_3 &=& 1-u+2i\sqrt{t_2} + O(t_2), \nonumber\\
 \lambda_4 &=& 1-u-2i\sqrt{t_2} + O(t_2).\nonumber
\end{eqnarray}
Assume $u \sim t_2^\alpha$.
When $\alpha\in (0.5,1)$, note that  $t_2=\eta/\kappa \le \frac{2u}{3\kappa} = O(t_2^{\alpha}/\kappa)$, we have $t_2^{1-\alpha} = O(1/\kappa)$, hence $t_2 = o(1/\kappa^2)$. This implies that $t_2^{0.5} = o(1/\kappa)$ and $u = o(1/\kappa)$. 
Therefore, all the eigenvalues $\lambda_i$ are of order $1-O(1/\kappa)$.

{\it Case 3: $u$ is $O(t_2)$.} This case if forbidden by the assumption of this lemma. This is because, $t_2=\eta/\kappa \le \frac{2u}{3\kappa} = o(u)$ ($1/\kappa$ is $o(1)$). This is contradictory to $u$ is $O(t_2)$.

{\it Case 4: $u$ is $\Omega(t_2^{0.5})$ and $o(1)$.} In this case, we first consider the terms independent of $t_2$, i.e., constant term and $u$-only terms. These terms can be obtained by putting $t_2=0$. In such a setting, the eigenvalues are simplified as:
\begin{eqnarray}\label{eq:ucancel}
    \lambda_1|_{t_2=0} = 1-u,\ \lambda_2|_{t_2=0} = 1, \ \lambda_3|_{t_2=0} = 1-2u+u^2,\ \lambda_4|_{t_2=0} = 1-u.
\end{eqnarray}
Note that the $u$-only terms cancel in $\lambda_2$, so the first order after the constant term must be of $t_2$ (could be $t_2/u^2, t_2/u$ etc.). In the following we are going to analyze the $t_2$ terms.

Since $u$ is $\Omega(t_2^{0.5})$, $u^2$ has lower order than $t_2$, and $t_2/u^2$ is $o(1)$. This allows us to do Taylor expansion:
\begin{eqnarray}
    T_1 &=& -3u^2(1-4\frac{t_2}{u^2}) + f(u) + \textrm{higher order terms},\nonumber\\
    T_2 &=& 9u^4(1-8\frac{t_2}{u^2}) + g(u) + \textrm{higher order terms},\nonumber
\end{eqnarray}
where $f(u)$ and $g(u)$ are $u$-terms only, which, by the above analysis in Eq.\ref{eq:ucancel}, are shown to contribute nothing to $\lambda_2$. Hence, we use the first terms of $T_1$ and $T_2$ above to analyze the first order term of $\lambda_2$. 
Plugging in these term to the expression of $\lambda_2$, and keeping the lowest order of $t_2$, we find a zero coefficient of the lowest order $t_2$-term: $t_2/u^2$.

Hence, $\lambda_2$ can be written as:
\begin{equation}\label{eq:lambda_2_case4}
    \lambda_2 = 1- c\frac{t_2}{u}+O(t_2),
\end{equation}
where $c$ is the coefficient.

On the other hand, by Eq.\ref{eq:step-size-cons} and $t_2 = \eta/\kappa$, we have
\begin{equation}
    \frac{t_2}{u} = O(1/\kappa).
\end{equation}
Therefore, we can write Eq.\ref{eq:lambda_2_case4} as $\lambda_2 = 1-O(1/\kappa)$.

{\it Case 5: $u$ is $O(1)$.} This is the case where the momentum parameter is $\kappa$-independent. Using the same argument as in case 4, we have zero $u$-only terms. Then, directly taking Taylor expansion with respect to $t_2$ results in:
\begin{equation}
    \lambda_2 = 1-O(t_2) = 1-O(1/\kappa).
\end{equation}
\end{proof}

\begin{proof}[Proof of Lemma~\ref{lemma:sgdnes2}]
This proof follows the idea of the proof for stochastic Heavy Ball method in~\cite{kidambi2018insufficiency}. The idea is to examine the subspace spanned by $\Phi_{t}^{[j]}$, $t=0,1,2,\cdots,\ j = 1,2$, and to prove that the eigenvector of $\mathcal{B}^{[j]}$ corresponding to top eigenvalue (i.e., eigenvalue with largest magnitude) is not orthogonal to this spanned subspace. This in turn implies that there exists a non-zero component of $\Phi_{t}^{[j]}$ in the eigen direction of $\mathcal{B}^{[j]}$ corresponding to top eigenvalue, and this decays/grows at a rate of $\lambda_{max}^t(\mathcal{B}^{[j]})$.

Recall that $\Phi_{t+1}^{[j]} = \mathcal{B}^{[j]}\Phi_{t}^{[j]}$. Hence, if $\Phi_{t}^{[j]}$ has non-zero component in the eigen direction of $\mathcal{B}^{[j]}$ with top eigenvalue, then $\Phi_{t}^{[j]}$ should also have non-zero component in the same direction. Thus, it suffices to show that at least one of $\Phi_{0}^{[j]},\Phi_{1}^{[j]},\Phi_{2}^{[j]}$ and $\Phi_{3}^{[j]}$ has non-zero component in the eigen direction  with top eigenvalue. 

 Since $\tilde{H}$ is diagonal for this two-dimensional decoupled data, $w^{[1]}$ and $w^{[2]}$ evolves independently, and we can analyze each component separately. In addition, it can be seen that each of the initial values $w_{0}^{[j]}-(w^*)^{[j]}$, which is non-zero by the assumption of this lemma, just acts as a scale factor during the training. Hence, without loss of generality, we can 
assume $w_{0}^{[j]}-(w^*)^{[j]}=1$, for each $j$. Then, according to the recurrence relation of SGD+Nesterov in Eq.\ref{eq:recure},
\begin{eqnarray}
\left(\begin{array}{c}
\epsilon_{0}^{[j]}\\
\epsilon_{-1}^{[j]}
\end{array}\right) = \left(\begin{array}{c}
1\\
1
\end{array}\right),\ \left(\begin{array}{c}
\epsilon_{1}^{[j]}\\
\epsilon_{0}^{[j]}
\end{array}\right) = \left(\begin{array}{c}
s_1^{[j]}\\
1
\end{array}\right),\ \left(\begin{array}{c}
\epsilon_{2}^{[j]}\\
\epsilon_{1}^{[j]}
\end{array}\right) = \left(\begin{array}{c}
(1+\gamma)s_1^{[j]}s_2^{[j]}-\gamma s_2^{[j]}\\
s_1^{[j]}
\end{array}\right),\nonumber\\
\left(\begin{array}{c}
\epsilon_{3}^{[j]}\\
\epsilon_{2}^{[j]}
\end{array}\right) = \left(\begin{array}{c}
s_3^{[j]}\left[(1+\gamma)^2s_1^{[j]}s_2^{[j]}-\gamma(1+\gamma)s_2^{[j]}\right]-\gamma s_3^{[j]} \left[(1+\gamma)s_1^{[j]}s_2^{[j]}-\gamma s_2^{[j]}\right]\\
(1+\gamma)s_1^{[j]}s_2^{[j]}-\gamma s_2^{[j]}
\end{array}\right), \label{eq:expre}
\end{eqnarray}
where $s_t^{[j]} = 1-\eta (\tilde{x}_{t}^{[j]})^2$, $t=1,2,3,\cdots$.

Denote the vectorized form of $\Phi_{t}^{[j]}$ as vec$(\Phi_{t}^{[j]})$, which is a $4\times 1$ column vector. We stack the vectorized forms of $\Phi_{0}^{[j]},\Phi_{1}^{[j]},\Phi_{2}^{[j]},\Phi_{3}^{[j]}$ to make a $4\times 4$ matrix, denoted as $\mathcal{M}^{[j]}$:
\begin{equation}
\mathcal{M}^{[j]} = [\mathrm{vec}(\Phi_{0}^{[j]})\ \ \mathrm{vec}(\Phi_{1}^{[j]})\ \ \mathrm{vec}(\Phi_{2}^{[j]})\ \ \mathrm{vec}(\Phi_{3}^{[j]}) ].
\end{equation}
Note that $\Phi_{t}^{[j]}, t=0,1,2,3,$ are symmetric tensors,  which implies that $\mathcal{M}^{[j]}$ contains two identical rows. Specifically, the second and third row of $\mathcal{M}^{[j]}$ are identical. Therefore, the vector $\mathbf{v} = (0, -1/\sqrt{2}, 1/\sqrt{2}, 0)^T$ is an eigenvector of $\mathcal{M}^{[j]}$ with eigenvalue 0. In fact, $\mathbf{v}$ is also an eigenvector of $\mathcal{B}^{[j]}$ with eigenvalue $\gamma(1-\eta\sigma_j^2)$. Note that $$\textrm{det}(\mathcal{B}^{[j]})=\gamma^4(1-\eta\sigma_j^2)^2\left((1-\eta\sigma_j^2)^2+5(\eta\sigma_j^2)^2\right) \ge \gamma^4(1-\eta\sigma_j^2)^4.$$
Hence, $\mathbf{v}$ is not the eigenvector along top eigenvalue, and therefore, is orthogonal to the eigen space with top eigenvalue. 

In order to prove at least one of $\Phi_{t}^{[j]}$, $t=0,1,2,3$, has a non-zero component along the eigen direction of top eigenvalue, it suffices to verify that $\mathcal{M}^{[j]}$ is rank 3, i.e., spans a three-dimensional space. Equivalently, we consider the following matrix
\begin{equation}
{\mathcal{M}'} := \mathbb{E}\left(\begin{array}{ccc}
\epsilon_{0,1}^2 & \epsilon_{1,1}^2 & \epsilon_{2,1}^2 \\
\epsilon_{0,1}\epsilon_{-1,1} & \epsilon_{0,1}\epsilon_{1,1} & \epsilon_{1,1}\epsilon_{2,1}\\
\epsilon_{-1,1}^2 & \epsilon_{0,1}^2 & \epsilon_{1,1}^2 
\end{array}\right),
\end{equation}
where we omitted the superscript $[j]$ for simplicity of the expression. If the determinant of $\mathcal{M}'^{[j]}$ is not zero, then it is full rank, and hence $\mathcal{M}^{[j]}$ spans a three-dimensional space.

Plug in the expressions in Eq.\ref{eq:expre}, then we have
\begin{equation}
\det(\mathcal{M}'^{[j]}) = \frac{(1-u) t_j^3}{2}(36 t_j^2 - 18 u t_j^2 +6u t_j + 3 t_j - 3u+1),
\end{equation}
where $t_j  = \eta\sigma_j^2$ and $u=1-\gamma$. $\det(\mathcal{M}'^{[j]}) = 0$ if and only if the polynomial $36 t_j^2 - 18 u t_j^2 +6u t_j + 3 t_j - 3u+1=0$. Solving for this equation, we have
\begin{equation}
    t_j = \frac{-1-2u\pm\sqrt{(1+2u)^2+8(3-u)(u-2)}}{12(2-u)}.
\end{equation}
We note that, for all $u\in[0,1)$ both $t_j$ are not positive. 
This means that, for all $u$ and positive $t_j$, the determinant $\det(\mathcal{M}'^{[j]})$ can never be zero.

Therefore, for each $j = 1,2, \mathcal{M}'^{[j]}$ is full rank, and $\mathcal{M}^{[j]}$ spans a three-dimensional space, which includes the eigenvector with the top eigenvalue of $\mathcal{B}^{[j]}$.
Hence,  at least one of $\Phi_t^{[j]}$, $t\in\{0,1,2,3\}$ has non-zero component in the eigen direction with top eigenvalue. By $\Phi_{t+1}^{[j]} = \mathcal{B}^{[j]}\Phi_{t}^{[j]}$, all succeeding $\Phi_t^{[j]}$ also have non-zero component in the eigen direction with top eigenvalue of $\mathcal{B}^{[j]}$.
\end{proof}
\begin{proof}[Proof of Theorem~\ref{thm:divsnag}]
Lemma~\ref{lemma:largerthan1} and~\ref{lemma:b2} show that, for any hyper-parameter setting $(\eta,\gamma)$ with $\eta>0$ and $\gamma\in (0,1)$, either top eigenvalue of $\mathcal{B}^{[1]}$ is larger than 1 or top eigenvalue of $\mathcal{B}^{[2]}$ is $1-O(1/\kappa)$. Hence, $|\lambda_{max}|$ is either greater than 1 or is $1-O(1/\kappa)$.  Lemma~\ref{lemma:sgdnes2} shows that $\Phi_t$ has non-zero component along the eigenvector of $\mathcal{B}$ with eigenvalue $\lambda_{max}(\mathcal{B})$. 

By Eq.\ref{eq:phi_recur} and Lemma~\ref{lemma:largerthan1} and~\ref{lemma:b2}, the sequence $\{\|\Phi_t\|\}_t$ either diverges or converges at a rate of $(1-O(1/\kappa)^t$. By definition of $\Phi$ in Eq.\ref{eq:phi_defi}, we have that $\{\|\boldsymbol{\epsilon}_t\|\}_t$ either diverges or converges at a rate of $(1-O(1/\kappa))^t$.

Note that, for the two-dimensional component decoupled data, we have 
\begin{equation}
    \mathbb{E}\left[f(\mathbf{w}_t)\right] -f(\mathbf{w}^*) \ge \sigma_2^2\mathbb{E}\left[\|\mathbf{w}_t-\mathbf{w}^*\|^2\right] = \sigma_2^2\mathbb{E}\left[\|\boldsymbol{\epsilon}_t\|^2\right].
\end{equation}
Therefore, the convergence rate of SGD+Nesterov is lower bounded by $(1-O(1/\kappa))^t$. Note that the convergence rate of SGD on this  data is $(1-O(1/\kappa))^t$, hence SGD+Nesterov does not accelerate SGD on the two-dimensional component decoupled dataset.
\end{proof}
\begin{proof}[Proof of Corollary~\ref{cor:divergesn}]
When $\eta = 1/L_1$ and $\gamma\in[0.6,1]$, the condition in Eq.\ref{eq:step-size-diver} is satisfied. By Lemma~\ref{lemma:largerthan1}, the top eigenvalue $\lambda_{max}^{[1]}$ of $\mathcal{B}^{[1]}$ is larger than 1. By Lemma~\ref{lemma:sgdnes2}, $\Phi^{[1]}$ has non-zero component along the eigenvector with this top eigenvalue $\lambda_{max}^{[1]}$. Hence, $|\langle \mathbf{w}_t-\mathbf{w}^*,\mathbf{e}_1\rangle|$ grows at a rate of $(\lambda_{max}^{[1]})^t$. 
\end{proof}

\section{Proof of Theorem~\ref{thm:convergencemass}}\label{sec:appconmass}
We first give a lemma that is useful for dealing with the mini-batch scenario:
\begin{lemma}\label{lemma:km}
  If square loss $f$ is in the interpolated setting, i.e., there exists $\mathbf{w}^*$ such that $f(\mathbf{w}^*)=0$, then 
 $
     \mathbb{E}[\tilde{H}_m H^{-1}\tilde{H}_m]-H\preceq \frac{1}{m}\left(\tilde{\kappa}-1\right)H.
 $
\end{lemma}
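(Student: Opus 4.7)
The plan is to expand $\tilde H_m$ as an average of $m$ i.i.d.\ copies of $\tilde H_1$, split the resulting double sum into the $i=j$ and $i\neq j$ contributions, handle each with elementary algebra, and then invoke the definition of $\tilde\kappa$ on the diagonal term. No deep inequalities should be needed.

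First I would write $\tilde H_m = \frac{1}{m}\sum_{i=1}^m \tilde H_1^{(i)}$, where the $\tilde H_1^{(i)} := \tilde{\mathbf x}_i\tilde{\mathbf x}_i^T$ are i.i.d.\ with $\mathbb E[\tilde H_1^{(i)}] = H$. Plugging in,
\begin{equation}
\mathbb E[\tilde H_m H^{-1}\tilde H_m] = \frac{1}{m^2}\sum_{i,j=1}^m \mathbb E\bigl[\tilde H_1^{(i)} H^{-1}\tilde H_1^{(j)}\bigr].
\end{equation}
For $i\neq j$, independence and $\mathbb E[\tilde H_1^{(i)}]=H$ give $\mathbb E[\tilde H_1^{(i)} H^{-1}\tilde H_1^{(j)}] = H H^{-1} H = H$, and there are $m(m-1)$ such terms. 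For the $m$ diagonal terms the summand is $\mathbb E[\tilde H_1 H^{-1}\tilde H_1]$. Therefore
\begin{equation}
\mathbb E[\tilde H_m H^{-1}\tilde H_m] \;=\; \frac{1}{m}\,\mathbb E\bigl[\tilde H_1 H^{-1}\tilde H_1\bigr] \;+\; \frac{m-1}{m}\,H.
\end{equation}

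Subtracting $H = \tfrac{1}{m}H + \tfrac{m-1}{m}H$ from both sides collapses the second term and yields
\begin{equation}
\mathbb E[\tilde H_m H^{-1}\tilde H_m] - H \;=\; \frac{1}{m}\Bigl(\mathbb E\bigl[\tilde H_1 H^{-1}\tilde H_1\bigr] - H\Bigr).
\end{equation}
Now I would identify $\mathbb E[\tilde H_1 H^{-1}\tilde H_1] = \mathbb E[\,\|\tilde{\mathbf x}\|_{H^{-1}}^2\,\tilde{\mathbf x}\tilde{\mathbf x}^T]$, which by the definition of the statistical condition number $\tilde\kappa$ satisfies $\mathbb E[\tilde H_1 H^{-1}\tilde H_1] \preceq \tilde\kappa H$. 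Substituting gives the claimed bound $\frac{1}{m}(\tilde\kappa-1)H$, and since $\tilde\kappa\ge 1$ (as $\mathbb E[\tilde H_1 H^{-1}\tilde H_1] \succeq (\mathbb E\tilde H_1) H^{-1}(\mathbb E\tilde H_1) = H$ by Jensen's inequality), the right-hand side is a positive semidefinite matrix as required.

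I do not anticipate a real obstacle here: the main trick is just the diagonal/off-diagonal decomposition of the quadratic form in an i.i.d.\ average, and the interpolation assumption enters only through the fact that $\tilde{\mathbf x}_i$ (the mini-batch samples) are drawn from the same distribution whose second moment is $H$ — this is exactly what makes $\mathbb E[\tilde H_1]=H$ and hence eliminates bias in the cross terms. The only thing to double-check is the sign convention and that $\tilde\kappa-1\ge 0$, both of which are immediate.
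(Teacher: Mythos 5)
Your proof is correct and takes essentially the same route as the paper: decompose $\mathbb{E}[\tilde H_m H^{-1}\tilde H_m]$ into the $i=j$ and $i\neq j$ terms, use independence and $\mathbb{E}[\tilde{\mathbf{x}}_j\tilde{\mathbf{x}}_j^T]=H$ on the cross terms, and apply the definition of $\tilde\kappa$ to the diagonal term, giving $\frac{1}{m}\tilde\kappa H + \frac{m-1}{m}H$ and hence the claim after subtracting $H$. Your added observation that $\tilde\kappa\ge 1$ is a harmless extra not needed for the stated bound.
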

\begin{proof}
\begin{eqnarray}
\mathbb{E}[\tilde{H}_m H^{-1}\tilde{H}_m]= 
\frac{1}{m^2}\mathbb{E}\Big[\sum_{i=1}^m \tilde{\mathbf{x}_i}\tilde{\mathbf{x}_i}^TH^{-1}\tilde{\mathbf{x}_i}\tilde{\mathbf{x}_i}^T +\sum_{i\ne j} \tilde{\mathbf{x}_i}\tilde{\mathbf{x}_i}^TH^{-1}\tilde{\mathbf{x}_j}\tilde{\mathbf{x}_j}^T\Big]
\preceq\frac{1}{m}\tilde{\kappa}H + \frac{m-1}{m}H.\nonumber
\end{eqnarray}
\end{proof}
\begin{proof}[Proof of Theorem~\ref{thm:convergencemass}]
By Eq.\ref{eq:updateb}, we have
\begin{eqnarray}
\|\mathbf{v}_{t+1}-\mathbf{w}^*\|_{H^{-1}}^2 &=& \underbrace{\|(1-\alpha)\mathbf{v}_t+\alpha \mathbf{u}_t - \mathbf{w}^*\|_{H^{-1}}^2}_{\tilde{A}} + \underbrace{\delta^2\|\tilde{H}_m(\mathbf{u}_t-\mathbf{w}^*)\|_{H^{-1}}^2}_{\tilde{B}} \nonumber\\
& &\underbrace{-2\delta\langle \tilde{H}_m(\mathbf{u}_t-\mathbf{w}^*), (1-\alpha)\mathbf{v}_t+\alpha \mathbf{u}_t - \mathbf{w}^* \rangle_{H^{-1}}}_{\tilde{C}}\nonumber
\end{eqnarray}
Using the convexity of the norm $\|\cdot\|_{H^{-1}}$ and the fact that $\mu$ is the smallest non-zero eigenvalue of the Hessian, we get
\begin{eqnarray}
\mathbb{E}[\tilde{A}] &\le& (1-\alpha)\|\mathbf{v}_t-\mathbf{w}^*\|_{H^{-1}}^2 + \alpha\|\mathbf{u}_t-\mathbf{w}^*\|_{H^{-1}}^2
\le (1-\alpha)\|\mathbf{v}_t-\mathbf{w}^*\|_{H^{-1}}^2 + \frac{\alpha}{\mu}\|\mathbf{u}_t-\mathbf{w}^*\|^2.\nonumber
\end{eqnarray}
Applying Lemma~\ref{lemma:km} on the term $\tilde{B}$, we have
\begin{eqnarray}
\mathbb{E}[\tilde{B}] \le \delta^2\tilde{\kappa}_m\|\mathbf{u}_t-\mathbf{w}^*\|_H^2.
\end{eqnarray}
Note that $\mathbb{E}[\tilde{H}_m] = H$, then
\begin{eqnarray}
\mathbb{E}[\tilde{C}] &=& -2\delta \langle \mathbf{u}_t-\mathbf{w}^*, (1-\alpha)\mathbf{v}_t+\alpha \mathbf{u}_t - \mathbf{w}^* \rangle\nonumber\\
&=&-2\delta \langle \mathbf{u}_t-\mathbf{w}^*, \mathbf{u}_t-\mathbf{w}^* + \frac{1-\alpha}{\alpha}(\mathbf{u}_t-\mathbf{w}_t)\rangle\nonumber\\
&=& -2\delta \|\mathbf{u}_t-\mathbf{w}^*\|^2 + \frac{1-\alpha}{\alpha}\delta \left(\|\mathbf{w}_t-\mathbf{w}^*\|^2 - \|\mathbf{u}_t-\mathbf{w}^*\|^2-\|\mathbf{w}_t-\mathbf{u}_t\|^2\right)\nonumber\\
&\le& \frac{1-\alpha}{\alpha}\delta \|\mathbf{w}_t-\mathbf{w}^*\|^2 - \frac{1+\alpha}{\alpha}\delta\|\mathbf{u}_t-\mathbf{w}^*\|^2.\label{eq:ax1}
\end{eqnarray}
Therefore, 
\begin{equation}
\mathbb{E}\left[\|\mathbf{v}_{t+1}-\mathbf{w}^*\|_{H^{-1}}^2\right] \le (1-\alpha)\|\mathbf{v}_t-\mathbf{w}^*\|_{H^{-1}}^2 + \frac{1-\alpha}{\alpha}\delta \|\mathbf{w}_t-\mathbf{w}^*\|^2 + \left(\frac{\alpha}{\mu} - \frac{1+\alpha}{\alpha}\delta\right)\|\mathbf{u}_t-\mathbf{w}^*\|^2 + \delta^2\tilde{\kappa}_m\|\mathbf{u}_t-\mathbf{w}^*\|_H^2.\nonumber
\end{equation}
On the other hand, by the fact that $\mathbb{E}[\tilde{H}_m^2] \preceq \frac{1}{m}\mathbb{E}[\tilde{H}_1^2] + \frac{m-1}{m}H^2$, (see~\cite{ma2017power}),
\begin{eqnarray}
\mathbb{E}[\|\mathbf{w}_{t+1}-\mathbf{w}^*\|^2] &=& \mathbb{E}\left[\|\mathbf{u}_t-\mathbf{w}^* - \eta \tilde{H}_m(\mathbf{u}_t-\mathbf{w}^*)\|^2\right]\nonumber\\
&\le& \|\mathbf{u}_t-\mathbf{w}^*\|^2-2\eta \|\mathbf{u}_t-\mathbf{w}^*\|_H^2 + \eta^2 L_m \|\mathbf{u}_t-\mathbf{w}^*\|_H^2.
\end{eqnarray}
Hence,
\begin{eqnarray}
& &\mathbb{E}\left[\|\mathbf{v}_{t+1}-\mathbf{w}^*\|_{H^{-1}}^2 + \frac{\delta}{\alpha}\|\mathbf{w}_{t+1}-\mathbf{w}^*\|^2 \right]\nonumber\\
&\le& (1-\alpha)\|\mathbf{v}_t-\mathbf{w}^*\|_{H^{-1}}^2 + \frac{1-\alpha}{\alpha}\delta \|\mathbf{w}_t-\mathbf{w}^*\|^2 
+ \underbrace{\left(\alpha/\mu-\delta\right)}_{c_1}\|\mathbf{u}_t-\mathbf{w}^*\|^2 \nonumber\\
& &+ \underbrace{\left(\delta^2\tilde{\kappa}_m + \delta\eta^2 L_m/\alpha - 2\eta\delta/\alpha\right)}_{c_2}\|\mathbf{u}_t-\mathbf{w}^*\|_H^2.\nonumber
\end{eqnarray}
By the condition in Eq.\ref{eq:condition}, $c_1 \le 0, c_2\le 0$, then the last two terms are non-positive. Hence,
\begin{eqnarray}
\mathbb{E}\left[\|\mathbf{v}_{t+1}-x^*\|_{H^{-1}}^2 + \frac{\delta}{\alpha}\|\mathbf{w}_{t+1}-\mathbf{w}^*\|^2 \right] &\le& (1-\alpha)\left(\|\mathbf{v}_{t}-\mathbf{w}^*\|_{H^{-1}}^2 + \frac{\delta}{\alpha}\|\mathbf{w}_{t}-\mathbf{w}^*\|^2\right)\nonumber\\
&\le& (1-\alpha)^{t+1}\left(\|\mathbf{v}_{0}-\mathbf{w}^*\|_{H^{-1}}^2 + \frac{\delta}{\alpha}\|\mathbf{w}_{0}-\mathbf{w}^*\|^2\right).\nonumber\label{eq:convrate}
\end{eqnarray}
\end{proof}

\section{Proof of Theorem~\ref{thm:generalconvex}}\label{sec:conv}
\begin{proof}
The update rule for variable $\mathbf{v}$ is, as in Eq.\ref{eq:updateb}:
\begin{equation}
    \mathbf{v}_{t+1} = (1-\alpha)\mathbf{v}_t + \alpha \mathbf{u}_t - \delta \tilde{\nabla}_mf(\mathbf{u}_t).
\end{equation}
By $1/\mu$-strong convexity of $g$, we have
\begin{equation}
    g(\mathbf{v}_{t+1}) \le g\left((1-\alpha)\mathbf{v}_t+\alpha\mathbf{u}_t\right) + \langle \nabla g\left((1-\alpha)\mathbf{v}_t+\alpha\mathbf{u}_t\right), -\delta \tilde{\nabla}_mf(\mathbf{u}_t)\rangle + \delta^2\frac{1}{2\mu}\|\tilde{\nabla}_m f(\mathbf{u}_t)\|^2.
\end{equation}
Taking expectation on both sides, we get
\begin{eqnarray}
    \mathbb{E}[g(\mathbf{v}_{t+1})] &=& g\left((1-\alpha)\mathbf{v}_t+\alpha\mathbf{u}_t\right) + \langle \nabla g\left((1-\alpha)\mathbf{v}_t+\alpha\mathbf{u}_t\right), -\delta \nabla f(\mathbf{u}_t)\rangle + \delta^2\frac{1}{2\mu}\mathbb{E}[\|\tilde{\nabla}_m f(\mathbf{u}_t)\|^2]\nonumber\\
    &\le& (1-\alpha)g(\mathbf{v}_t) + \alpha g(\mathbf{u}_t) -\delta(1-\epsilon) \langle (1-\alpha)\mathbf{v}_t+\alpha\mathbf{u}_t-\mathbf{w}^*, \mathbf{u}_t-\mathbf{w}^*\rangle + \delta^2\frac{2L_m}{2\mu}f(\mathbf{u}_t),\nonumber
\end{eqnarray}
where in the last inequality we used the convexity of $g$ and the assumption in the Theorem.

By Eq.\ref{eq:ax1}, 
\begin{equation}
    -\delta(1-\epsilon) \langle (1-\alpha)\mathbf{v}_t+\alpha\mathbf{u}_t-\mathbf{w}^*, \mathbf{u}_t-\mathbf{w}^*\rangle \le \frac{\delta(1-\epsilon)}{2}\left(\frac{1-\alpha}{\alpha}\|\mathbf{w}_t-\mathbf{w}^*\|^2 - \frac{1+\alpha}{\alpha}\|\mathbf{u}_t-\mathbf{w}^*\|^2\right).\nonumber
\end{equation}
By the strong convexity of $g$, 
$$ \alpha g(\mathbf{u}_t) \le \frac{\alpha}{2\mu}\|\mathbf{u}_t-\mathbf{w}^*\|^2.$$
Hence,
\begin{equation}\label{eq:ax2}
    \mathbb{E}[g(\mathbf{v}_{t+1})] \le (1-\alpha)g(\mathbf{v}_t) + (1-\alpha)\frac{\delta(1-\epsilon)}{2\alpha}\|\mathbf{w}_t-\mathbf{w}^*\|^2 +\left(\frac{\alpha}{2\mu}-\frac{\delta(1+\alpha)(1-\epsilon)}{2\alpha}\right)\|\mathbf{u}_t-\mathbf{w}^*\|^2 + \delta^2\kappa_m f(\mathbf{u}_t).
\end{equation}
On the other hand,
\begin{eqnarray}\label{eq:ax3}
    \mathbb{E}[\|\mathbf{w}_{t+1}-\mathbf{w}^*\|^2] &=& \|\mathbf{u}_t-\mathbf{w}^*\|^2 - 2\eta\langle \mathbf{u}_t-\mathbf{w}^*, \nabla f(\mathbf{u}_t)\rangle + \eta^2 \mathbb{E}[\|\tilde{\nabla}_m f(\mathbf{u}_t)\|^2]\nonumber\\
    &\le&  \|\mathbf{u}_t-\mathbf{w}^*\|^2 -2\eta f(\mathbf{u}_t)  + \eta^2\cdot 2L_m f(\mathbf{u}_t),
\end{eqnarray}
where in the last inequality we used the convexity of $f$.

Multiply a factor of $\frac{\delta(1-\epsilon)}{2\alpha}$ onto Eq.\ref{eq:ax3} and add it to Eq.\ref{eq:ax2}, then
\begin{eqnarray}
\mathbb{E}\left[g(\mathbf{v}_{t+1}) + \frac{\delta(1-\epsilon)}{2\alpha}\|\mathbf{w}_{t+1}-\mathbf{w}^*\|^2\right] &\le& (1-\alpha)g(\mathbf{v}_t) + (1-\alpha)\frac{\delta(1-\epsilon)}{2\alpha}\|\mathbf{w}_t-\mathbf{w}^*\|^2\nonumber\\
& &+ \frac{1}{2}\left(\frac{\alpha}{\mu}-\delta(1-\epsilon)\right)\|\mathbf{u}_t-\mathbf{w}^*\|^2 \nonumber\\
& &+ \left(\frac{\delta}{\alpha}(1-\epsilon)(\eta^2L_m-\eta) + \delta^2\kappa_m\right)f(\mathbf{u}_t).\label{eq:ax4}
\end{eqnarray}
If the hyper-parameters are selected as:
\begin{equation}
    \eta = \frac{1}{2L_m},\quad \alpha = \frac{1-\epsilon}{2\kappa_m}, \quad \delta = \frac{1}{2L_m},
\end{equation}
then the last two terms in Eq.\ref{eq:ax4} are non-positive. Hence,
\begin{equation}
    \mathbb{E}\left[g(\mathbf{v}_{t+1}) + \frac{\delta(1-\epsilon)}{2\alpha}\|\mathbf{w}_{t+1}-\mathbf{w}^*\|^2\right] \le(1-\alpha)\mathbb{E}\left[g(\mathbf{v}_{t}) + \frac{\delta(1-\epsilon)}{2\alpha}\|\mathbf{w}_{t}-\mathbf{w}^*\|^2\right],
\end{equation}
which implies
\begin{equation}
    \mathbb{E}\left[g(\mathbf{v}_{t}) + \frac{\delta(1-\epsilon)}{2\alpha}\|\mathbf{w}_{t}-\mathbf{w}^*\|^2\right] \le (1-\frac{1-\epsilon}{2\kappa_m})^t \mathbb{E}\left[g(\mathbf{v}_{0}) + \frac{\delta(1-\epsilon)}{2\alpha}\|\mathbf{w}_{0}-\mathbf{w}^*\|^2\right].
\end{equation}
Since $f(\mathbf{w}_t) \le L/2\cdot \|\mathbf{w}_{t}-\mathbf{w}^*\|^2$ (by smoothness), then we have the final conclusion
\begin{equation}
    \mathbb{E}[f(\mathbf{w}_t)] \le C\cdot (1-\frac{1-\epsilon}{2\kappa_m})^t,
\end{equation}
with $C$ being a constant.
\end{proof}

\section{Evaluation on Synthetic Data}
\subsection{Discussion on Synthetic Datasets}\label{sec:2-ddata}
\paragraph{Component Decoupled Data.}The data is defined as follows (also defined in section~\ref{sec:nesterov}):

Fix an arbitrary $\mathbf{w}^*\in\mathbb{R}^2$ and let $z$ be randomly drawn from the zero-mean Gaussian distribution with variance $\mathbb{E}[z^2] = 2$, i.e. $z\sim \mathcal{N}(0,2)$.
The data points $(\mathbf{x},y) \in \mathcal{D}$ are constructed as follow:
\begin{equation}
\mathbf{x} = \left\{\begin{array}{cc}
\sigma_1\cdot z \cdot \mathbf{e}_1 &\mathrm{w.p.}\ 0.5 \\
\sigma_2\cdot z \cdot \mathbf{e}_2 & \mathrm{w.p.}\ 0.5,
\end{array}
\right. \quad  \mathrm{and} \ y = \langle \mathbf{w}^*, \mathbf{x}\rangle,
\end{equation}
where $\mathbf{e}_1,\mathbf{e}_2\in\mathbb{R}^2$ are canonical basis vectors, $\sigma_1 > \sigma_2 > 0$. 

Note that the corresponding square loss function on $\mathcal{D}$ is in the interpolation regime, since $f(\mathbf{w}^*) = 0$.
The Hessian and stochastic Hessian matrices turn out to be 
\begin{equation}
H = \left[\begin{array}{cc}
\sigma_1^2 & 0 \\
0 & \sigma_2^2
\end{array}
\right], \quad \tilde{H} = \left[\begin{array}{cc}
(x^{[1]})^2 & 0 \\
0 & (x^{[2]})^2
\end{array}
\right].
\end{equation}
Note that $\tilde{H}$ is diagonal, which implies that stochastic gradient based algorithms applied on this data evolve independently in each coordinate. This allows a simplified directional analysis of the algorithms applied.

Here we list some useful results for our analysis.
The fourth-moment of Gaussian variable  $\mathbb{E}[z^4] = 3\mathbb{E}[z^2]^2 = 12$. Hence, $\mathbb{E}[(x^{[j]})^2]= \sigma_j^2$ and $\mathbb{E}[(x^{[j]})^4] = 6\sigma_j^4$, where superscript $j = 1,2$ is the index for coordinates in $\mathbb{R}^2$. It is easy to check that $\kappa_1 = 6\sigma_1^2/\sigma_2^2$ and $\tilde{\kappa} = 6$.

\paragraph{Gaussian Data.} Suppose the data feature vectors $\{\mathbf{x}_i\}$ are zero-mean Gaussian distributed, and $y_i = \langle \mathbf{w}^*,\mathbf{x}_i\rangle, \ \forall i$, where $\mathbf{w}^*$ is fixed but unknown. Then, by the fact that $\mathbb{E}[z_1z_2z_3z_4] = \mathbb{E}[z_1z_2]\mathbb{E}[z_3z_4]+\mathbb{E}[z_1z_3]\mathbb{E}[z_2z_4]+\mathbb{E}[z_1z_4]\mathbb{E}[z_2z_3]$ for zero-mean Gaussian random variables $z_1,z_2,z_3$ and $z_4$, we have
\begin{eqnarray}
    \mathbb{E}[\tilde{H}\tilde{H}] &=& \left(2H + tr(H)\right)H,\nonumber\\
    \mathbb{E}[\tilde{H}H^{-1}\tilde{H}] &=& \mathbb{E}[\tilde{\mathbf{x}}\tilde{\mathbf{x}}^TH^{-1}\tilde{\mathbf{x}}\tilde{\mathbf{x}}^T] = (2+d)H,\nonumber
\end{eqnarray}
where $d$ is the dimension of the feature vectors. Hence $L_1=2\lambda_{max}(H) + tr(H)$ and $\tilde{\kappa} = 2+d$, and $\kappa_1=L_1/\mu$. This implies a convergence rate of $O(e^{-t/\sqrt{(2+d)\kappa_1}})$ of MaSS when batch size is 1. Particularly, if the feature vectors are $n$-dimensional, e.g., as in kernel learning, then  MaSS with mini batches of size 1 has a convergence rate of $O(e^{-t/\sqrt{n\kappa_1}})$.

\begin{figure}
\centering
\includegraphics[width=0.4\textwidth]{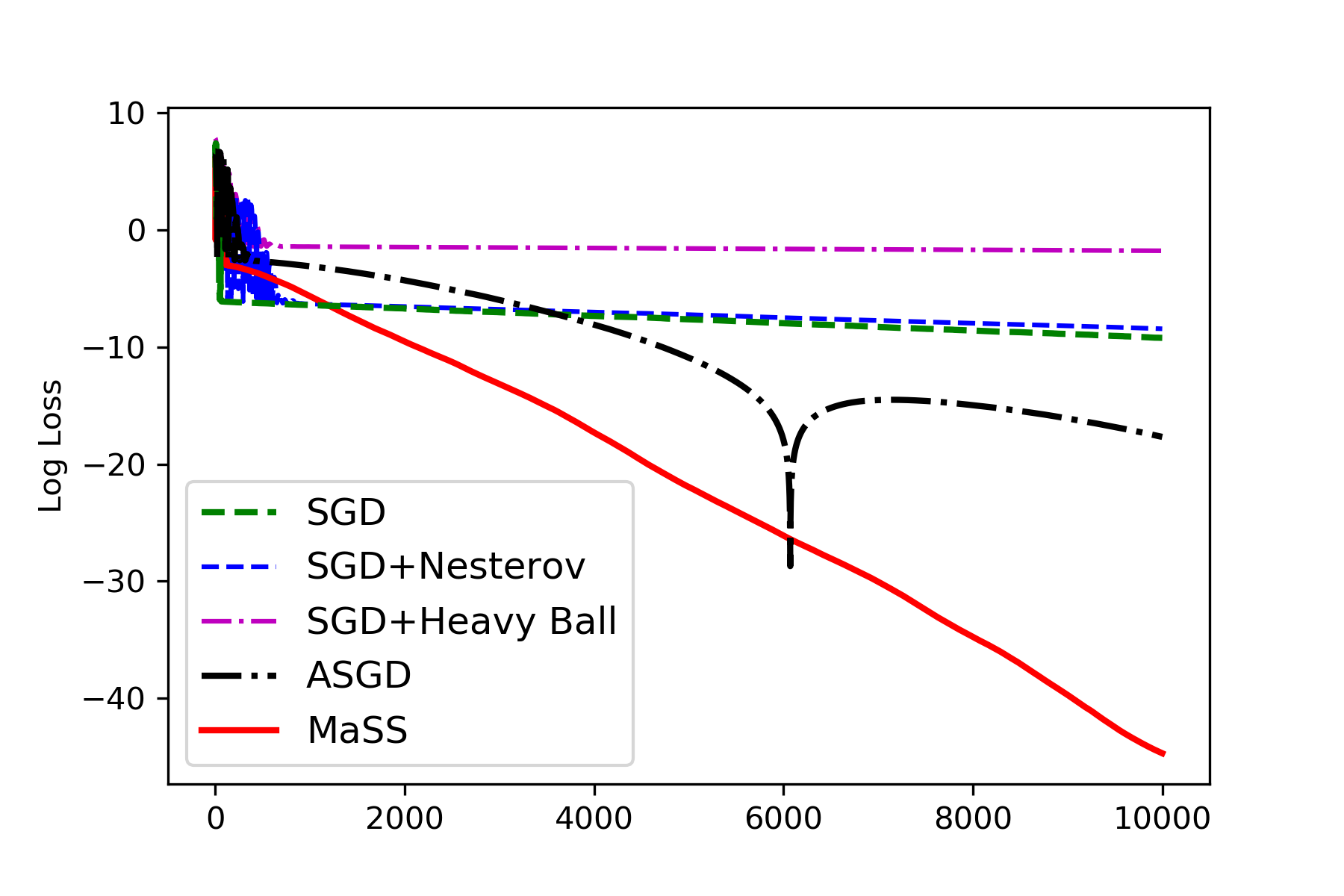}
\includegraphics[width=0.4\textwidth]{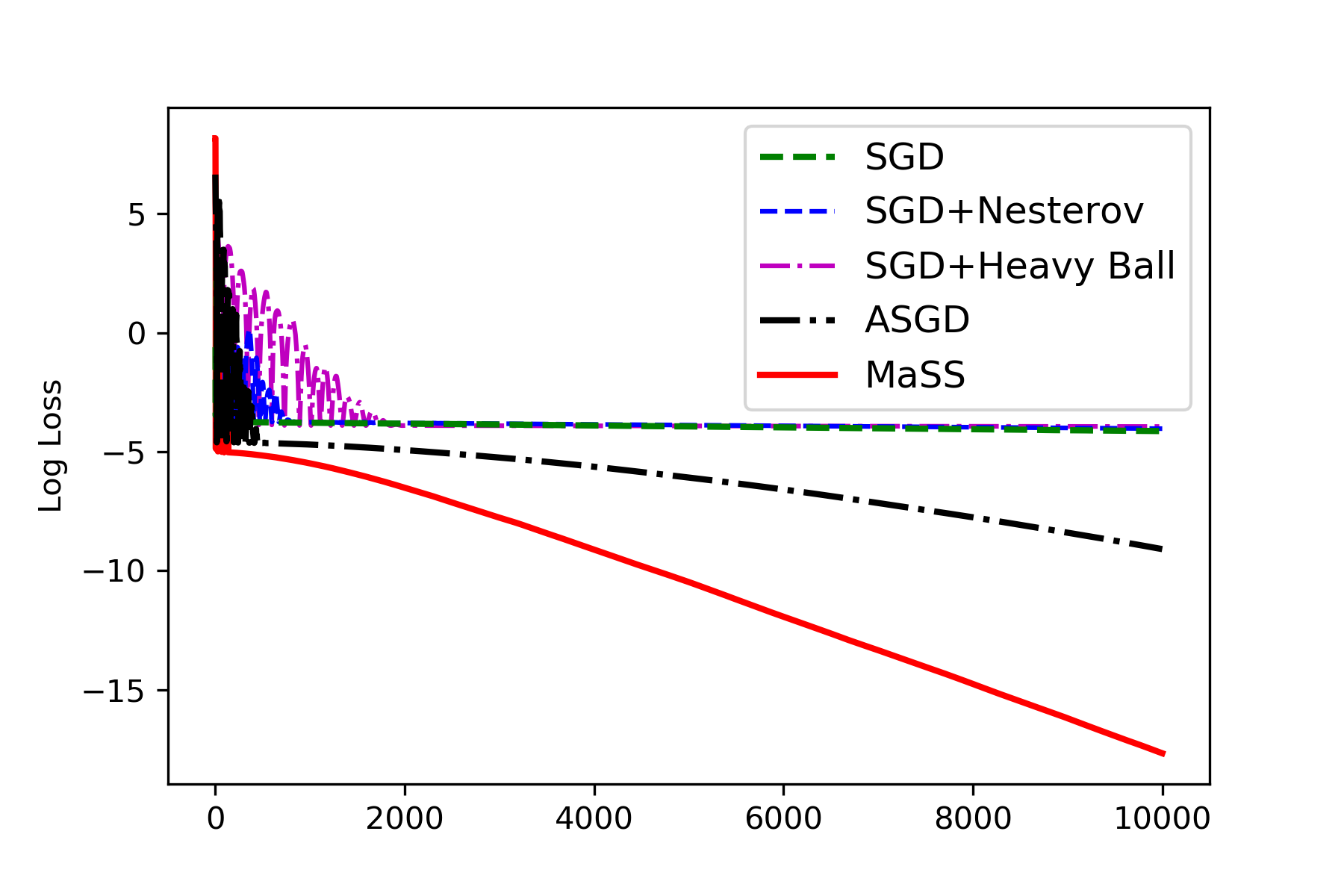}
\caption{Fast convergence of MaSS and non-acceleration of SGD+Nesterov on component decoupled data. (left) $\sigma_1^2 = 1$ and $\sigma_2^2=1/2^12$; (right) $\sigma_1^2 = 1$ and $\sigma_2^2=1/2^15$.}
\label{fig:add_synth1}
\end{figure}
\begin{figure}
\centering
\includegraphics[width=0.4\textwidth]{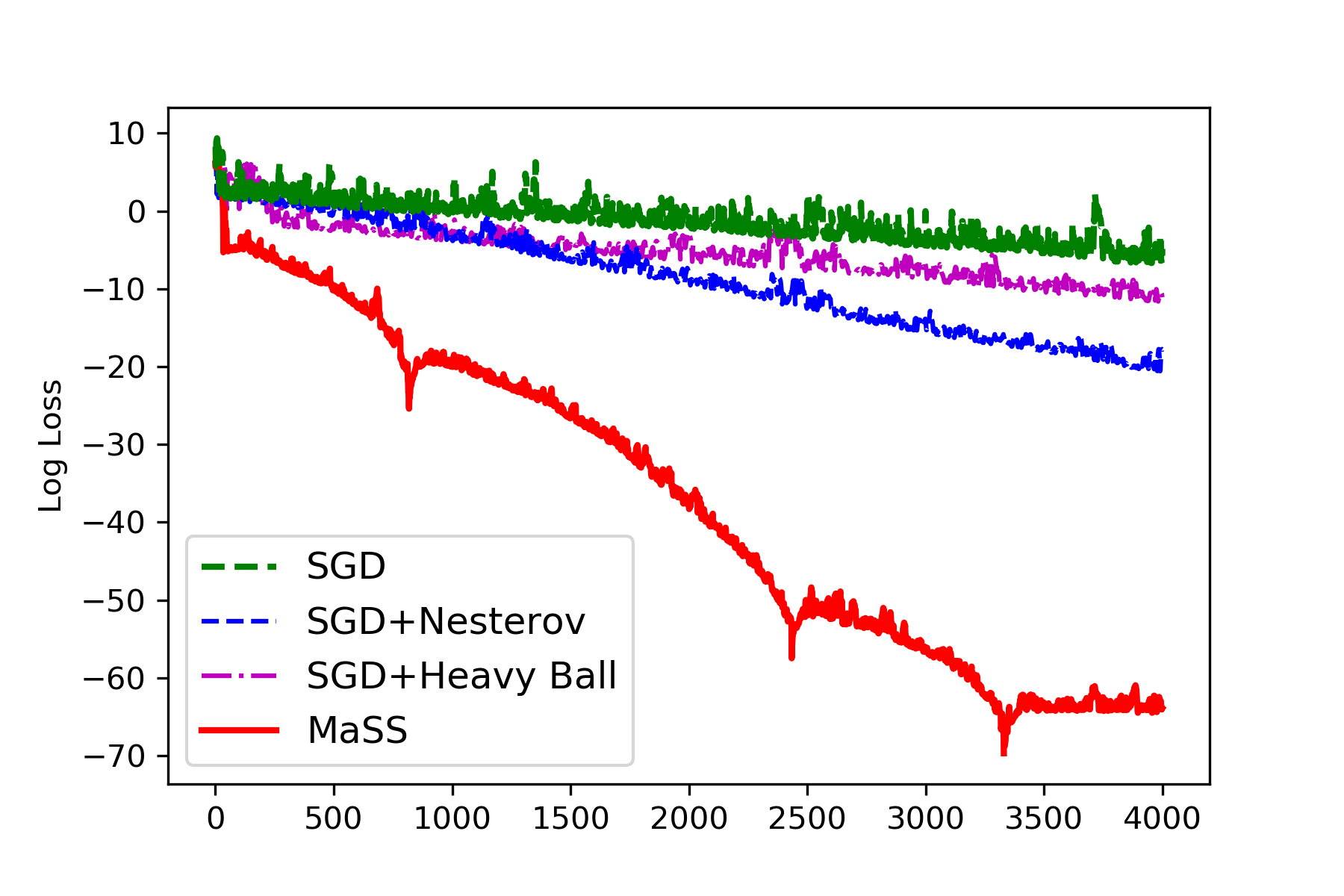}
\includegraphics[width=0.4\textwidth]{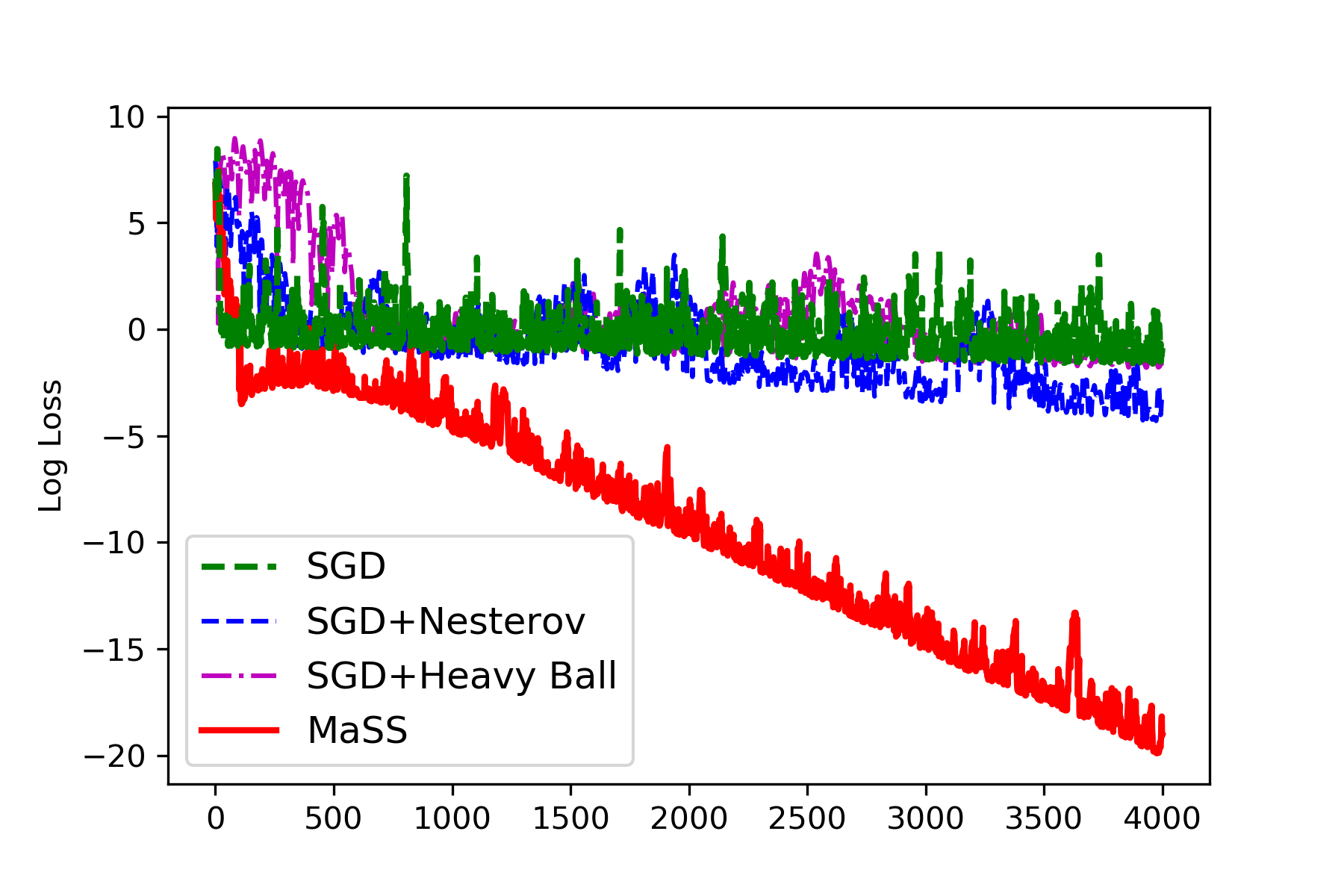}
\caption{Fast convergence of MaSS and non-acceleration of SGD+Nesterov on 3-d Gaussian data. (left) $\sigma_1^2 = 1$ and $\sigma_2^2=1/2^9$; (right) $\sigma_1^2 = 1$ and $\sigma_2^2=1/2^12$.}
\label{fig:add_synth2}
\end{figure}

\subsection{Evaluation of Fast Convergence of MaSS and non-acceleration of SGD+Nesterov}\label{app:moresyn}
In this subsection, we show additional empirical verification for the fast convergence of MaSS, as well as the non-acceleration of SGD+Nesterov, on synthetic data. In addition, we show the divergence of SGD+Nesterov when using the same step size as SGD and MaSS, as indicated by Corollary~\ref{cor:divergesn}.

We consider two families of synthetic datasets:
\begin{itemize}
\item {\it Component decoupled: (as defined in Section~\ref{sec:nesterov}).} 
Fix an arbitrary $\mathbf{w}^*\in\mathbb{R}^2$ with all components non-zero. $\mathbf{x}_i$ is drawn from $\mathcal{N}(\mathbf{0}, diag(2\sigma_1^2,0))$ or $\mathcal{N}(\mathbf{0}, diag(0, 2\sigma_2^2))$ with probability 0.5 each. $y_i = \langle \mathbf{w}^*,\mathbf{x}_i\rangle$ for all $i$.
\item {\it 3-d Gaussian}: Fix an arbitrary $\mathbf{w}^*\in\mathbb{R}^3$ with all components non-zero. $\mathbf{x}_i$ are independently drawn from $\mathcal{N}(\mathbf{0}, diag(\sigma_1^2, \sigma_1^2, \sigma_2^2))$, and $y_i = \langle \mathbf{w}^*,\mathbf{x}_i\rangle$ for all $i$.
\end{itemize}
\paragraph{Non-acceleration of SGD+Nesterov and accelerated convergence of MaSS.} We compare MaSS with SGD and SGD+Nesterov on linear regression with the above datasets. Each comparison is performed on either 3-d Gaussian or component decoupled data with fixed $\sigma_1$ and $\sigma_2$.
 For each setting of $(\sigma_1,\sigma_2)$, we randomly select $\mathbf{w}^*$, and generate 2000 samples for the dataset. Batch sizes for all algorithms are set to be 1. We report the performances of SGD, SGD+Nesterov and SGD+HB using their best hyper-parameter setting selected from dense grid search. On the other hand, we do not tune hyper-parameters of MaSS, but use the suggested setting by our theoretical analysis, Eq.~\ref{eq:optimalpara2}. Specifically, we use\begin{subequations}
\begin{eqnarray}
    & & \textrm{Component decoupled: } \quad \ \eta_1^* = \frac{1}{6}, \quad \eta_2^* = \frac{5}{36 + 6\sigma_2},     \quad  \gamma^* = \frac{6-\sigma_2}{6+\sigma_2}; \\
    & & \textrm{3-d Gaussian: }  \quad\quad\quad\quad\quad \eta_1^* = \frac{1}{4}, \quad \eta_2^* = \frac{1}{5}\frac{\sqrt{20}}{\sqrt{20}+\sigma_2}, \quad \gamma^* = \frac{\sqrt{20}-\sigma_2}{\sqrt{20}+\sigma_2}.
\end{eqnarray}
\end{subequations}
For ASGD, we use the setting suggested by~\cite{jain2017accelerating}.

Figure~\ref{fig:add_synth1} (in addition to Fig~\ref{fig:synthetic_data_2}) and Figure~\ref{fig:add_synth2} show the curves of the compared algorithms under various data settings. We observe that: 1) SGD+Nesterov with its best hyper-parameters is almost identical to the optimal SGD; 2) MaSS, with the suggested hyper-parameter selections, converges faster than all of the other algorithms, especially SGD. These observations are consistent with our theoretical results about non-acceleration of SGD+Nesterov, Theorem~\ref{thm:divsnag}, and accelerated convergence of MaSS, Theorem~\ref{thm:optimal}.

 Recall that MaSS differs from SGD+Nesterov by only a compensation term, this experiment illustrates the importance of this term. Note that the vertical axis is log scaled. Then the linear decrease of log losses in the plots implies an exponential loss decrease, and the slopes correspond to the coefficients in the exponents. 
 
  \begin{figure}
     \centering
     \includegraphics[width=0.4\textwidth]{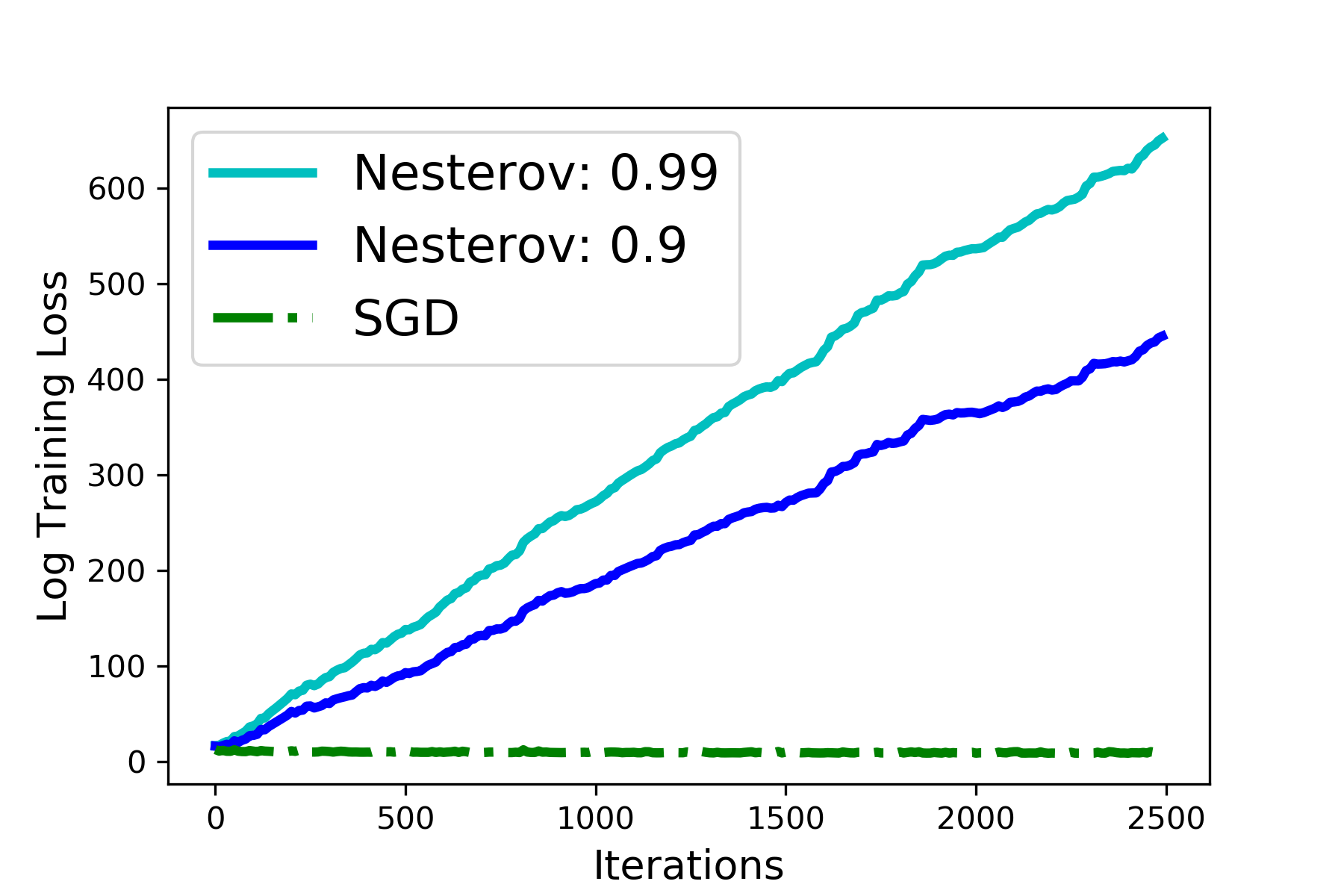}
     \caption{Divergence of SGD+Nesterov with large step size. Step size: $\eta^* = 1/L_1 = 1/6$, and momentum parameter: $\gamma$ is $0.9$ or $0.99$.}
     \label{fig:nest_diverge}
 \end{figure}
 \paragraph{Divergence of SGD+Nesterov with large step size.} As discussed in Corollary~\ref{cor:divergesn}, SGD+Nesterov diverges with step size $\eta^*=1/L_1$ (when $\gamma\in[0.6,1]$), which is the optimal choice of step size for both SGD and MaSS. We run SGD+Nesterov, with step size $\eta^*=1/L_1$, to optimize the square loss function on component decoupled data mentioned above. Figure~\ref{fig:nest_diverge} shows the divergence of SGD+Nesterov with two common choices of momentum parameter $\gamma$: $0.9$ and $0.99$.

\subsection{Comparison of Vaswani's method~\cite{vaswani2018fast} with SGD and MaSS on Gaussian-distributed data}\label{app:vaswani}

The analysis in Vaswani et. al.~\cite{vaswani2018fast} is based on the \emph{strong growth condition} (SGC). Assuming SGC  with the parameter $\rho$ on the loss function they prove convergence rate $\left(1-\sqrt{1/\rho^2\kappa}\right)^t$ of their method (called SGD with Nesterov acceleration in their paper), where $t$ is the iteration number. In the following, we show that, on a simple (zero-mean) Gaussian distributed data, this rate is  slower than that of  SGD, which has a rate of $\left(1-1/\kappa\right)^t$. On the other hand, MaSS achieves the accelerated rate $\left(1-1/\sqrt{(2+d)\kappa}\right)^t$.

Consider the problem of minimizing the squared loss, $f(\mathbf{w}) = \sum_i f_i(\mathbf{w}) = \frac{1}{2}\sum_i (\mathbf{w}^T\mathbf{x}_i-y_i)^2$, over a zero-mean Gaussian distributed dataset, as defined in \ref{sec:2-ddata}. Then,
\begin{eqnarray}
\mathbb{E}_i \| \nabla f_i(\mathbf{w})\|^2 &=& (\mathbf{w}-\mathbf{w}^*)^T\mathbb{E}_i\left[\mathbf{x}_i\mathbf{x}_i^T\mathbf{x}_i\mathbf{x}_i^T\right](\mathbf{w}-\mathbf{w}^*)\nonumber\\
&=&(\mathbf{w}-\mathbf{w}^*)^T (2H^2 + tr(H)H)(\mathbf{w}-\mathbf{w}^*),
\end{eqnarray}
whereas,
\begin{eqnarray}
 \| \nabla f(\mathbf{w})\|^2 &=& (\mathbf{w}-\mathbf{w}^*)^TH^2(\mathbf{w}-\mathbf{w}^*).
\end{eqnarray}
According to the definition of SGC, 
\begin{equation}
    \mathbb{E}_i \| \nabla f_i(\mathbf{w})\|^2 \le \rho \|\nabla f(\mathbf{w})\|^2.
\end{equation}
Hence the  SGC parameter $\rho$ must satisfy
\begin{equation}
    \rho \ge 2+\frac{tr(H)}{\lambda_{min}(H)} > \kappa,
\end{equation}
where $\kappa$ is the condition number.
 
In this case, the convergence rate $\left(1-\sqrt{1/\rho^2\kappa}\right)^t$ of Vaswani's method would be slower than $\left(1-1/\kappa^{3/2}\right)^t$, which is slower than SGD.

On the other hand, MaSS accelerates over SGD on this dataset. Recall from Section~\ref{sec:2-ddata} that, for this Gaussian data,  $\tilde{\kappa} = 2+d$, where $d$ is the dimension of the the data. According to Theorem~\ref{thm:optimal}, the convergence rate of MaSS is $\left(1-1/\sqrt{(2+d)\kappa}\right)^t$, which is faster than that of  SGD.

\section{Mini-batch Dependence of Convergence Speed and Optimal Hyper-parameters}\label{app:linear}
\subsection{Analysis}
The two critical points are defined as follow:
$$m^*_1 = \min(L_1/L,\tilde{\kappa}), m^*_2 = \max(L_1/L,\tilde{\kappa}).$$
When $m < m^*_1$, we have $m<L_1/L$ and $m<\tilde{\kappa}$, which implies   $L_m \approx L_1/m, \kappa_m \approx \kappa_1/m$ and $\tilde{\kappa}_m \approx \tilde{\kappa}/m$. Plugging into Eq.\ref{eq:optimalpara1}, we find that the optimal selection of hyper-parameters is approximated by:
\begin{equation}
    \eta^*(m) \approx m\cdot\eta^*(1),\alpha^*(m) \approx m\cdot\alpha^*(1),\delta^*(m) \approx m\cdot\delta^*(1),\nonumber
\end{equation}
and the convergence rate in Eq.\ref{eq:rate} is approximately $O(e^{-m\cdot t/\sqrt{\kappa_1\tilde{\kappa}}}),$
the latter of which indicates a linear gains in convergence when $m$ increases.

Now consider the following two cases: 
(i) For $m \ge L_1/L$, we have $\kappa_m \le 2\kappa$ following the definition of $L_m$ in Eq.\ref{eq:lm}; (ii) For $m \ge \tilde{\kappa}$, $\tilde{\kappa}_m = \tilde{\kappa}/m + (m-1)/m \le 2$. When either case holds $m\ge m^*_1$, the convergence rate (Eq.\ref{eq:rate}) becomes $$O\Big(\max\Big\{e^{-\frac{\sqrt{m}\cdot t}{\sqrt{2\kappa(\tilde{\kappa} + m-1)}}},e^{-\frac{\sqrt{m}\cdot t}{\sqrt{2\kappa_1+2(m-1)\kappa}}}\Big\}\Big),$$ indicating that increasing the mini-batch size results in  sublinear gain in the convergence speed, of at most  $\sqrt{m}$. 
Moreover, in the saturation regime $m>m_2^*$ (i.e., when both conditions (i) and (ii) hold), the convergence rate is then upper-bounded by $O(\exp (-\frac{t}{2\sqrt{\kappa}}))$. That implies that a single iteration of MaSS with mini-batch size $m$ is equivalent up to a multiplicative factor of $2$ to an iteration with full gradient. 

\subsection{Empirical Verification}
We empirically verify the three-regime partition observed in section~\ref{sec:linear} using zero-mean Gaussian data.

In this evaluation, we set the covariant matrix of the (zero-mean) Gaussian to be:
\begin{equation}
    \Sigma = diag(\underbrace{1,\cdots, 1}_{8}, \underbrace{2^{-10},\cdots, 2^{-10}}_{40}).
\end{equation}
and generate 2000 samples following the Gaussian distribution $\mathcal{N}(\mathbf{0},\Sigma)$.
Recall from Example~1 that, for zero-mean Gaussian data, $L_1 = 2\lambda_{max}(H) + tr(H)$ and $\tilde{\kappa} = 2+d$. In this case, $L_1/L\approx 10$ and $\tilde{\kappa} \approx 50$. The two critical values $m^*_1, m^*_2$ of mini-batch size are then:
$$ m^*_1 \approx 10, \quad m^*_2 \approx 50.$$

\begin{wrapfigure}{r}{0.5\textwidth} 
\vspace{-20pt}
  \begin{center}
    \includegraphics[width=0.40\textwidth]{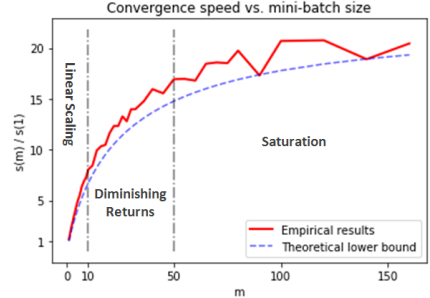}
    \caption{Convergence speed per iteration as a function of mini-batch size $m$. Red solid curve: experimental results. Larger $s(m)$ indicates faster convergence. Blue dash curve: theoretical lower bound $\sqrt{\kappa_m\tilde{\kappa}_m}$. Critical mini-batch size values: $m^*_1 \approx 10, m^*_2 \approx 50.$}
    \label{fig:linear2}
  \end{center}
  \vspace{-20pt}
  \vspace{1pt}
\end{wrapfigure} 
In the experiments, we run MaSS with a variaty of mini-batch size $m$, ranging from 1 to 160, on this Gaussian dataset. For each training process, we compute the convergence speed $s(m)$, which is defined to be the inverse of the number of iterations needed to achieve a training error of $\varepsilon$. Running 1 iteration of MaSS with mini-batch of size $m$ is almost as effective as running $s(m)/s(1)$ iterations of MaSS with mini-batch of size 1. 

Figure~\ref{fig:linear2} demonstrates the convergence speed $s(m)$ as a function of the mini-batch size $m$. We see that the three regimes defined by $m^*_1$ and $m^*_2$ coincide with the analysis in subsection~\ref{sec:linear}: left  ($m<m^*_1$), almost linear; middle  ($m\in[m^*_1,m^*_2]$), sublinear; right  ($m>m^*_2$), saturation.

\section{Experimental Setup}
\subsection{Neural Network Architectures}\label{app:archi}
{\bf Fully-connected Network.}
The fully-connected neural network has 3 hidden layers, with 100 ReLU-activated neurons in each layer. After each hidden layer, there is a dropout layer with keep probability 0.5. This network takes 784-dimensional vectors as input, and has 10 softmax-activated output neurons. 
It has $\approx$99k trainable parameters in total.

{\bf Convolutional Neural Network (CNN).} The CNN we considered has three convolutional layers with kernel size of $5\times 5$ and without padding. The first two convolutional layers have 64 channels each, while the last one has 128 channels.  Each convolutional layer is followed by a $2\times 2$ max pooling layer with stride of 2. On top of the last max pooling layer, there is a fully-connected ReLU-activated layer of size 128 followed by the output layer of size 10 with softmax non-linearity. A dropout layer with keep probability 0.5 is applied after the full-connected layer. 
The CNN has $\approx$576k trainable parameters in total. 

{\bf Residual Network (ResNet).}
We train a ResNet~\cite{he2016deep} with 32 convolutional layers. The ResNet-32 has a sequence of 15 residual blocks: the first 5 blocks have an output of shape $32\times32\times16$, the following 5 blocks have an output of shape $16\times16\times32$ and the last 5 blocks have an output of shape $8\times 8 \times 64$. On top of these blocks, there is a $2\times 2$ average pooling layer with stride of 2, followed by a output layer of size 10 with softmax non-linearity. 
The ResNet-32 has $\approx$467k trainable parameters in total.

We use the fully-connected network to classify the MNIST dataset, and use CNN and ResNet to classify the CIFAR-10 dataset.

\subsection{Experimental setup for test performance of MaSS}\label{app:generalize}
The 3-layer CNN we use for this experiment is slightly different from the aforementioned one. We insert a batch normalization (BN) layer after each convolution computation, and remove the dropout layer in the fully-connected phase.
\paragraph{Reduction of Learning Rate.} 
On both CNN and ResNet-32, we initialize the learning rate of SGD, SGD+Nesterov and Mass using the same value, while that of Adam is set to 0.001, which is common default setting. On CNN, the learning rates for all algorithms drop by 0.1 at 60 and 120 epochs, and we train for total 150 epochs. On ResNet-32, the learning rates drop by 0.1 at 150 and 225 epochs, and we train for total 300 epochs.

Whenever the learning rate of MaSS reduced, we restart the MaSS with the latest learned weights. The reason for restarting is to avoid the mismatch between the large momentum term and small gradient descent update after the reduction of learning rate.

\paragraph{Data Augmentation.}
We augment the Cifar-10 training data by enabling random horizontal flip, random horizontal shift and random vertical shift of the images. For the random shift of images, we allow a shift range of 10\% of the image width or height.

\paragraph{Hyper-parameter Selection for MaSS.}
Since the reduction of learning rate may affect the suggested value of $\delta$ (c.f. Eq.\ref{eq:optimalpara1}), 
we consider $(\eta, \alpha, \tilde{\kappa}_m)$ as independent hyper-parameters instead, where $\tilde{\kappa}_m = \eta/(\alpha\delta)$. Observing the fact that $\tilde{\kappa}_m \le \kappa_m$, we select $\tilde{\kappa}_m$ in the range $(1,1/\alpha)$.
In our experiments, we set $\alpha = 0.05$, corresponding to the ``momentum parameter" $\gamma$ being $0.90$, which is commonly used in Nesterov's method. We find that
good choices of 
$\tilde{\kappa}_m$ often locate in the interval $[2,25]$ for mini-batch size $m = 64$ in our experiments.

The classification results of MaSS in Table~\ref{tab:gp} are obtained under the following hyper-parameter settings:
\begin{itemize}
    \item {\bf CNN}:

    $\eta = 0.01$ (initial), \ $\alpha = 0.05, \ \tilde{\kappa}_m = 3$;\\
                $\eta = 0.3$ (initial), \ $\alpha = 0.05, \ \tilde{\kappa}_m = 6$.
    \item {\bf ResNet-32}: 
    
    $\eta = 0.1$ (initial), \ $\alpha = 0.05, \ \tilde{\kappa}_m = 2$;\\
    $\eta = 0.3$ (initial), \ $\alpha = 0.05, \ \tilde{\kappa}_m = 24$.
\end{itemize}

\end{document}